\newcommand{\tb}[1]{{\color{black} #1}}
\newcommand{\opt}{^{\star}}
\newcommand{\R}{\mathbb{R}}
\newcommand{\A}{\mathbb{A}}
\newcommand{\TT}{\mathbb{T}}
\newcommand{\N}{\mathbb{N}}
\newcommand{\X}{\mathbb{S}}
\newcommand{\M}{\mathcal{M}}
\newcommand{\T}{\mathcal{T}}
\newcommand{\E}{\mathbb{E}}
\newcommand{\LL}{\mathcal{L}}
\newcommand{\mcH}{\mathcal{H}}
\newcommand{\mcC}{\mathcal{C}}
\newcommand{\mcV}{\mathcal{V}}
\newcommand{\mcE}{\mathcal{E}}
\newcommand{\XX}{\mathcal{X}}
\newcommand{\optimize}{{\sf OPTIMIZE}}
\newcommand{\fit}{{\sf FIT}}
\newcommand{\updval}{{\sf UPDATEVALUE}}
\newcommand{\comp}{{\sf comp}}
\newcommand{\cart}{{\sf CART}}
\newcommand{\gosdt}{{\sf GOSDT}}
\theoremstyle{plain}
\begin{document}


\RUNAUTHOR{Grand-Cl{\'e}ment, Chan, Goyal and Chuang}

\RUNTITLE{Interpretable Triage Guidelines}

\TITLE{Interpretable Machine Learning for Resource Allocation with Application to Ventilator Triage}

\ARTICLEAUTHORS{%
\AUTHOR{Julien Grand-Cl\'ement}
\AFF{ISOM Department, HEC Paris \EMAIL{grand-clement@hec.fr}}
\AUTHOR{You Hui Goh}
\AFF{ISOM Department, HEC Paris \EMAIL{gohy@hec.fr}}
\AUTHOR{Carri W. Chan}
\AFF{Columbia Business School, Columbia University, \EMAIL{cwchan@columbia.edu}} 
\AUTHOR{Vineet Goyal}
\AFF{IEOR Department, Columbia University \EMAIL{vgoyal@ieor.columbia.edu}}
\AUTHOR{Elizabeth Chuang}
\AFF{Department of Family and Social Medicine,  Albert Einstein College of Medicine, Montefiore \EMAIL{echuang@montefiore.org}}
}

\ABSTRACT{%
Rationing of healthcare resources  is a challenging decision that policy makers and providers may be forced to make  during a pandemic, natural disaster, or mass casualty event. Well-defined guidelines to triage scarce life-saving resources must be designed to promote transparency, trust and consistency. To facilitate buy-in and use during high stress situations, these guidelines need to be interpretable and operational.  We propose a novel data-driven model to compute interpretable triage guidelines  based on policies for Markov Decision Process that can be represented as simple sequences of decision trees (\textit{tree policies}).  In particular, we characterize the properties of optimal tree policies and present an algorithm based on dynamic programming recursions to compute tree policies.  We utilize this methodology to obtain simple, novel triage guidelines for ventilator allocations for COVID-19 patients, based on real patient data from Montefiore hospitals. We also compare the performance of our guidelines to the official New York State guidelines that were developed in 2015 (well before the COVID-19 pandemic). Our empirical study shows that the number of excess deaths associated with ventilator shortages could be reduced significantly using our policy.  Our work highlights the limitations of the existing official triage guidelines, which need to be adapted specifically to COVID-19 before being successfully deployed.
}%

\KEYWORDS{Triage guidelines, SARS-CoV-2, New York State Guidelines, Interpretable Machine Learning,  Markov Decision Process}
 
\maketitle

\section{Introduction}
Healthcare delivery operates in a resource limited environment where demand can sometimes exceed supply, resulting in situations where providers have to make difficult decisions of who to prioritize to receive scarce resources. Having a framework to guide such decisions is critical, particularly with growing threats of pandemics, natural disasters, and mass casualty events that make the healthcare system vulnerable to situations where demand vastly exceeds supply of critical healthcare resources. Triage of health resources in such situations has garnered quite a bit of attention from the Operations Management (OM) community  (e.g. \cite{jacobson2012priority,mills2013resource,sun2018patient}). The primary focus of prior work has been to determine how care should be rationed. In this work, we develop a machine learning methodology that considers how data can be used to guide these decisions in an {\it interpretable} manner and what criteria should be considered in these life-or-death decisions.

Triage guidelines are often implemented during high stress, complex situations. As such, triage algorithms need to be systematic, simple and intuitive in order to facilitate adoption and to ease the decision burden on the provider. In 2012, the National Academy of Medicine identified an ethical framework for triage~\citep{gostin2012crisis}.  In this framework, the primary component which the operations community has focused on is `the duty to steward resources', which calls for withholding or withdrawing resources from patients who will not benefit from them. In this work, we take a data-driven approach to develop triage guidelines for ventilator allocation in order to maximize lives saved. Such decisions are ethically challenging and such an objective can sometimes be in tension with other critical elements such as fairness. We will also examine how the ethics and fairness criteria can be incorporated into such decisions.

On the one hand,  government officials have issued pre-specified and transparent utilitarian \textit{triage} guidelines for preventing loss of life, promote fairness, and support front-line clinicians~\citep{christian2014triage,zuckerventilator,piscitello2020variation}. In the United States, 26 states have scarce resource allocation guidelines ~\citep{babar2006direct}.  These guidelines emphasize simplicity,  and can often be represented as decision trees of small depth~\citep{breiman1984classification,bertsimas2017optimal}. However, these guidelines have never been used in practice,  and are not constructed in a \textit{data-driven} way, but rather via expert opinion of clinicians, policy makers, and ethicists. Therefore, it is not known how well they perform for the intended purpose of directing scarce resources to those most likely to benefit.  In addition, we cannot ethically perform a prospective study to determine the efficacy (or performance) of such policies.

On the other hand,  there is a large body of literature in the OM community on the management of healthcare resources, including breast magnetic resonance imaging~\citep{hwang2014patterns}, patient scheduling~\citep{bakker2017dynamic}, ICU beds~\citep{chan2012optimizing,kim2015icu},  mechanical ventilators and high-flow nasal cannula~\citep{gershengorn2021impact,anderson2023rationing}.  Because the health of each patient evolves dynamically over time,  the problem of scarce resource allocation is inherently \textit{sequential} in nature.  In theory,  one can leverage the tools from OM and Machine Learning (ML) to compute new allocation guidelines.   Markov Decision Processes (MDPs) and queueing theory are tools that are commonly used to find optimal sequences of decisions in a stochastic environment~\citep{Puterman,whitt2002stochastic}. With such methodologies,  optimal policies can often be numerically computed efficiently using iterative algorithms; however, the optimal policies may not have an interpretable structure. This is often referred to as \textit{black-box} policies in the ML community~\citep{rudin2019stop}.  To obtain practical operational guidelines,  it is necessary to obtain \textit{interpretable} decision rules that can easily be explained and implemented by the medical staff and discussed with the practitioners. This becomes even more important for the ethically challenging task of triaging scarce life-sustaining resources. In this case, what constitutes an appropriate input into the model may be contested. For instance, age is a strong predictor of outcomes for patients infected with Sars-CoV-2, but ethicists disagree about the appropriateness of using age in triage decisions~\citep{may2020age}. For triage algorithms to be used in practice, triage rules must be exposed for public debate, and therefore, interpretability is a key property.

In certain settings, simple index-based triage rules can be shown to perform well (e.g. , the $c\mu$-rule for multiclass queues with Poisson arrivals~\citep{van1995dynamic}. However, these works typically assume static patient health status and/or a limited number of patient classes (e.g. two classes in \cite{van1995dynamic}). The complexity of these problems arises from the capacity constraint which creates externalities across different patients. In contrast, we consider a setting where the health status of each patient evolves dynamically (and stochastically) overtime. This richer state-space for the patient state substantially increases the computational complexity of our model and renders existing approaches for triage intractable. As such, we develop a model that explicitly incorporates the dynamic patient health state and approximates the capacity constraint through appropriately calibrated reward parameters.   

Our goal in this paper is to develop allocation guidelines that are both \textit{data-driven} and \textit{interpretable}.  To do so, we propose a new model for interpretable sequential decision-making, based on decision trees.  We then apply this methodology to develop guidelines for ventilator allocation to COVID-19 patients. The COVID-19 pandemic has highlighted the challenge of managing life-saving health resources as demand for intensive care unit beds,  critical appliances such as mechanical ventilators, and therapies such as dialysis all were in short supply in many countries. Shortages of ventilators and oxygen  occurred in Italy and India, amongst other countries \citep{rosenbaum2020facing,kotnala2021clinical}. Given the ongoing risks of emerging infectious diseases~\citep{zumla2016infectious} and the projected increase in frequency of extreme weather events~\citep{woodward2018climate}, hospitals are increasingly  vulnerable to conditions that may result in periods of scarcity of life-sustaining resources even after the COVID-19 pandemic subsides.  We utilize the question of ventilator allocation for COVID-19 patients as a canonical example of how our methodology can be used to develop data-driven, interpretable triage guidelines for such settings.

Our main contributions can be summarized as follows: 
\begin{itemize}
\item \textit{Interpretable policies for MDPs.} We propose a framework to derive interpretable decision rules with good performances for sequential decision-making problems, based on decision trees. In particular, we model the evolution of the health of the patient as an MDP and we focus on \textit{tree policies}, which have a tree-like structure, and provide intuitive and explainable decision rules for MDPs.  
\item \textit{Properties of optimal tree policies.} By construction, the set of policies we consider is constrained to have a tree-like structure. Therefore,  the properties of optimal tree policies are in stark contrast with the classical properties for unconstrained MDPs. We show that optimal tree policies may depend on the initial condition, and may even be history-dependent. However, we show that an optimal tree policy can be chosen to be deterministic (though it may be history-dependent). \tb{For the sake of simplicity, we focus on Markovian tree policies, and we show that an optimal Markovian tree policy can still be chosen deterministic, which is appealing from an implementation standpoint. We highlight the challenges of computing optimal Markovian tree policies by connecting this problem to the minimum vertex cover, which is NP-hard. }
\item \textit{Algorithm for tree policies.} \tb{We present two different algorithms for computing tree policies. Since computing history-dependent policies is intractable, we focus on finding Markovian tree policies. Our first algorithm fits a tree to an optimal unconstrained policy. Our second algorithm performs dynamic programming recursions akin to Bellman recursions but forces the visited policies to be tree policies. The main bottleneck of each algorithm is the subroutine for computing decision trees.}
\item \textit{Application to mechanical ventilators allocations.} We apply our novel model and algorithm to compute interpretable triage guidelines for ventilator allocations for COVID-19 patients.
We leverage a data set of 807 COVID-19 patients intubated at an urban academic medical center in New York City and build an MDP model to obtain interpretable triage guidelines. We compare them to the official New York State (NYS) guidelines and First-Come-First-Served (FCFS) guidelines. 

We find that the NYS guidelines perform on par with the FCFS rules, highlighting the need to adjust official guidelines using data-driven methods. We also find that our tree policies can significantly decrease the number of deaths in the overall patient population. Compared to NYS guidelines, our novel triage policies are less aggressive and exclude fewer patients at triage, but more patients at reassessments.
We also provide a detailed discussion about the \textit{criteria} (SOFA score, comorbidities, demographics) included in allocation rules.  Surprisingly, when we include information like age,  BMI or diabetes, the resulting tree policies may not significantly outperform our simpler tree policy only based on SOFA. This reflects that categorical exclusions of patients may be too rigid in a dynamic crisis \citep{white2020framework}. This also shows that the SOFA score itself is a powerful predictor for patient's survival, and highlights the practical limitations of learning increasingly sophisticated triage rules with a limited amount of data. Additionally, the use of comorbidities and demographics in triage guidelines causes important moral concerns~\cite {auriemma2020eliminating},  and we show that this may not be justified on the grounds of a much higher number of lives saved. Triaging based on the NYS guidelines alone is inadequate for COVID-19 patients. Different disaster scenarios (e.g., COVID-19 pandemic vs. hurricane Sandy) potentially warrant different triage criteria, which have to be learned dynamically as more information on the disaster becomes available, and current state-wise SOFA-based policies may need to be revisited before being implemented and efficiently saving lives.
\end{itemize}

\paragraph{Outline.} The rest of this paper is organized as follows.  The end of this section is devoted to a brief literature review. In Section \ref{sec:model}, we introduce our Markov Decision Process (MDP) model.  In Section \ref{sec:interpretable}, we  introduce our tree policies for MDPs,  we present the properties of optimal tree policies, we study the complexity of finding an optimal tree policy and we present our algorithms to compute Markovian tree policies.
In Section \ref{sec:Monte-learning-simu-setup}, we present our numerical study (data set, MDP model, simulation model), applying our framework to mechanical ventilator allocations for COVID-19 patients.  We discuss our empirical results in Section \ref{sec:simu},  where we detail our comparison of various triage guidelines under different levels of ventilator shortages, as well as the limitations of our approaches.

\subsection{Related literature}
Our work primarily builds upon (i) triage guidelines for scarce healthcare resources, (ii) decision trees and MDPs for decision making in healthcare,  and (iii) recent advances to compute interpretable decisions in sequential decision-making.

\paragraph{Triage guidelines in the Operations Management literature.}
There is a large body of literature in the operations management (OM) and medical literature on designing efficient triage guidelines for allocations of scarce healthcare resources.  In a broad sense, the usual objective is of \textit{doing the greatest good for the greatest number}~\citep{frykberg2005triage}. For instance,  \cite{sacco2007new} propose to use linear programming to determine priorities among patients in the hospitals, and \cite{jacobson2012priority} rely on sample-path methods and stochastic programming for computing policies for assignments to key resources (such as ambulances and operating rooms) in the aftermath of mass-casualty events.  \cite{kim2015icu} estimate the impact of ICU admission denials on the outcomes of the patients.  Triage conditions can be tailored to various situations, including austere conditions and  imperfect information~\citep{argon2009priority,childers2009prioritizing,li2010approximate,sun2018patient} or specific health conditions, such as burn-injured patients~\citep{chan2013prioritizing}. Note that triage guidelines are also of interest in other areas of OM, e.g.  allocating customers to servers~\citep{dobson2011impact,alizamir2013diagnostic}.
Other works analyze directly the regimes where triage may or may not be beneficial~\citep{sun2022triage},  and the impact of available capacity on triage decisions~\citep{chen2020effects}. \tb{Perhaps most similar to our paper is the work in \cite{anderson2023rationing}. The authors also look at the NYS ventilator allocation guidelines and compare to two guidelines they developed which incorporate machine learning predictors (regularized logistic regressions) beyond just the Sequential Organ Failure Assessment (SOFA) score. We take a different methodological approach. While \cite{anderson2023rationing} focus on the {\em prediction} of survivability and length-of-ventilation, we focus on {\em allocation} and {\em interpretability}, developing a methodology to identify a tree-based policy to triage ventilators. We will provide an in-depth comparison with the methods in \cite{anderson2023rationing} in Appendix \ref{app:comparison ISP-LU}.}
More generally, the efficacy of the  triage and allocation guidelines are often demonstrated numerically and, in some instances,  some structural properties of optimal policies can be derived theoretically. \tb{Many of these works focus on index policies in order to facilitate implementation, see~\cite{mills2016simple} for application to mass-casualty triage and~\cite{bastani2021efficient} for applications to COVID-19 testing at the Greek border.} The indices are often derived from aggregate metrics which capture factors like survival risk, length-of-stay, and/or risk of mis-triage. However, such approaches can obscure clinical features resulting in lack of \textit{interpretability} and are not easily explainable to the medical staff. 

\paragraph{Decision trees and MDPs in healthcare.}
Tree-based models are popular in general classification tasks, and specifically in healthcare applications.  \tb{Indeed,  trees provide a decision model that is easily interpretable, which may be preferred in high-stake settings over more accurate methods that are hard to interpret and justify.}
Classical heuristics such as CART \citep{breiman1984classification} and C4.5 \citep{quinlan2014c4} scale well but often return suboptimal trees, while optimal solutions can be computed with mixed-integer linear programming~\citep{bertsimas2017optimal}.  In healthcare, decision trees have been used in a number of settings including diagnosing heart diseases \citep{shouman2011using},  developing novel non-linear stroke risk scores \citep{orfanoudaki2020machine}, and predicting survival probability of the patients~\citep{bertsimas2022optimal}.  As the scope of applications of decision trees to healthcare decision-making is very large, we refer the reader to the reviews in  \cite{tomar2013survey} and \cite{dhillon2019machine}.  Most of these applications are based on static \textit{predictions} (e.g., diagnosis) of the diseases and do not take into account the dynamic evolution of the health of the patient over time, \tb{with the notable exception of \cite{zhang2018interpretable}}. In contrast to most of the literature, we build upon the Markov Decision Process (MDP) literature to better model the impact of our decisions over time.

Markov Decision Processes (MDPs) provide a simple framework for the analysis of \textit{sequential} decision rules.  This is the reason why MDPs are widely used in many healthcare applications, where the evolution of the patient's health impacts the sequence of decisions chosen over time. The use of Markov models in healthcare can be traced back to \cite{beck1983markov}. In particular, MDPs have been used, among others, for kidney transplantation \citep{mdp-kidney}, HIV treatment recommendations \citep{mdp-HIV},  cardiovascular controls for patients with Type 2 diabetes \citep{mdp-steimle} and determining proactive transfer policies to the ICU\citep{grand2023robustness}.  We refer the reader to \cite{mdp-med-2,mdp-med-1} for reviews of applications of MDP to medical decision making.
\paragraph{Interpretable sequential decision-making.}
Decision trees are efficient where there is  a single decision epoch. This is the setting where there has been the most traction of interpretable machine learning in healthcare settings~\citep{khan2008predicting,amin2018performance,bertsimas2022optimal}. In contrast, MDPs are suitable for modeling sequential decisions but do not, a priori, incorporate interpretability constraints.  Recently, there has been some works toward developing methods for interpretable policies in sequential decision-making(not necessarily specific to the healthcare setting).  \cite{bravo2020mining} propose to  explain the optimal unconstrained policies with decision trees, applying their framework to classical operations problems such as queuing control and multi-armed bandit (MAB).  However, this may be misleading, as there is no guarantee that the novel explainable, suboptimal policies have the same performance as the unconstrained, optimal policies \citep{rudin2019stop}.  \tb{The recent framework of {\em interpretable stopping time}, as introduced in \cite{ciocan2022interpretable}, is probably the closest to our work. In particular, \cite{ciocan2022interpretable} introduce the notion of tree policies for stopping time problems and design an algorithm returning an interpretable stopping policy, given a dataset of past observations. While also motivated by the shortcoming that optimal policies may be hard to interpret, there are several crucial differences between the results in \cite{ciocan2022interpretable} and our approach. First, \cite{ciocan2022interpretable} consider a {\em model-free} approach, and optimize an objective function directly parametrized by the dataset of historical observations (the sample average approximation). In contrast, we consider a {\em model-based} approach, in the case where the decision maker already knows the transition kernel and the reward of the underlying stochastic process. This enables us to provide several important structural properties of optimal tree policies, which may be history-dependent but can always be chosen to be deterministic. From an algorithmic standpoint, our solution methods also differ from the ones in \cite{ciocan2022interpretable}. In particular, \cite{ciocan2022interpretable} modifies an algorithm for computing classification trees (CART, from~\cite{breiman1984classification}) to compute stopping time policies. In contrast, we modify an algorithm for computing MDP policies (value iteration) by exploiting an algorithm for classification trees as a subroutine at every iteration (see Algorithm \ref{alg:optimize then fit} and Algorithm \ref{alg:optimize and fit}), to ensure that value iteration only visits tree policies. Therefore, our algorithms find a new decision tree for the set of states at each period, whereas the algorithm in \cite{ciocan2022interpretable} finds a single decision tree across all periods, treating the period as a variable in the decision process.  Our overall algorithm is independent of the subroutine used to compute decision trees, and heuristics or optimal methods can be chosen, depending of the preference for fast computations or accuracy of the decision trees returned.
}

\section{Model of sequential decisions}\label{sec:model}
Our goal is to develop guidelines for determining which patients to allocate (or not) a healthcare  resource. We consider the dynamic evolution of patient health and how to utilize this information in making triage decisions. We consider a finite-horizon Markov Decision Process (MDP) to model the patient health evolution and we develop a methodology to compute policies, which admit simple \textit{tree} representations at every decision epoch.  In Section \ref{sec:Monte-learning-simu-setup},  we will examine the application of our methodology to ventilator allocations for COVID-19 patients.

We consider a fairly general setting of sequential resource allocation in healthcare. Patient health evolves stochastically over time.  We consider the allocation of a single type of healthcare resource (e.g., ICU beds, mechanical ventilator, specialized nurses,  medications) to different patients.  The nature of the allocation guideline is \textit{sequential}; the decision can be revisited multiple times and can depend on: 1) the current resource capacity,  2) the current health condition of the focal patient and its potential subsequent improvement and deterioration,  and 3) the current health conditions of all other patients and their potential subsequent improvements and deteriorations. Because of the capacity constraint, the decision of allocating a resource (as well as how much) to a single patient impacts the ability to treat other current and, possibly, future patients. While most prior works reduce the state-space by restricting to two classes of patients and/or ignoring the evolution of patient health, the richer model of patient health is an important component of our data-driven approach to triage.

\subsection{Evolution of Patient Health and Action Space}
We model the health condition of each patient according to a finite-state Markov Chain. We let $s_{t} \in \X_{t}$ denote the health state of a patient in  period $t$.  Each state represents the health condition which may include information such as vital signs and lab values, as well as comorbidities and demographics.  In our specific application to mechanical ventilator allocation we will consider multi-dimensional states, but our model allows for more general, arbitrary (finite) set of states.
 The vector $\bm{p}_{1}$ represents the likelihood to start in each state in $\X_{1}$. In contrast to the existing OM triage literature, we allow for an arbitrary (finite) number of patient health states that evolve stochastically depending on the action taken.

At each period $t$, let $\A_{t}$ denote the set of discrete possible actions. For example, one could consider a binary decision of whether to intubate or extubate a patient when considering  ventilator allocations. Alternatively, one could consider different levels of drug dosages -- $\{\sf 100mg, 200mg, 300mg\}$ -- for medication allocation. 
Once an action $a_{t}$ is chosen based on a policy, the patient transitions to the next state $s_{t+1} \in \X_{t+1}$ with probability $P_{s_{t}a_{t}s_{t+1}}$. 
 The transitions represent the health evolution of the patient toward future states. We assume that the transitions are \textit{Markovian} to keep the model tractable. Terminal states in $\X_{H}$ represent the status at discharge, typically  in the set $\{ {\sf deceased, alive} \}$. For conciseness, we assume that the set of states $\X_{1}, ..., \X_{H}$ attainable at period $t=1,...,H$, are disjoint. This allows us to consider transition kernels and rewards that do not depend of the current period.

\subsection{Capacity Constraint}
\tb{Explicitly modeling the complex health evolution over time of all the patients simultaneously, with a common capacity constraint on the shared resource, leads to a high-dimensional MDP formulation. Indeed, if we want to model the evolution of $N$ patients, each with $S$ possible health states, the number of possible states for the $N$ patients is $S^N$, which becomes intractable for a large number of patients.}
\tb{To solve this issue, a common assumption when considering the multi-patient allocation problem is that the dynamics of the patients are independent: they are only linked by the utilization of the common resource, e.g. there is a bound on the total number of patients using the resource at any time.} This can be seen as a {\em weakly coupled processes}~\citep{adelman2008relaxations}, and a Lagrangian relaxation of this multi-patients allocation problem can be obtained as a classical Markov Decision Process (MDP), describing the health evolution and resource allocation of a single patient, with adequate penalization of the instantaneous rewards \tb{(directly proportional to the Lagrange multipliers introduced in the relaxation). We refer to Section 2.3 in \cite{adelman2008relaxations} for a rigorous derivation of these results.} As such, to facilitate some tractability to derive interpretable policies, we consider the allocation for an individual patient and we {\em implicitly} incorporate the capacity constraint via appropriately defined reward parameters. \tb{In particular, in each period $t$, a time-dependent reward $r_{s_{t}a_{t}} \in \R$ reflects the reward obtained given that the patient health state is $s_t \in \X_{t}$ and the action taken is $a_t \in \A_{t}$. This single-patient allocation problem is a relaxation of the multi-patient allocation problem, with rewards that are decreasing in the amount of resources allocated to this single patient, so that the decision-maker favours allocation policies that also minimize the amount of resources used, on the top of   clinical metrics considered, like survival.} 

The reward parameters reflect the (potentially multiple) objective(s) of the decision maker and can  incorporate 1) patient risk of bad outcomes, 2) the benefits of using healthcare resources (both explicit as well as opportunity costs), and 3) potential risks associated with the action (e.g., complications).
The rewards can be decreased to deter the decision maker from using too many resources compared to a model without any capacity constraint.  

While our model ultimately focuses on a single patient with dynamic health state and accounts for the capacity constraint via the reward formulation, we note that the focus on a single patient is the standard approach in designing triage algorithms in the medical community. In particular, the majority of triage guidelines are myopic, and they are only executed when the capacity constraint is met. When there are any available resources, resources are allocated myopically (i.e., to patients that require them) without concern about the future potential of running out of capacity~\citep{zuckerventilator,white2020framework}.
\subsection{Objective function and decision rule}\label{sec:nominal-MDP}
The goal of the decision maker is to maximize the expected return $R(\pi)$ associated with a \textit{policy} $\pi=\left(\pi_{1},...,\pi_{H}\right)$, which is a sequence of \textit{decision rules} $\pi_{t}$ over a finite horizon $H$.
A (deterministic) decision rule $\pi_{t}$ maps a  \textit{history up to period t}, $h_{t}=(s_{1},a_{1},...,s_{t-1},a_{t-1},s_{t})$, to an action $\pi_t(h_t)$ in the set $\A_{t}$. The {\em expected cumulative rewards} $R(\pi)$, later referred to as the {\em return}, associated with a policy $\pi$ is calibrated to capture the balance between clinical objectives (e.g., optimizing the survival of patients) and the costs of using resources. It is defined as
\begin{equation}\label{eq:expected-return}
R(\pi) = \E^{\pi, \boldsymbol{P}} \left[ \sum_{t=1}^{H} r_{s_{t}a_{t}} \; \bigg| \; s_{1}  \sim \bm{p}_{1} \right].
\end{equation}
For a fixed Markovian deterministic policy $\pi$,  we can associate a sequence of \textit{value functions} $\left( \bm{v}_{t}^{\pi} \right)_{t \in [H]} \in \R^{\X_{1} \times .... \times \X_{H}}$, defined recursively as
\begin{equation}\label{eq:Bellman recursion - policy evaluation}
    \begin{aligned}
v^{\pi}_{H,s} & =  r_{s\pi_H(s)},  \forall \; s \in \X_{H},\\
v^{\pi}_{t,s} & = r_{s\pi_t(s)} +  \sum_{s' \in \X_{t+1}} P_{s\pi_t(s)s'} v^{\pi}_{t+1,s'},  \forall \; s \in \X_{t}, \forall \; t \in [H-1].
\end{aligned}
\end{equation}
For each period $t \in [H]$ and state $s \in \X_{t}$, $v^{\pi}_{t,s}$ represents the cumulative expected return for following the decision rules $(\pi_{t},...,\pi_{H})$ from period $t$ to period $H$, starting from state $s$:
\[v^{\pi}_{t,s} = \E^{\pi, \boldsymbol{P}} \left[ \sum_{t'=t}^{H}  r_{s_{t'}a_{t'}} \; \bigg| \; s_{t} = s \right].\]
From the definition of the return $R(\pi)$ as in \eqref{eq:expected-return}, we have $R(\pi) = \bm{p}_{1}^{\top}\bm{v}^{\pi}_{1}$.
Crucially, an optimal policy $\pi\opt$ which maximizes the expected return \eqref{eq:expected-return} can be chosen to be \textit{Markovian} ($\pi\opt_{t}$ only depends of the current state $s_{t}$ and not of the whole history $h_{t}$), and \textit{deterministic}. Additionally, $\pi\opt$ can be computed using the following \textit{value iteration} algorithm~\citep{Puterman}: the value functions $\left( \bm{v}_{t}\opt \right)_{t \in [T]} \in \R^{\X_{1} \times .... \times \X_{H}}$ of an optimal policy $\pi\opt$ follow the \textit{Bellman optimality equation} \eqref{eq:Bellman-recursion}:
\begin{equation}\label{eq:Bellman-recursion}
\begin{aligned}
v\opt_{H,s} & = \max_{a \in \A_{H}} r_{sa},  \forall \; s \in \X_{H},\\
v\opt_{t,s} & = \max_{a \in \A_{t}} r_{sa} + \sum_{s' \in \X_{t+1}} P_{sas'} v\opt_{t+1,s'},  \forall \; s \in \X_{t}, \forall t \in [H-1],
\end{aligned}
\end{equation}
and an optimal policy $\pi\opt$ can be chosen as the policy that maps each state $s \in \X_{t}$ to the action attaining the $\arg \max$ in the \textit{Bellman equations} \eqref{eq:Bellman-recursion}.
\section{Interpretable Policies}\label{sec:interpretable}
In a healthcare setting,  interpretability of the decisions is crucial to operationalize the guidelines and facilitate practical implementation of the policies,  generate buy-in from policy makers and providers, and mitigate obscuring of any potential ethical issues.
A priori,  the optimal policy for a classical, unconstrained MDP instance may not have any interpretable structure.
In this work, we use a model of interpretable decisions based on \textit{decision trees} \citep{breiman1984classification}.  Intuitively, the goal is to compute an efficient policy for the finite-horizon MDP problem, which can be succinctly represented as an interpretable decision tree at each decision period.

\subsection{Classification trees}\label{subsec:classification-trees}
We start with a brief introduction to decision trees. Decision trees are widely used in classification problems \citep{breiman1984classification,bertsimas2017optimal}.  We use the following definition of a decision tree and present two examples of decision trees in $\R^{3}$ in Figure \ref{fig:example-trees}.
\tb{
\begin{definition}[Decision tree and labeling rule]\label{def:decision-tree} 
A decision tree $T$ is a map $\R^{p} \rightarrow \{1, ..., K \}$, with $K \in \N$, which \textit{recursively} partitions $\R^{p}$ into $K$ disjoint sub-regions (called \textit{classes)}, using \textit{branch} nodes and \textit{leaf} nodes. Branch nodes rely on \textit{univariate} splits such as $``x_{1} \leq 2"$ or $``x_{3} \leq 8"$.  A data point follows the left branch if it satisfies the splitting condition, otherwise it follows the right branch. A node is a leaf if there is no split at this node. Each leaf defines a sub-region of $\R^{p}$, resulting from the sequence of splits leading to this leaf. Each sub-region is uniquely identified with a \textit{class} $c \in [K]$.
Given a finite set of labels $\LL$, a labeling rule $\mu$ is a map $\mu: [K] \rightarrow \LL$.
\end{definition}}

\begin{figure}
\center
 \begin{subfigure}{0.40\textwidth}
\centering
         \includegraphics[width=0.9\linewidth]{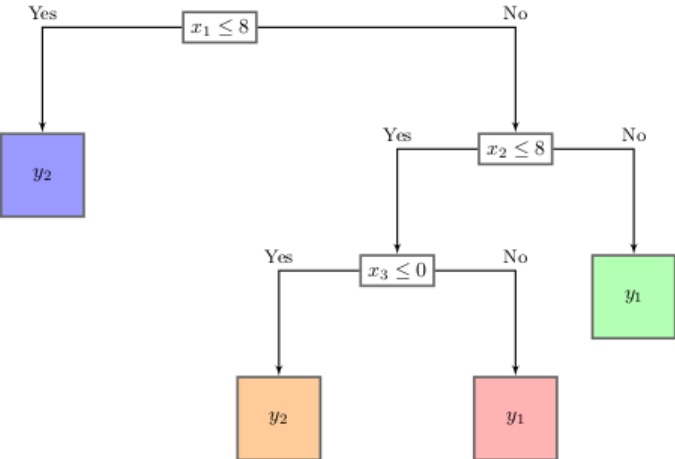}
         \caption{Decision tree and labeling rule.}
         \label{fig:tree-label-1}
  \end{subfigure}
   \begin{subfigure}{0.40\textwidth}
\centering
         \includegraphics[width=0.9\linewidth]{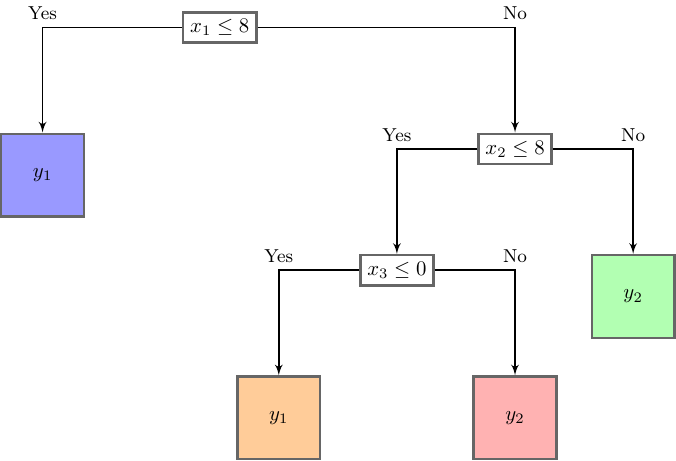}
         \caption{Same decision tree and another labeling rule.}
         \label{fig:tree-label-2}
  \end{subfigure}
  \caption{Example of two decision trees with three branching nodes, four classes (indicated by the four colors at the leaves), and two labels $y_{1}, y_{2}$. The labeling rules are different in Figure \ref{fig:tree-label-1} and in Figure \ref{fig:tree-label-2}.}
  \label{fig:example-trees}
\end{figure}
\tb{Note the difference between {\em classes} (subregions of $\R^{p}$ defined by a decision tree) and {\em labels}, which are assigned by a labeling rule. This distinction is important because different subregions of $\R^{p}$ may share the same label. 
 We write $\T(\XX,[K])$ for the set of decision trees defining $K$ sub-regions of the set $\XX \subset \R^{p}$, identified as functions $\XX \rightarrow [K]$, so that $T(\bm{x}) \in [K]$ is the class of an observation $\bm{x} \in \XX$, $\mu(c) \in \LL$ is the label assigned to a class $c \in [K]$, and $\mu \circ T: \XX \rightarrow \LL$ maps observations to labels. We also define the {\em depth} of a tree as the length of the longest sequence of edges between the root and a leaf node. We now describe the optimization problem associated with finding an optimal decision tree. 
  Let $\{\bm{x}_{1},...,\bm{x}_{m}\} \subset \R^{p}$ be a finite set of observations, $\LL$ be a finite set of labels, and $\left(\omega_{i,\ell}\right)_{i,\ell} \in \R^{m \times |\LL|}$ be some misclassification costs. Typically, each observation $\bm{x}_{i}$ is associated with a label $y_{i} \in \LL$ and $\omega_{i,\ell} = 0$ if and only if $\ell = y_{i}$, otherwise $\omega_{i,\ell}=1$.   We write $C_{\sf tree}(T,\mu)$ the expected \textit{classification error} of the decision tree $T$ and the labeling rule $\mu$:}
 \tb{
\begin{equation}\label{eq:classification-error-randomized}
C_{\sf tree}(T,\mu) = \frac{1}{m} \sum_{c=1}^{K} \sum_{i=1}^{m} \omega_{i,\mu(T(\bm{x}_i))}.
\end{equation}}
\tb{The Classification Tree \eqref{eq:cart} optimization problem is to compute a decision tree and a labeling rule which minimize the expected classification error: 
\begin{equation}\label{eq:cart}\tag{CT}
    \min_{T \in \T(\XX,[K])} \; \min_{\mu:[K] \rightarrow \LL} \; C_{\sf tree}(T,\mu).
\end{equation}}
Classical heuristics for solving \eqref{eq:cart} such as CART \citep{breiman1984classification} scale well but often return suboptimal trees. Optimal classification trees can be computed using reformulations as mixed-integer linear programming \citep{bertsimas2017optimal,mctavish2022fast}.
A certain number of \textit{interpretability constraints} can be incorporated in the set $\T(\XX,[K])$,  including upper bounding the \textit{depth} of the tree,  defined as the maximum number of splits leading to a sub-region. Smaller trees are easier to understand as they can be drawn entirely and prevent over-fitting~\citep{bertsimas2017optimal}.
\tb{
\begin{remark} We note that in Definition \ref{def:decision-tree}, we restrict the decision trees to use {\em univariate splits} at each branch node (such as $``x_{1} \leq 2"$ or $``x_{3} \leq 8"$) instead of {\em multivariate} splits (such as $``x_{1}+x_{3} \leq 10"$), since univariate splits may be easier to interpret for healthcare applications. All the results in this paper can be extended to the case of multivariate splits, and our algorithms for computing tree policies can also incorporate multivariate splits as long as the subroutine for fitting trees also supports computing decision trees with multivariate splits.
\end{remark}
}
\subsection{Tree policies for Markov Decision Processes}
Decision trees are a popular framework for finding interpretable classification rules, but they are not readily defined for {\em sequential} decision-making.  We develop an analogous notion of decision trees for MDPs, which we refer to as \textit{tree policies}.  Intuitively, a policy $\pi$ is called a \textit{tree policy} if at every period $t$,  the decision rule $\pi_{t}$ can be represented as a decision tree which assigns labels (actions, or treatments) from $\A_{t}$ to observations (states, or health conditions) from $\X_{t}$. In particular, we have the following definition. Recall that from our definition of a decision tree in Section \ref{subsec:classification-trees},  we see a tree $T$ as a map from the set of observations in $\R^{p}$ to a class in $\{1,...,K\}$, i.e., we have $T: \R^{p} \rightarrow \{1,...,K\}$.
\tb{
\begin{definition}\label{def:tree-policies}
Let
$\M=(H,\X,\A,\bm{P},\bm{r},\bm{p}_{1})$ be an MDP instance.
\begin{enumerate}
\item {\em Admissible trees.}
A sequence of decision trees $T = \left(T_{t} \right)_{t \in [H]}$ is {\em admissible} for $\M$ if $T_{1} \in \T(\X_{1},[K_{1}]), ..., T_{H} \in \T(\X_{H},[K_{H}])$.
We define $\TT$ the set of all sequence of decision trees admissible for the MDP instance $\M$:
\[\TT = \left\{ T = \left(T_{t} \right)_{t \in [H]} \; | \; T_{t} \in \T(\X_{t},[K_{t}]),  \forall \; t \in [H]\right\}.\]
\item {\em Policies compatible with a sequence of decision tree.} Let $T \in \TT$ be an admissible tree. A tuple $\pi= \left(\pi_{1},...,\pi_{H} \right)$ is a {\em policy compatible with the sequence of decision tree $T$} if for each $t \in [H]$, $\pi_{t}$ is a map $\mcH_{t} \times [K_{t}] \rightarrow \A_{t}$ with $\mcH_{t}$ the set of feasible histories up to time $t$ $\left(s_{0},a_{0},...,s_{t-1},a_{t-1}\right)$.
\item {\em Tree policies.} A \textit{tree policy} $\pi$ is a tuple $\pi= \left(\pi_{1},...,\pi_{H} \right)$ that is compatible with an admissible sequence of decision tree $T \in \TT$. We define $\Pi_{T}$ the class of policies that are compatible with a particular sequence of classification trees $T$.
\end{enumerate}
\end{definition}}
\tb{ From the above definition, we see that a tree policy actually consists of two parts: (i) an admissible sequence of decision tree $T \in \TT$, which provides structural constraints for the decisions at each period $t=1,...,H$ and that maps each state $s \in \X_{t}$ to a class $T_{t}(s) \in [K_{t}]$, where we recall  that $T_{t}(s)$ is the class returned by the tree $T_{t}$ for state $s$, and (ii) a policy $\pi$ that is compatible with $T$ and chooses some actions in $\A_{t}$ for each class in $[K_{t}]$ defined by the decision trees in $T$ at each period $t$. The policy $\pi$ plays the same role as the labeling rule from Definition \ref{def:decision-tree}, except that they can be history-dependent and randomized since we now consider solving a sequential problem, in contrast to \eqref{eq:cart}. In the case of a {\em Markovian} and {\em deterministic} policy, the tree policy $\pi=\left(\pi_{1},...,\pi_{H}\right)$ is simply represented as $\pi_{t}:[K_t] \rightarrow \A_{t}$ for $t \in [H]$, with $\pi_t(c) \in \A_{t}$ the action chosen for all states in class $c \in [K_t]$. We note that we slightly overload the notations here, as an {\em unconstrained} Markovian deterministic decision rule $\pi_t$ is a map $\X_{t} \rightarrow \A_{t}$ from the set of states to the set of actions (see the definition in Section \ref{sec:nominal-MDP}), while a Markovian deterministic tree policy $\pi_t$ at time $t$  is a map $[K_t] \rightarrow \A_{t}$ from the set of classes (defined by a tree over $\X_t$) to the set of actions. In the rest of the manuscript, we always make it clear when $\pi_t$ is an unconstrained decision rule or a tree policy. We also note that when the set of admissible trees $\TT$ contains trees that are sufficiently deep, the decision trees can simply enumerate the finite sets of states at each period, and any policy can be represented as a tree policy. However, in practice, one may impose additional constraints on the set of admissible trees $\TT$. Indeed, deep decision trees hinder interpretability, as the number of leaf nodes grows quickly with the depth of the trees. For this reason, it is typical to impose constraints on the set $\TT$, for instance an upper bound on the total number of split nodes or a lower bound on the number of data points that reach a split node. All these criteria can be incorporated in existing heuristics~\citep{breiman1984classification} and optimal algorithms~\citep{bertsimas2017optimal} for computing decision trees.} 

\tb{As an illustration, we present an example of two tree policies in Figure \ref{fig:tree-policy-deterministic} (deterministic tree policy) and Figure \ref{fig:tree-policy-randomized} (randomized tree policy). These two tree policies are compatible with the decision tree in Figure \ref{fig:tree-zero}. We also note that Definition \ref{def:tree-policies} introduce tree policies that are {\em deterministic}. This simple definition is sufficient because optimal tree policies can always be chosen deterministic, as we highlight in Proposition \ref{prop:properties-optimal-policy}.}

Our goal is to compute an optimal tree policy, i.e., our goal is to solve the following \textit{Optimal Tree Policy} (OTP) problem:
\begin{equation}\label{eq:otp}\tag{OTP}
\max_{T \in \TT} \max_{\pi \in \Pi_{T}} \; R(\pi)
\end{equation}
\begin{figure}
\center
 \begin{subfigure}{0.30\textwidth}
\centering
         \includegraphics[width=0.9\linewidth]{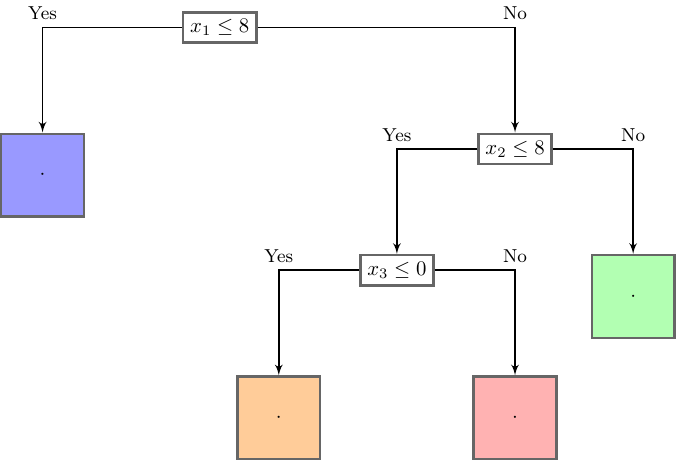}
         \caption{Decision tree}
         \label{fig:tree-zero}
  \end{subfigure}
 \begin{subfigure}{0.30\textwidth}
\centering
         \includegraphics[width=0.9\linewidth]{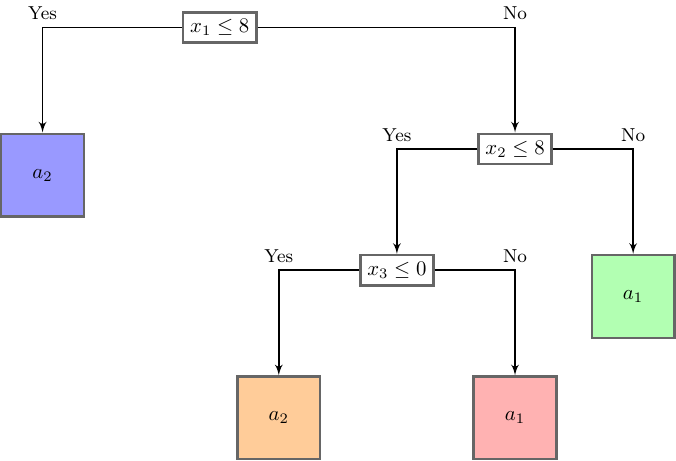}
         \caption{Deterministic decisions}
         \label{fig:tree-policy-deterministic}
  \end{subfigure}
   \begin{subfigure}{0.30\textwidth}
\centering
         \includegraphics[width=0.9\linewidth]{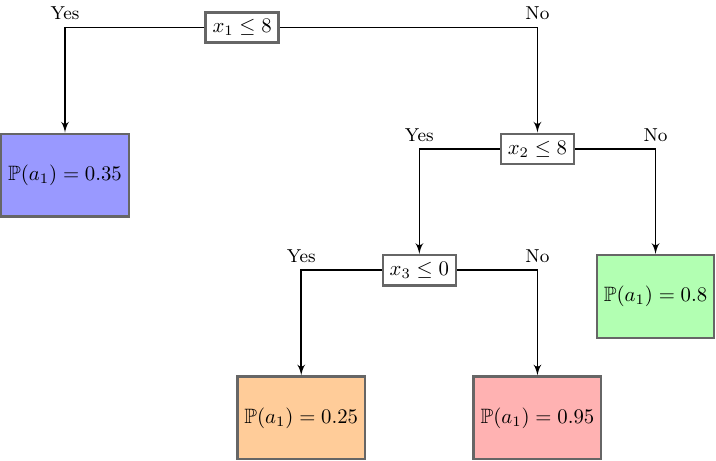}
         \caption{Randomized decisions}
         \label{fig:tree-policy-randomized}
  \end{subfigure}
 \caption{Example of a decision tree $T$ with four classes (Figure \ref{fig:tree-zero}), a deterministic tree policy in $\Pi_{T}$ (Figure \ref{fig:tree-policy-deterministic}) and a randomized tree policy in $\Pi_{T}$ (Figure \ref{fig:tree-policy-randomized}). }\label{fig:example-tree-policies}
\end{figure}
\subsection{Structural Properties of optimal tree policies}\label{sec:tree-policy-structural-properties}
\tb{Note that we have defined tree policies that are a priori history-dependent and randomized. An important subclass of tree policies is the class of {\em Markovian} tree policies, where the action chosen at time $t \in [H]$ only depends on the current class instead of the entire history up to time $t$. In the unconstrained MDP setting presented in Section \ref{sec:model}, an optimal policy may be chosen to be Markovian, deterministic, and independent of the initial distribution $\bm{p}_{1}$~\citep{Puterman}.  In the following proposition, we contrast these properties with the properties of optimal tree policies.}
\begin{proposition}\label{prop:properties-optimal-policy}
Consider an MDP instance $\M$, a set of sequence of trees $\TT$ admissible for $\M$ and an admissible sequence of trees $T \in \TT$.
\begin{enumerate}
\item The optimal tree policies \tb{compatible with $T$} may depend on the initial distribution $\bm{p}_{1}$.
\item The optimal tree policies \tb{compatible with $T$} may be history-dependent.
\item There always exists an optimal tree policy \tb{compatible with $T$} that is deterministic (even though it may be history-dependent).
\end{enumerate}
\end{proposition}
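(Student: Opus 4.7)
I would handle the three parts separately: Parts 1 and 2 are non-existence claims that I would establish via small explicit counterexamples, while Part 3 is an existence claim that I would prove by a backward-induction rounding argument exploiting linearity of the cost in the randomization weights within each leaf.

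For Part 1, I would take a one-period instance $(H = 1)$ with $\X_1 = \{s_1, s_2\}$, $\A_1 = \{a_1, a_2\}$, costs $c_{s_1, a_1} = 0$, $c_{s_1, a_2} = 1$, $c_{s_2, a_1} = 1$, $c_{s_2, a_2} = 0$, and let $T_1$ be the trivial tree with a single leaf containing both states. Any $\pi \in \Pi_{T_1}$ plays a common distribution $(\alpha, 1 - \alpha)$ over actions, yielding expected cost $\alpha\, p_{1, s_2} + (1 - \alpha)\, p_{1, s_1}$. The unique minimizer is $\alpha = 1$ when $p_{1, s_1} > p_{1, s_2}$ and $\alpha = 0$ when $p_{1, s_1} < p_{1, s_2}$, so every optimal tree policy depends on $\bm{p}_1$.

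For Part 2, I would extend to a two-period instance. Take $\X_1 = \{s_1, s_2\}$ sitting in a single leaf of $T_1$ (forcing a common distribution over $\A_1 = \{a_1, a_2\}$), zero period-1 costs, and crossed deterministic transitions $(s_1, a_1) \to t_1$, $(s_1, a_2) \to t_2$, $(s_2, a_1) \to t_2$, $(s_2, a_2) \to t_1$. Let $\X_2 = \{t_1, t_2\}$ sit in a single leaf of $T_2$, with period-2 costs $c_{t_1, b_1} = 0$, $c_{t_1, b_2} = 1$, $c_{t_2, b_1} = 1$, $c_{t_2, b_2} = 0$, and start from the uniform initial distribution. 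Any Markovian tree policy must play the same period-2 distribution at $t_1$ and $t_2$; combined with the fact that, under this construction, $t_1$ and $t_2$ are reached with equal probability $1/2$ regardless of the period-1 choice, this forces its expected cost to equal $1/2$. A history-dependent tree policy, on the other hand, can condition on $h_2 = (s_j, a_k)$, which pins down the current state, and play the matching period-2 action, attaining cost $0$. Hence every optimal tree policy must be history-dependent.

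For Part 3, I would fix $T \in \TT$, take an arbitrary (possibly randomized, history-dependent) optimal tree policy $\pi \in \Pi_T$, and modify it into a deterministic $\pi' \in \Pi_T$ with $C(\pi') \leq C(\pi)$ by backward induction on $t$ from $H$ down to $1$. At period $t$, for each pre-history $h_t$ and each leaf $\ell$ of $T_t$, tree feasibility forces a common distribution $\mu_\ell^{h_t} \in \Delta(\A_t)$ across all states in $\ell$, but allows different choices across different $(h_t, \ell)$ pairs. Holding the decisions at periods $t+1, \ldots, H$ fixed, the conditional expected cumulative cost from period $t$ onward given $h_t$ is a linear function of each $\mu_\ell^{h_t}$, since the continuation values do not depend on $\mu_\ell^{h_t}$; its minimum over $\Delta(\A_t)$ is therefore attained at a vertex. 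Replacing $\mu_\ell^{h_t}$ by a point mass on any minimizing action preserves tree feasibility and does not increase $C(\pi)$, and iterating downward yields the claimed deterministic tree policy. The main obstacle I anticipate is Part 3: one must keep track of the fact that, although the period-$t$ linearization decouples across histories, the optimal vertex choice may itself vary with $h_t$, which is exactly why the resulting deterministic policy is still allowed to be history-dependent and why the argument does not produce a Markovian policy in general.
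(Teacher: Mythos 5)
Your proposal is correct and follows essentially the same route as the paper: Parts 1 and 2 use the same style of small counterexamples (a single leaf forcing a common action distribution across states whose individually optimal actions differ, with the period-2 optimum depending on which period-1 state was visited), and Part 3 rests on the same key observation that, with the continuation decisions held fixed, the conditional cost is linear in each leaf's action distribution, so a vertex (i.e., deterministic) minimizer exists. The only cosmetic difference is in the bookkeeping for Part 3: you derandomize every period by backward induction, whereas the paper argues by contradiction starting from an optimal policy whose first randomized period is as late as possible --- both hinge on the identical linearity-plus-vertex argument.
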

We present a detailed proof in Appendix \ref{app:properties-opt-policy}. 
\tb{Proposition \ref{prop:properties-optimal-policy} shows that optimizing over tree policies radically alters the structure of optimal policies, compared to classical unconstrained MDPs. Let us give some intuition on the optimality of history-dependent policies. An optimal decision rule $\pi_{t}\opt$ at time $t \geq 2$ must choose the same action for all states in a given subregion of $\X_{t}$ defined by the decision tree $T_{t}$. But the best actions to choose may differ across states in the same class.  Therefore, $\pi_{t}\opt$ may depend on the \textit{distribution} over the set of states $\X_{t}$ visited at time $t$. This distribution depends on the previous decisions $\pi_{1}\opt,...,\pi_{t-1}\opt$, which implies that $\pi_{t}\opt$ is history-dependent. Because implementing a history-dependent policy may be hard in practice, for the sake of simplicity, we will focus on the subclass of {\em Markovian} tree policies in the next section.}
\tb{
\subsection{The case of Markovian tree policies}\label{sec:markovian-policies}}
\tb{
In this section, we focus on Markovian tree policies. Despite their potential suboptimality (see Proposition \ref{prop:properties-optimal-policy}), they are easier to operationalize than history-dependent policies, and in the unconstrained setting they correspond to optimal policies. We now study their properties and the complexity of computing an optimal Markovian tree policy. We first provide the following definition.
\begin{definition}[Markovian tree policies]\label{def:tree-policies-markovian}
Consider an MDP instance $\M$ and a set of sequence of trees $\TT$ admissible for $\M$. Let $T \in \TT $ be an admissible sequence of decision trees: $T=T_{1} \in \T(\X_{1},[K_{1}]), ..., T_{H} \in \T(\X_{H},[K_{H}])$. A tree policy $\pi = \left(\pi_{1},...,\pi_{H}\right)$ compatible with $T$ is called {\em Markovian} if for all $t \in [H]$, the decision rule $\pi_{t}$ only depends of the current class in $[K_{t}]$ and not on the history of state-action $\mcH_{t}$ up to time $t$. 
\end{definition}
We write $\Pi_{T,{\sf M}}$ for the class of Markovian tree policies.
In the next proposition, we show that an optimal Markovian tree policy can be chosen deterministic. This is important for deploying policies in practice, especially in a healthcare setting. We present a detailed proof in Appendix \ref{app:properties-opt-policy}.
\begin{proposition}\label{prop:markovian-tree-policies-deterministic}
Consider an MDP instance $\M$, a set of sequence of trees $\TT$ admissible for $\M$ and a feasible sequence of trees $T \in \TT$. An optimal Markovian tree policy for $T$ can  be chosen deterministic.
\end{proposition}}
We now consider the complexity of computing an optimal Markovian policies. We start with the following simple setting.
\paragraph{The case $H=1$: relation with classification trees.}
With a horizon of $H=1$,  a tree policy is simply a decision rule that maps each state $s \in \X_{1}$ to a class $c \in [K]$ using a decision tree, then maps each class to an action $a \in \A_{1}$. Note that all tree policies are Markovian when $H=1$. We can identify the finite set of states $\X_{1}$ with the finite set of observations $\{\bm{x}_{1},...,\bm{x}_{m}\}$ and the finite set of actions $\A_{1}$ with the finite set of labels $\LL$. This shows that instances of \eqref{eq:otp} (with $H=1$) and instances of \eqref{eq:cart} are equivalent. The instantaneous rewards for the MDP instance play the role of the misclassification costs in the definition of the classification error \eqref{eq:classification-error-randomized}.  We provide a formal proof of the next proposition in Appendix \ref{app:proof-markovian}.
\begin{proposition}\label{th:equivalence-cart-otp}
 Any instance of \eqref{eq:otp} with $H=1$ can be reduced to an instance of \eqref{eq:cart} \tb{with the same set of optimal solutions}. Additionally, any instance of \eqref{eq:cart} can be reduced to an instance of  \eqref{eq:otp} with $H=1$ \tb{with the same set of optimal solutions}.
\end{proposition}
\tb{To the best of our knowledge, the complexity of \eqref{eq:cart} remains unsettled. However, the only algorithms for solving \eqref{eq:cart} are either heuristics~\citep{breiman1984classification} or based on mixed-integer linear programming~\citep{bertsimas2017optimal}}\footnote{\tb{Some NP-hardness results exist for the related problem of minimizing the depth of a classification tree that achieves perfect classification~\citep{laurent1976constructing}.}}\tb{, which suggests that \eqref{eq:cart} may be a difficult optimization problem.}
\paragraph{The case $H \geq 2$.}
\tb{When $H \geq 2$, we show in the next proposition that it is NP-hard to compute an optimal Markovian tree policy compatible with a given sequence of decision trees $T$. 
\begin{proposition}\label{prop:tree-policy-complexity}
Let $\M=(H,\X,\A,\bm{P},\bm{r},\bm{p}_{1})$ be an MDP instance with $H \geq 2$ and $\TT$ be a set of admissible trees. Let $T \in \TT$. Then it is NP-hard to compute an optimal Markovian tree policy compatible with $T$, i.e., the following optimization problem is NP-hard:
\begin{equation}\label{eq:otp-markov}
\max_{\pi \in \Pi_{T,{\sf M}}} R(\pi).
\end{equation}
\end{proposition}
\tb{
The proof of Proposition \ref{prop:tree-policy-complexity} is presented in Appendix \ref{app:proof-markovian}. Let us provide some intuition on the proof, where we reduce an instance of the minimum vertex cover (e.g., section 7.5 in \cite{sipser1996introduction}) to a particular instance of \eqref{eq:otp-markov}. Given a finite set of vertices $\mcV$ and a set of edges $\mcE \subseteq \mcV \times \mcV$, the Minimum Vertex Cover ({\sf MVC}) problem is to find a subset $\mcC \subseteq \mcV$ with the smallest cardinality, such that every edge in $\mcE$ has an endpoint in $\mcC$. Such an edge is said to be {\em covered} by a {\em selected} vertex in $\mcC$. We define a specific instance of \eqref{eq:otp-markov} that solves {\sf MVC}. We choose $H=3$, the set of states describes the set of vertices and possible endpoints, and the actions are simply {\sf select} and {\sf not select}. The constraints imposed by $T$ ensure that selected vertices exactly correspond to covered edges. We need $H \geq 2$ because edges have two endpoints, and therefore they may be selected via their first endpoint at $t=1$ or via their second endpoint at $t=2$; the MDP terminates at $t=3$. 
\begin{remark}
Our proof for Proposition \ref{prop:tree-policy-complexity} is close to the proof of Proposition 1 in \cite{ciocan2022interpretable} for the NP-hardness of computing an optimal stopping time compatible with a given sequence of decision trees. The main difference lies in the objective function which in \cite{ciocan2022interpretable} is different from the optimization problem \eqref{eq:otp-markov}, since \cite{ciocan2022interpretable} directly optimize the sample average approximation across all observations whereas we build the return $R(\pi)$ from the MDP parameters (rewards $\bm{r}$ and transitions $\bm{P}$) as in Equation \eqref{eq:expected-return}.
\end{remark}
}
}

\paragraph{Intuition on our NP-hardness result: the role of rectangularity.}
We finish this section by discussing the fundamental role of \textit{rectangularity} in solving classical unconstrained MDPs and contrasting it with the decision tree structure introduced in this paper. The rectangularity assumption is a common assumption in the robust optimization and robust MDP literature \citep{Iyengar,BBC08,Kuhn,goyal2023robust}. In the unconstrained MDP setting of Section \ref{sec:nominal-MDP}, the rectangularity assumption states that at each time $t$, the decisions $\pi_{t,s}$ for $s \in \X_{t}$ can be chosen independently across $s \in \X_{t}$. In particular, the choice of the decision maker in one state does not influence the choice of the decision maker at another state (at the same period).  In this case, the maximization problems defining the inductive updates \eqref{eq:Bellman-recursion} on the value function $\bm{v}\opt$ of the optimal policy $\pi\opt$ can be solved independently across $s \in \X_{t}$ for every period $t \in [H]$.  Crucially, this also implies the component-wise inequality $\bm{v}\opt_{t} \geq \bm{v}^{\pi}_{t}$ at any period $t$ for any Markovian policy $\pi$, since $\pi\opt$ is solving the maximization programs \eqref{eq:Bellman-recursion} at each state independently. From the definition of tree policies (Definition \ref{def:tree-policies}), it is clear that {\it the rectangularity assumption is  not satisfied for tree policies}, because if $\pi$ is a tree policy compatible with a given tree $T$, then $\pi$ must choose the same action for all the states in a given subregion of the state space (defined by a leaf of $T$). In this case, the equation \eqref{eq:Bellman-recursion} may not hold for an optimal tree policy. In particular, if there are some constraints across the decisions chosen at different states,  a policy attaining the $\arg \max$ in \eqref{eq:Bellman-recursion} may not even be feasible, i.e., it may not be compatible with any admissible sequence of decision trees.

\tb{
\begin{remark}[The case of stationary policies.]
Our main results in this section pertain to the complexity of computing optimal {\em Markovian} tree policies. For the sake of interpretability, in the case where $\X_{1}= \X_{2} = ... = \X_{H}$ and $\A_{1}=...=\A_{H}$, i.e., when the sets of states and actions are the same over time, it may be interesting to also consider stationary tree policies, i.e., tree policies that remain constant over the entire horizon. Unfortunately, even without any constraints on the sets of policies, computing optimal {\em stationary} policies for finite-horizon MDPs may be NP-complete~\citep{littman1994memoryless}. For this reason, in this paper, we leave aside the question of computing stationary tree policies and we focus on Markovian tree policies.
\end{remark}

}

\subsection{Our algorithms to compute Markovian tree policies}
\tb{
In this section, we introduce two iterative algorithms to compute Markovian tree policies. Recall that computing an {\em optimal} Markovian tree policy is NP-hard (see  Proposition \ref{prop:tree-policy-complexity}). In this section, we describe two heuristics. Both of our algorithms use the following subroutines:
\begin{itemize}
    \item $\optimize$ returns the best unconstrained decision rule that can be chosen at a given period, {\em conditioned on a continuation value}. In particular, for $t \in [H-1]$ and $\bm{v} \in \R^{\X_{t+1}}$, we define $\optimize(t,\bm{v})$ as $ \pi_{t}:\X_{t} \rightarrow \A_{t}$ such that
    \[\pi_{t}(s) \in \arg \max_{a \in \A_{t}} r_{s_{t}a_{t}} + \sum_{s' \in \X_{t+1}} P_{sas'}v_{s'}, \forall \; s \in \X_{t}.\]
    Note if $\bm{v} \in \R^{\X_{t+1}}$ is the value function of the optimal unconstrained policy, then $\optimize(t,\bm{v}_{t})$ returns an optimal unconstrained decision rule at period $t$. Otherwise, $\optimize(t,\bm{v}_{t})$ only returns the decision rule that maximizes the total return between time $t$ to time $H$, given that the value derived from time $t+1$ to time $H$ is given by $\bm{v} \in \R^{\X_{t+1}}$.
    Note that $H$ is the last period of the decision process, and therefore, continuation values are irrelevant at $t=H$. Therefore we simply define $\optimize(H,\bm{v})$ as $ \pi_{H}:\X_{H} \rightarrow \A_{H}$ such that 
    \[\pi_{H}(s) \in \arg \max_{a \in \A_{H}} r_{s_{H}a_{H}}, \forall \; s \in \X_{H}.\]
    \item $\updval$ returns the value function at period $t$, given a period $t$, a (deterministic) decision rule at period $t$ and a continuation value at period $t+1$. In particular, we define $\updval(t, \pi,\bm{v}) \in \R^{\X_{t}}$ for $t \in [H-1]$, $\pi_t:\X_{t} \rightarrow \A_{t}$ and $\bm{v} \in \R^{\X_{t+1}}$, as
    \[\updval(t, \pi_t,\bm{v})_{s} = r_{s\pi_t(s)} +  \sum_{s' \in \X_{t+1}} P_{s\pi_t(s)s'} v_{s'} ,  \forall \; s \in \X_{t}.\]
    For $t = H$ we simply define
    \[\updval(H, \pi_H,\bm{v})_{s} = r_{s\pi_H(s)},  \forall \; s \in \X_{H}.\]
    \item $\fit(\pi,\T)$ returns a decision tree and a labeling rule given a deterministic decision rule $\pi$ and a set of admissible trees $\T$, based on \eqref{eq:cart}. In particular, given a period $t \in [H]$, an unconstrained decision rule $\pi_t:\X_{t} \rightarrow \A_{t}$ can be interpreted as a set of points $\{(s,\pi_t(s)) \; | \; s \in \X_{t}\}$, where the state $s \in \X_{t}$ represents the observation (covariates) while $\pi_t(s)$ represents the label of state $s$. The function $\fit(\pi,\T(\X_t,[K_t])$ builds a decision tree $T_t$ classifying states in $\X_{t}$ into classes in $[K_t]$ and a labeling rule $\mu_t$ mapping classes in $[K_t]$ to actions in $\A_t$, with the labels $\pi_t(s)$ as the true label for state $s \in \X_{t}$. We then interpret $(\mu_1,...,\mu_H)$ as a tree policy compatible with the sequence of trees $(T_{1},...,T_{H})$. Since $T_t$ is a map $\X_t \rightarrow [K_t]$ and $\mu_t$ a map $[K_t] \rightarrow \A_t$, we interpret $\mu_t \circ T_t$ as a decision rule $\X_t \rightarrow \A_t$. For the sake of conciseness, the choice of hyperparameters (e.g., maximum depth of the admissible trees) is considered to be described in the set of admissible trees $\T$.
\end{itemize}
}
\tb{
With these notations, we are now ready to describe our algorithms.
Intuitively, our first algorithm (Algorithm \ref{alg:optimize then fit}) solves the (unconstrained) MDP by computing value functions with backward induction as in \eqref{eq:Bellman-recursion}, and then fits trees to an optimal unconstrained policy. Our second algorithm (Algorithm \ref{alg:optimize and fit}) also proceeds inductively, but value functions are computed using {\em interpretable} decision rules (i.e., decision trees) instead of optimal decision rules as in Algorithm \ref{alg:optimize then fit}. Both algorithms return a sequence of decision trees $(T_{1},...,T_{H})$ and labeling rules $(\mu_1,...,\mu_H)$ used to choose actions at periods $1,...,H$. We emphasize that the only difference between Algorithm \ref{alg:optimize then fit} and Algorithm \ref{alg:optimize and fit} is in Step 5 of both algorithms.
We start by analyzing our algorithms in the following proposition.
\begin{proposition}\label{prop:analysis of our algorithms}
    \begin{enumerate}
        \item Let $\left(\pi_t\right)_{t \in [H]}$ and $\left(\bm{v}_t\right)_{t \in [H]}$ be computed by Algorithm \ref{alg:optimize then fit}. For every period $t \in [H]$, the following holds:
        \begin{itemize}
            \item $\pi_t$ is the best unconstrained decision rule that can be chosen at period $t$, given that the decision-maker subsequently chooses $\pi_{t+1},...,\pi_{H}$.
            \item $\bm{v}_{t} \in \R^{\X_{t}}$ is the value function for choosing $\pi_t,...,\pi_{H}$ at periods $t,...,H$.
        \end{itemize}
     Additionally, $(\pi_{1},...,\pi_{H})$ is an optimal {\em unconstrained} policy.
        \item Let $\left(\pi_t\right)_{t \in [H]}$ and $\left(\bm{v}_t\right)_{t \in [H]}$ be computed by Algorithm \ref{alg:optimize and fit}. For every period $t \in [H]$ the following holds:
        \begin{itemize}
            \item $\pi_t$ is the best unconstrained decision rule that can be chosen at period $t$, given that the decision-maker subsequently chooses $\mu_{t+1} \circ T_{t+1},...,\mu_{H} \circ T_{H}$.
            \item $\bm{v}_{t} \in \R^{\X_{t}}$ is the value function for choosing $\mu_{t} \circ T_{t},...,\mu_{H} \circ T_{H}$ at periods $t,...,H$.
        \end{itemize}
    \end{enumerate}
\end{proposition}
Proposition \ref{prop:analysis of our algorithms} is a consequence of Bellman equations as in \eqref{eq:Bellman recursion - policy evaluation} and \eqref{eq:Bellman-recursion}, and it highlights a stark contrast between Algorithm \ref{alg:optimize then fit} and Algorithm \ref{alg:optimize and fit}. In Algorithm \ref{alg:optimize then fit}, $(\pi_1,...,\pi_H)$ is an optimal unconstrained policy and Algorithm \ref{alg:optimize then fit} computes the tree $T_{t}$ and labeling rule $\mu_t$ by fitting a tree to $\pi_t$. One potential shortcoming of this is that for any $t \in \{1,...,H-1\}$,  $\pi_{t}$ is the optimal choice of decision rule for period $t$ \textit{only} if the decision maker chooses $\pi_{t+1}, ..., \pi_{H}$ subsequently, i.e., only if the subsequent decision rules are also optimal. This follows from the induction in the Bellman optimality equation \eqref{eq:Bellman-recursion}. However, the decision-maker is interested in deploying the interpretable decisions $\mu_1 \circ T_1,...,\mu_H \circ T_H$, which may be different from $\pi_{t},...,\pi_{H}$. Therefore, in Algorithm \ref{alg:optimize then fit},  $\pi_{t}$ may not be optimal at period $t$, because the subsequent decision rules chosen by the decision-maker are $\mu_{t+1} \circ T_{t+1},...,\mu_{H} \circ T_{H}$ and not $\pi_{t+1},...,\pi_{H}$, and it may be irrelevant to fit a tree to $\pi_{t}$. 

To remedy this potential shortcoming,  Algorithm \ref{alg:optimize and fit} incorporates knowledge about the subsequent interpretable decision rules $\mu_{t+1} \circ T_{t+1},...,\mu_{H} \circ T_{H}$ (already computed) when returning the unconstrained decision rule $\pi_{t}$ for period $t$.
In particular, Algorithm \ref{alg:optimize and fit} updates the value functions $\bm{v}_{t}$ by accounting for the decision tree $T_{t}$ and labeling rule $\mu_t$ that are computed for period $t$ instead of the unconstrained decision rule $\pi_t$. This ensures that at every period $t$, $\bm{v}_{t} \in \R^{\X_{t}}$ is the continuation value associated with choosing $\mu_{t+1} \circ T_{t+1},...,\mu_{H} \circ T_{H}$ (already computed) for the next periods, $\pi_t$ is the best unconstrained decision rule for period $t$ {\em given the value function $\bm{v}_{t+1}$}, and $T_{t}$ and $\mu_t$ are computed by fitting a tree to $\pi_t$. 
}

\tb{
\begin{figure}[htb]
\begin{minipage}[b]{.5\textwidth}
\begin{algorithm}[H]
\caption{}\label{alg:optimize then fit}
\begin{algorithmic}[1]
\State {\bf Initialization:} Set $\bm{v}_{H+1} = \bm{0}.$
\For{$ t=H,...,1$}
\State 
\label{alg:opt-then-fit-step:compute optimal decision rule} $\pi_t = \optimize(t,\bm{v}_{t+1})$
\State \label{alg:opt-then-fit-step:fit a tree} $(T_{t},\mu_t) = \fit(\pi_t,\T(\X_{t},[K_{t}]))$
\State \label{alg:opt-then-fit-step:value-update}  $\bm{v}_{t} = \updval\left(t,\pi_t,\bm{v}_{t+1}\right)$
\EndFor
\State {\bf Output:} $\left(T_{1},...,T_{H}\right),(\mu_1,...,\mu_H)$.
\end{algorithmic}
\end{algorithm}
\end{minipage}
\begin{minipage}[b]{.5\textwidth}
 \begin{algorithm}[H]
\caption{}\label{alg:optimize and fit}
\begin{algorithmic}[1]
\State {\bf Initialization:} Set $\bm{v}_{H+1} = \bm{0}.$
\For{$ t=H,...,1$}
\State 
\label{alg:opt-and-fit-step:compute optimal decision rule} $\pi_t = \optimize(t,\bm{v}_{t+1})$
\State \label{alg:opt-and-fit-step:fit a tree} $(T_{t},\mu_t) = \fit(\pi_t, \T(\X_{t},[K_{t}]))$
\State \label{alg:opt-and-fit-step:value-update}  $\bm{v}_{t} = \updval\left(t,\mu_t \circ T_t,\bm{v}_{t+1}\right)$
\EndFor
\State {\bf Output:} $\left(T_{1},...,T_{H}\right),(\mu_1,...,\mu_H)$.
\end{algorithmic}
\end{algorithm}
\end{minipage}
\end{figure}
}

Several remarks are in order.

\tb{
First, we note that both of our algorithms have the same numerical complexity since they require $H$ calls to the subroutines $\optimize,\fit$, and $\updval$. In particular, we have the following proposition.
\begin{proposition}\label{prop:alg complexity}
    Let $\comp$ be the maximum number of arithmetic operations required for evaluating $\fit$ in Step \ref{alg:opt-then-fit-step:fit a tree} of Algorithm \ref{alg:optimize then fit} and Step \ref{alg:opt-and-fit-step:fit a tree} of Algorithm \ref{alg:optimize and fit}. Let $A = \max_{t \in [H]} |\A_{t}|$ and $S = \max_{t \in [H]} |\X_{t}|$. Then both Algorithm \ref{alg:optimize then fit} and Algorithm \ref{alg:optimize and fit} terminate in a number of arithmetic operations that is $O\left( H \cdot \left(S^{2}A + \comp\right)\right)$.
\end{proposition}
Note that computing an optimal unconstrained policy can be done in $O(HS^{2}A)$ arithmetic operations via the Bellman equations~\eqref{eq:Bellman-recursion}. Compared to this, our algorithms require an additional $O(H \cdot \comp)$ number of arithmetic operations to fit trees at every period $t \in [H]$ with the subroutine $\fit$. We would like to emphasize that calling $\fit$ at every period is the main bottleneck of our algorithms. Indeed, this can prove computationally intensive, because the only {\em exact} methods to compute $\fit$ are based on mixed-integer linear programming~\citep{bertsimas2017optimal}. In our simulations in the next section, we will use the fast Python implementation of \cite{mctavish2022fast} for computing optimal trees. We note that for large instances where computing optimal trees may be too long, some fast heuristics are also available, e.g. the {\sf scikitlearn} Python package includes an improved implementation of CART~\citep{breiman1984classification}.
}

\tb{
Second, we would like to note that Algorithm \ref{alg:optimize then fit} and Algorithm \ref{alg:optimize and fit} are both heuristics.  Because an optimal tree policy may be history-dependent (see Proposition \ref{prop:properties-optimal-policy}), it appears difficult to provide theoretical guarantees on the performances of the policies returned by our algorithms. For this reason, we investigate the question of the potential suboptimality of the tree policies returned by Algorithm \ref{alg:optimize then fit} and Algorithm \ref{alg:optimize and fit} numerically in the next sections. In particular, we will see in our numerical study that our algorithms compute tree policies that can outperform state-of-the-art allocation guidelines, in the case of ventilator triage for COVID-19 patients, and  can perform as well as optimal unconstrained policies.
}

\tb{
Third, recall that we have shown in Section \ref{sec:tree-policy-structural-properties} that optimal tree policies at time $t$ depend on the {\em distribution} over $\X_{t}$ induced by the decisions at time $1,...,t-1$. In particular, in Algorithm \ref{alg:optimize and fit}, the pair $(T_{t},\mu_t)$ is computed by fitting a tree to $\pi_{t}$, which maps each state $s_{t} \in \X_{t}$ to a decision $a_{t} \in \A_{t}$. However, if $s_{t}$ is visited very rarely (given the previous decisions), including it in the $\fit$ at period $t$ will entice the tree $T_{t}$ to mimic $\pi_t$ even at a state that is not visited often (state $s_{t}$). If at period $t \in [H]$ of Algorithm \ref{alg:optimize and fit} we knew the {\em distributions} over $\X_{t}$ induced by the previous decisions $\mu_{1}\circ T_{1},...,\mu_{t-1}\circ T_{t-1}$, at period $t$ we could decide to remove from $\fit$ all the states that are not visited often (e.g. less than $1 \%$ of the time). The main issue with this reasoning is that at period $t$ in Algorithm \ref{alg:optimize and fit}, the trees $T_{1},...,T_{t-1}$ and labeling rules $\mu_1,...,\mu_{t-1}$ have not been computed yet. To improve upon this issue, it is possible to construct a ``looping" variant of Algorithm \ref{alg:optimize and fit}, by running Algorithm \ref{alg:optimize and fit} multiple times, and using the distributions induced by the previous output of Algorithm \ref{alg:optimize and fit} to compute the next tree policies. Our numerical experiments to implement this idea have not shown substantial improvements in the performance of the obtained tree policies. We leave this question as an interesting future direction.
}

\tb{Finally, we note that for the sake of interpretability, in practice we may favor tree policies that vary slowly over time. We note that when implementing our model of tree policies on real decision problems (as we do in the next two sections), it is possible to increase the interval of time between consecutive time steps $t$ and $t+1$ so that the tree policies do not vary too fast. Incorporating this constraint explicitly in the current model appears difficult and we leave this question open for future work. }
\section{Mechanical Ventilator Triage for COVID-19 Patients}\label{sec:Monte-learning-simu-setup}
In this section, we apply the methodology developed in Section \ref{sec:interpretable} to develop interpretable triage guidelines for allocating ventilators to COVID-19 patients and compare them to existing triage guidelines.

\subsection{Current triage guidelines}
New York State (NYS) policy follows the 2015 Ventilator Triage Guidelines which were recommended by the NYS Taskforce on Life and the Law~\citep{zuckerventilator}. These guidelines were designed to ration critical care services and staff following a disaster (e.g., a Nor'easter, a hurricane, or an influenza epidemic). In particular, the guidelines outline clinical criteria for triage  of patients using the Sequential Organ Failure Assessment (SOFA) score,  a severity score that has been shown to correlate highly with mortality in COVID-19~\citep{zhou2020clinical}.

The goal of the NYS guidelines is to maximize the number of lives saved.
There are three decision epochs: at  \textit{triage}, the first time that a patient requires a ventilator; and thereafter, at two \textit{reassessment periods} after 2 and 5 days of intubation. The NYS triage guidelines define \textit{priority classes} -- low, medium, and high -- based on the current SOFA score of the patient. At the reassessment periods, the classification also depends on the improvement/deterioration of the SOFA score since the last decision epoch.  A patient with \textit{low} priority class will either be excluded from ventilator treatment (at triage) or removed from the ventilator (at reassessment). These patients typically either can be safely extubated, or have low survival chance (e.g., SOFA score $>$ 11).  Patients with \textit{medium} priority class are intubated and maintained on a ventilator, unless a patient with \textit{high} priority class also requires intubation, in which case they are extubated and provided with palliative care.
 The guidelines at triage and reassessments admit simple tree representations (even though they were originally presented with tables). Details about the NYS guidelines are provided in Appendix \ref{app:details-NYS}. We note that the NYS guidelines were defined in 2015  and, hence, are \textit{not} specifically calibrated to the disease dynamics  of COVID-19 patients. 
\subsection{Markov Model for COVID-19 Ventilator Triage}\label{subsec:MDP-triage-model}
We now formalize the MDP model for COVID-19 ventilator allocation. \tb{Recall that the allocation guidelines are meant to be deployed {\em only} in the case where there are not enough ventilators, and in this case, the guidelines do not take into account the length of the queue, i.e., the total number of patients currently waiting to be intubated. Therefore, the states of our MDP model only capture the health condition of an individual patient, and we capture the cost of using resources with an appropriate choice of reward parameters.}

\paragraph{Decision epochs, states and epochs.} 
Recall that the NYS guidelines have only three decision periods: at triage and two reassessments.  Therefore, we consider an MDP model where there are $H=4$ periods; the last period corresponds to discharge of the patients after 5 days (or more) of intubation.  Our MDP model is depicted in Figure \ref{fig:MDP-model}.  After 0/2/5 days of intubation, the patient is in a given health condition captured by the SOFA score (value of SOFA, and increasing/decreasing value of SOFA compared to last decision time), which changes dynamically over time,  as well as static information on comorbidities and demographics. For the sake of completeness, we will also consider alternative decision epochs in our numeric experiments.

At triage (0 day of intubation, $t=1$ in the MDP), the decision maker chooses to {\it allocate}  the ventilator to this patient, or \textit{exclude} the patient from ventilator treatment.  After 2 days or 5 days of intubation (second and third period in the MDP), the decision is whether or not to {\it maintain} the patient on the ventilator. After the decision is executed, the patient transitions to another health condition or is nominally extubated before the next period which will correspond to a terminal state $D_{t}$ or $A_{t}$,  indicating the status at discharge from the hospital ({\bf D}eceased or {\bf A}live) (see Figure \ref{fig:MDP-model-maintain}).
If the patient is excluded from ventilator treatment, s/he transitions to state $D_{t}^{\sf ex}$ (if s/he dies at discharge) or $A_{t}^{\sf ex}$ (if s/he recovers at discharge), see Figure \ref{fig:MDP-model-exclude}. \tb{It is important to note that we do not consider the future health evolution of patients excluded from ventilator treatment after triage, we only consider their outcomes at discharge (deceased or alive): per the NYS guidelines, triaged patients that are not allocated a ventilator will {\em not} be further considered for ventilator allocations and will access palliative care, whereas our MDP instance is used to model the trajectories of intubated patients.}

\paragraph{Rewards parameters.}
\tb{
In our MDP instance, the rewards are only associated with the terminal states, and we design the rewards to capture the following aspects of our decision problems:
\begin{enumerate}
    \item {\bf Patient outcomes.} We want to ensure that our policies maximize the survival among the patient population. To do so we need the {\sf Alive} final states to have much larger rewards than the {\sf Deceased} final states. 
    \item {\bf Use of resources.} We want to ensure that our policies also minimize excessive use of ventilators. To do so, we penalize keeping the patients intubated for multiple periods. 
    \item {\bf Opportunity costs.} We want our policy to refrain from intubating patients with a large probability of death (even when intubated), so as to keep ventilators for patients with a larger chance of survival (when intubated). 
\end{enumerate}
To capture these three important aspects, we design our reward functions based on the three following parameters:
\begin{enumerate}
    \item To capture {\bf patient outcomes}, we introduce a parameter $C > 1$ corresponding to a large multiplicative bonus for {\sf Alive} outcomes (compared to {\sf Deceased} outcomes).
    \item To capture the {\bf use of resources}, we introduce a parameter $0<\rho <1$ corresponding to a ``discount factor" incurred for keeping the patients intubated for an additional period.
    \item To capture the {\bf opportunity costs}, we introduce a parameter $0<\gamma <1$ corresponding to a multiplicative penalty (resp. bonus) for patients that are proactively extubated but are discharged alive (resp. discharged diseased). That is, the extubation decision did not appear to adversely affect the patient's outcome.
\end{enumerate}
}
\tb{
It is important to choose the multiplicative bonus $C$ much larger than $1$, to appropriately entice our policies to avoid the {\sf Deceased} states. In our simulation, we choose $C=100$, and we provide a sensitivity analysis for the values of $C$ in Appendix \ref{app:sensitivity-analysis}.
}  
\tb{
  To penalize for using the limited resource, the reward for ventilator use decreases by a multiplicative factor $\rho < 1$ for each period of ventilator use.  The parameter $\rho$ can be understood as a discount factor, e.g., $\rho = 0.9 $ means that future rewards are decreased by $10 \%$ (per period) compared to the same outcome at the current period.  This discount factor reflects the desire to use resources for shorter periods.  We choose $\rho=0.9$ in our simulation. Finally,  among patients who survive,  a multiplicative penalty of $\gamma<1$ is given to patients who have been extubated, while for patients who die,  a multiplicative bonus of $1/\gamma >1$ is given to patients who have been extubated.   We choose $\gamma = 0.5$ in our simulation. When choosing the values for $C, \rho$ and $\gamma$, one needs to be careful to maintain the terminal ``rewards'' for deceased patients smaller than the terminal rewards for patients discharged alive. In particular, we recommend values of $C,\rho$ and $\gamma$ to maintain $C\rho^2 \gamma^2 >1$,  so that the ``best" {\sf Deceased} state ($D_{1}^{\sf ex}$, with reward $r_{D_{1}^{\sf ex}}=1/\gamma$) has a smaller reward than the ``worst" {\sf Alive} state ($A_{3}^{\sf ex}$, with reward $ r_{A_{3}^{\sf ex}} = C \rho^2 \gamma$).
  }
\tb{
Overall, our reward functions can be represented as, for $t \in \{1,2,3\}$:
\[ r_{A_{t}} = C \rho ^{t-1}, r_{A^{\sf ex}_{t}} = C \gamma \rho^{t-1}, r_{D_{t}} =  \rho^{t-1}, r_{D_{t}^{\sf ex}} =  \gamma^{-1} \rho^{t-1}.\]
While this reward parametrization is an artifact of our model and is necessary to compute the triage policy, we evaluate the performance of the resulting policy through simulations.  We also conduct a sensitivity analysis, where we vary the values of $C,\rho$ and $\gamma$, and study the changes in the estimated performances of the corresponding optimal policies and tree policies in the MDP.  We observe stable performances in the simulation model for the optimal policies and tree policies for a wide range of parameters $C, \rho,\gamma$, as long as $C\rho^2\gamma^2$ is much larger than $1$. We present the sensitivity analysis in Appendix \ref{app:sensitivity-analysis}.}

\begin{figure}[htb]
\center
 \begin{subfigure}{0.48\textwidth}
\centering
         \includegraphics[width=0.95\linewidth]{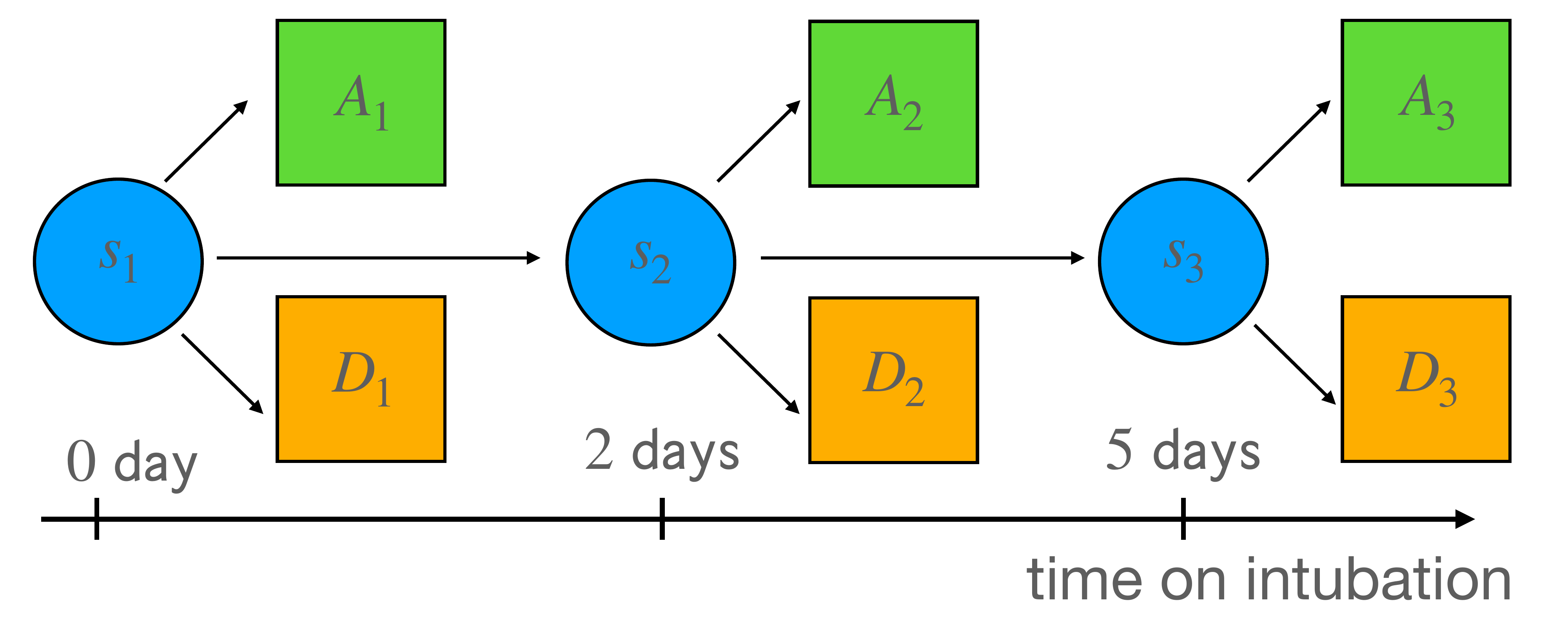}
         \caption{Transition for action = \textit{maintain}.}
         \label{fig:MDP-model-maintain}
  \end{subfigure}
   \begin{subfigure}{0.48\textwidth}
\centering
         \includegraphics[width=0.95\linewidth]{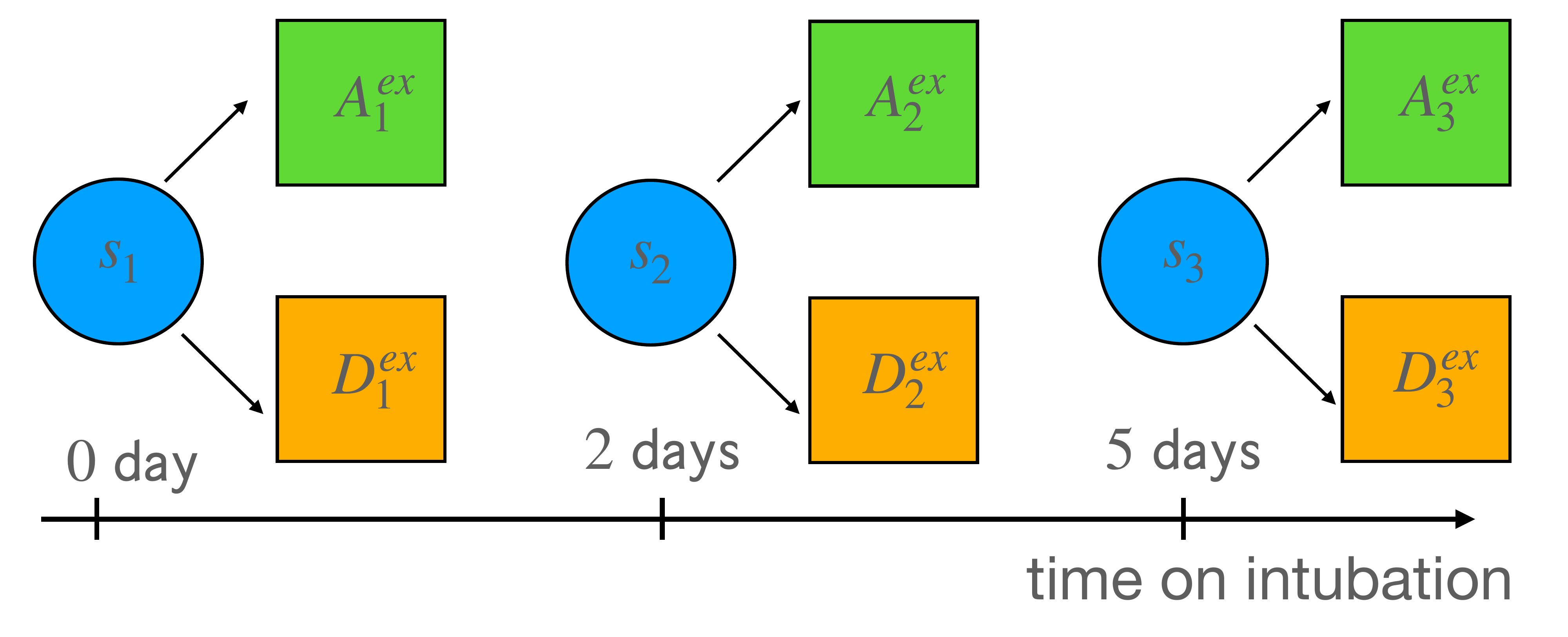}
         \caption{Transition for action = \textit{exclude}.}
         \label{fig:MDP-model-exclude}
  \end{subfigure}
  \caption{States and transitions in our MDP model with actions \textit{`maintain'} and \textit{`exclude'}.}
  \label{fig:MDP-model}
\end{figure}
\subsection{Data set and parameter estimation}\label{subsec:triage-data-set}
To calibrate our model, we utilize a retrospective data of  807 COVID-19 hospitalizations at the Montefiore Medical Center.  In particular,  we include patients with  confirmed laboratory test (real-time reverse polymerase chain reaction) for SARS-CoV-2 infection, admitted to three acute care hospitals within a single urban academic medical center   from 3/01/2020 and 5/27/2020, with respiratory failure requiring mechanical ventilation. This hospital system is located in the Bronx, NY, which was one of the hardest hit neighborhoods during the initial COVID-19 surge seen in the United States. This study was approved by the Albert Einstein College of Medicine/Montefiore Medical Center and Columbia University IRBs.

Each hospitalization corresponds to a unique patient.  For each patient, we have patient level admission data. This includes demographics such as age, gender, weight, BMI,  as well as comorbid health conditions such as the Charlson score, diabetes, malignancy, renal disease, dementia,  and congestive heart failure.  Our data provides admission and discharge time and date, as well as status at discharge from the hospital (deceased or alive). Finally,  every patient in our data set is assigned a SOFA score, which quantifies the number and severity of failure of different organ systems ~\citep{jones2009sequential},  and which is updated every two hours.
 The ventilator status of the patients (intubated or not) is also updated on a two-hours basis.

The hospital that we study was able to increase ventilator capacity through a series of actions, including procurement of new machines, repurposing ventilators not intended for extended use, and borrowing of machines from other states. The maximum number of ventilators that were used for COVID-19 patients over the study period was 253. The hospital never hit their ventilator capacity, so triage was never required to determine which patients to intubate.

The mean SOFA score at admission was 2.0 (IQR: [0.0,3.0]) and the maximum SOFA score over the entire hospital stay was 9.7 (IQR: [8.0,12.0]).  The mean age was 64.0 years (SD 13.5). The patients who survived were significantly younger than those who died in the hospital (p$<$0.001). The average SOFA at time of intubation was 3.7 (IQR: [1.0,6.0]), at 2 days of intubation it was 6.3 (IQR: [4.0,9.0]) and at 5 days of intubation it was 5.9 (CI: [3.0,8.0]).
 Details and summary statistics about the data set and our patient cohort are presented in Table \ref{tab:summary-statistics} in Appendix \ref{app:details-data-set}.

\subsection{Model Calibration}
To calibrate the MDP model, we utilize {\em static} patient data assigned at the time of admission (e.g.,  history of comorbodities), and {\em dynamical} patient data updated on a two-hour basis (e.g., SOFA score and intubation status). \tb{We calibrate the transition rates across states, as well as the likelihood of deaths and survival at each period,  using the empirical frequencies for these events from the data. We consider three MDP models, corresponding to three different sets of states.

The first MDP model only relies on SOFA scores and the direction of change of the SOFA score (decreasing or increasing, compared to last triage/reassessment period) to make the allocation decision. In this MDP model, the set of states at time $1$ is just $\{1,...,18\}$ (the possible SOFA scores), while at time $2$ (reassessment after 2 days) and at time $3$ (reassessment after 5 days) the sets of states are $\{1,...,18\} \times \{+,-\}$, with $``+"$ (resp. $``-"$) indicating that the current SOFA score is strictly larger (resp. smaller or equal) than the previous SOFA score. We estimate the transition probabilities by directly estimating the empirical frequencies of the observed transitions in our dataset.

The second MDP model relies on SOFA scores, the direction of change of the SOFA score, and the age of the patient to make the allocation decision, while the third MDP model additionally relies on the comorbidities and demographics. For these two MDP models, the number of possible states is too large (this is also known as the curse of dimensionality) and most of the possible transitions are not observed in our dataset. For this reason, we need to reduce the total number of states and to increase the number of observations and transitions per states. To do so, we first create $k=10$ clusters (using the {\sf KMEANS} function from the {\sf scikitlearn} Python package) for the patients's age (for the second tree policies) and for the patient's age and comorbidities (for the third tree policies). \tb{We present a detailed description of the resulting clusters in Appendix \ref{app:details cluster}.} A state then consists of a cluster label, and the current SOFA score, and the direction of change of the SOFA score (decreasing or increasing, compared to last triage/reassessment period). The cluster label encodes the additional information, e.g. age or comorbidities. We then estimate the transition probabilities with the empirical frequencies of observed transitions {\em across clusters}.}

Note that in our data we do not observe any early extubation (i.e., we only observe extubation when it is safe or when the patient is deceased). Therefore, we cannot estimate the transition rates to $D^{\sf ex}_{t}$ (the death state if extubated between period $t$ and the next period).
We use a single parameter $p \in [0,1]$ for the transitions to $D^{\sf ex}_{t}$. We choose a uniform $p$ across periods $t \in \{1,2,3\}$ and states. This gives a range of estimates, from optimistic estimates ($p=0$) to more realistic estimates ($p \geq 0.90$), with values $p=0.90$ and $p=0.95$ being closer to the death rates estimated by our clinical collaborators. We use $p=0.99$ for learning policies in our numerical experiments.

We note that some patients may be intubated more than once during their hospital visits. This can happen when the health condition of an intubated patient first improves, the patient is safely extubated, and then the health condition starts to deteriorate again.  We do not account for this in our MDP model, as this is a relatively rare event. In our data, it occurs in only 5.7\% of the patients.  Therefore, we treat second ventilator uses as {\it new trajectories}, starting from period $t=1$.  While the dynamical health evolution of the patient who are reintubated may differ from the dynamics of the patients who are intubated for the first time, we emphasize that only the computational part (i.e., computation of  tree policies for the MDP in Figure \ref{fig:MDP-model}) is based on this approximation. The evaluation of patient survival with our simulation model does not depend on this approximation.

 \subsection{Policy Computation}
\tb{
We use Algorithm \ref{alg:optimize then fit} and Algorithm \ref{alg:optimize and fit} to compute several tree policies, based on the three MDP models described in the previous section using different covariates describing the health conditions of the patients in the state space. We also compare our tree policies with the optimal unconstrained policies obtained in the three MDP models.

In particular, we first compute tree policies only using SOFA (Figure \ref{fig:tree-sofa}), since this is the only covariate used in the NYS guidelines. 
We then compute tree policies based on SOFA and age of the patient (Figure \ref{fig:tree-sofa-age-algo-1} and Figure \ref{fig:tree-sofa-age-algo-2}), as there is some debate about making categorical exclusions based on age: for example,  the NYS guidelines break ties based on age. We also compute tree policies based on all the comorbid conditions and demographic information available in our data (Figure \ref{fig:tree-sofa-cov-algo-1} and Figure \ref{fig:tree-sofa-cov-algo-2}).  
When we include covariates other than SOFA scores, we create 10 clusters to reduce the final number of states and transitions.  For instance, for the policies based on SOFA scores and age,  the final states in the MDP of Figure \ref{fig:MDP-model} consists of pairs $({\sf sofa},\ell,+/-)$ where ${\sf sofa}$ is the current SOFA score, $\ell$ is a cluster label describing the age of the patient, and $+$ or $-$ captures whether patient condition is improving or worsening.  We present the details of the covariates used in the full tree policy along with the corresponding trees for each computed policy in Appendix \ref{app:tree-policies}.
}
 
We note that one needs to be cognizant of potential ethical considerations when including certain covariates. For instance, diabetes has been shown to be correlated with higher risk of severe COVID disease~\citep{orioli2020covid}. However, increased prevalence of diabetes (and other risk factors for severe COVID patients) is observed in underserved communities who have suffered from structural health inequities.  Excluding patients from ventilator access on the basis of such covariate could further exacerbate these long-standing structural inequities.  As such, there are several ethical justifications for the absence of categorical exclusion criteria from the triage decisions~\citep{white2020framework}. During the COVID-19 pandemic, there has been a movement away from including these covariates in triage algorithms, because of the potential to exacerbate inequities~\citep{mello2020respecting}.
We believe  there is value in estimating the potential benefits (or not) of including as much information as possible in the triage guidelines, to provide quantitative data on the consequences of these choices to inform this discourse.

\tb{
\begin{remark}
    In practice, when computing trees, one needs to choose some important hyperparameters, such as the maximum depth. There is a clear tradeoff between deeper trees (which are more powerful classifiers but may be difficult to interpret) and shallower trees (which may be poor classifiers). We recommend using the parameters that match the official guidelines (if such guidelines exist), or involving human experts to check for the interpretability of the resulting trees (if official guidelines do not exist). 
    Since the NYS guidelines, interpreted as tree policies, have a maximum depth of $4$ (see Figure \ref{fig:NYS-guidelines} in Appendix \ref{app:details-NYS}), in our simulations we constrain the set of admissible decision trees to have a depth of at most $4$. 
\end{remark}
}

\subsection{Simulation model}\label{subsec:triage-simulation-model}  We use simulations to estimate the number of deaths associated with the implementation of the various triage guidelines at different levels of ventilator capacity. Specifically, we bootstrap patient arrivals and ventilator demand from our data and examine different allocation decisions, based on our MDP model.
 The time periods considered consist of a discretization of the time interval (03/01/20 - 05/27/20) into intervals of two hours.
At each time period, the following events happen:
\begin{enumerate}
\item We sample (with replacement) the arrivals and ventilator demand from our data set of observed patients trajectories. The number of samples at each period is equal to the number of new intubations observed at this time period in our data.
\item We update the status (intubated, reassessment or discharge) of all patients in the simulation.  Prior to reaching ventilator capacity, ventilators are allocated on a first-come-first served basis. When the ventilator capacity is reached, new patients are triaged using the triage guideline chosen and assigned a priority (low, medium or high). Low priority patients are excluded from ventilator treatments.
A medium priority patient currently intubated may be excluded from ventilator treatment, to intubate a high priority patient.  High priority patients are never extubated. In particular, if all patients currently intubated have high priority,  any remaining patients who need a ventilator will be excluded from ventilation; i.e., no patients currently intubated will be excluded from ventilator treatment.
\item After two days and five days of intubation,  patients on ventilators are reassessed and reassigned priority classes. Patients in low and medium priority classes are excluded from ventilator treatment if a new patient with higher priority requires one.  Patients with low priority class are removed first.
\end{enumerate}

The First-Come-First-Served (FCFS) triage rule is operationalized as follows. When the capacity constraint is reached,  no new patient is assigned a ventilator until one becomes available, regardless of priority classes.  Extubation only occurs when the patient is deceased or can be safely extubated (the timing of which is indicated by extubation in the observed data).

 At discharge, the status of patients who were not impacted by the triage/reassessment guidelines (i.e. their simulated duration of ventilation corresponds to the observed duration in the data) is the same as observed in the data.
For the outcomes of patients excluded from ventilation by the triage guideline, we use the same method as for our MDP model. In particular, we use a single parameter $p \in [0,1]$ to model the chance of mortality of patients excluded from ventilator treatment.
With probability $p$, the discharge status of a patient excluded from ventilator treatment is deceased. Otherwise, with probability $1-p,$ the discharge status of a patient excluded from ventilator treatment is the same as if this patient had obtained a ventilator (i.e., the same as observed in the data).
We acknowledge that the three potential exclusion events (at triage, at reassessments,  or when removed in order to intubate another patient) may require different values of $p$. Additionally, $p$ may vary across patients and is difficult to estimate in practice. However, when $p = 0$, we obtain an optimistic estimate of the survival rate, since being excluded from ventilator treatment has no impact on the outcome of the patient.  When $p = 1$, we obtain a pessimistic estimate of the survival rate, as in this case any patient excluded from ventilator treatment will die. Therefore, using a single parameter $p \in [0,1]$ enables us to interpolate between an optimistic ($p=0$) and a pessimistic ($p=1$) estimation of the survival rates associated with the triage guidelines. In practice, our clinical collaborators estimate that any $p \geq 0.90$ is a reasonable value. 
\tb{We present our results for $p=0.99$ in this section, and we refer to Remark \ref{rmk:impact parameter p} for some discussion on this parameter.}
\section{Empirical results}\label{sec:simu}
We now evaluate the performance of the tree policies computed \tb{using Algorithm \ref{alg:optimize then fit} and Algorithm \ref{alg:optimize and fit}. We compare to three benchmarks: the NYS Triage guidelines, a First-Come-First-Serve (FCFS) guidelines, and the optimal unconstrained MDP policies obtained in our models.} Recall that for all policies -- including ours -- ventilators are allocated according to FCFS until the ventilator capacity is reached. When there is insufficient ventilator supply to meet all of the demand, they will be allocated according to the specified priority scheme, as described in Section \ref{subsec:triage-simulation-model}.
\tb{We compare the average patient survival for different guidelines, the impact of the reassessment times and of the information used in the state space for the guidelines, and the number of excluded patients. We perform some sensitivity analysis in Appendix \ref{app:sensitivity-analysis-parameter-p} and Appendix \ref{app:sensitivity-analysis} and we compare with other machine learning-based methods for learning allocation guidelines in Appendix \ref{app:comparison ISP-LU}. We present a summary of our experiment pipeline in Figure \ref{fig:experiments pipeline}.}
\begin{figure}[htb]
\center
\includegraphics[width=0.9\linewidth]{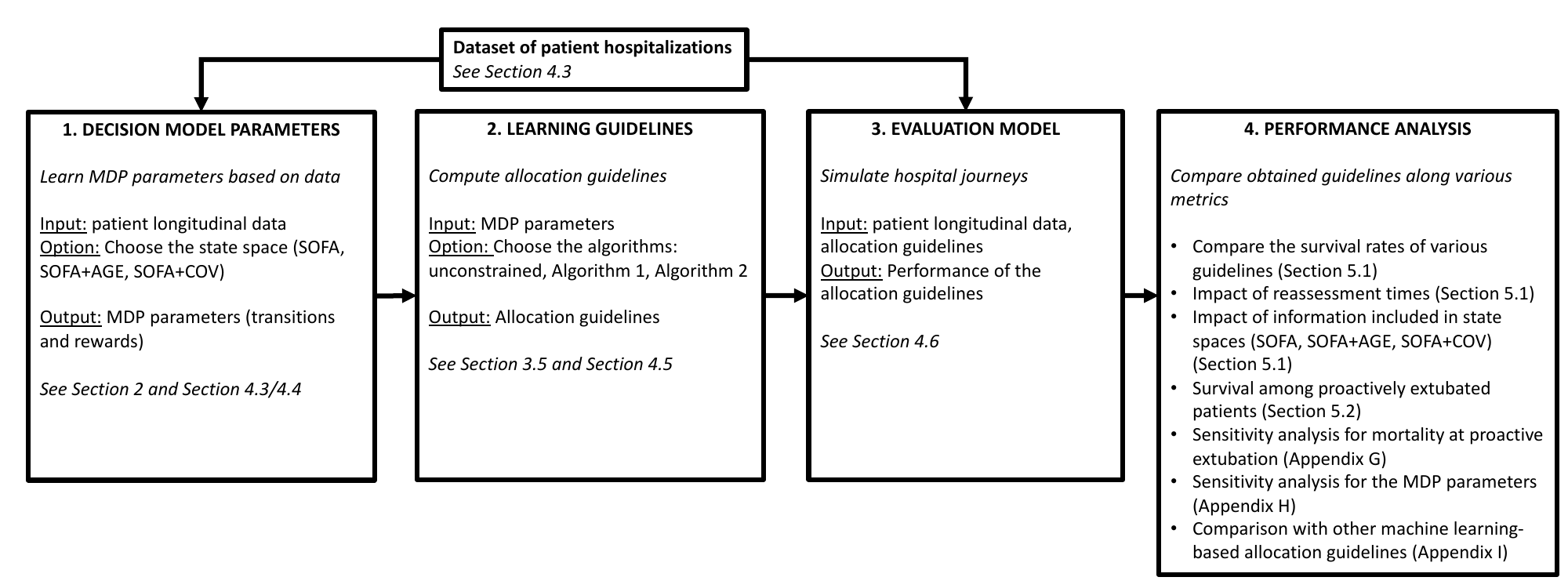}
\caption{The experiments pipeline for our numerical results.}
\label{fig:experiments pipeline}
\end{figure}
\subsection{Number of deaths}
We obtain estimates of the number of deaths, associated with a ventilator capacity and triage guidelines over $100$ bootstrapped data sets.
\tb{In Figure \ref{fig:number-of-deaths}, we compare the number of  deaths associated with various levels of ventilator capacity for different triage guidelines: our tree policies based only on SOFA,  based on SOFA and age, and based on SOFA and other covariates, computed with Algorithm \ref{alg:optimize then fit} (Figure \ref{fig:tree policies optimize then fit}) and Algorithm \ref{alg:optimize and fit} (Figure \ref{fig:tree policies optimize and fit}).
}
For comparison purposes, we also include the performance of the NYS triage algorithm and FCFS, and we display the performance of the optimal unconstrained policy for our MDP model (which we call \textit{MDP policy}) in Figure \ref{fig:MDP policies}. 
Recall that $p$ models the likelihood of death of a patient after exclusion from ventilator treatment. For brevity, we only present our estimations with $p=0.99$ in the main body of the paper. 
\paragraph{Comparing NYS and FCFS.}
We recall that the total number of deaths in our data set, i.e., with ample ventilator capacity, is 543 patients among our cohort of 807 patients (see Table \ref{tab:summary-statistics}). In our numerical experiments, we observe similar number of deaths between the NYS and FCFS policies (see Figure \ref{fig:number-of-deaths}).  For instance,  consider the number of deaths at a ventilator capacity of 180 ventilators. The average number of
deaths is 582.0 (CI: [581.2,582.9]) for the NYS guidelines and 581.7 (CI: [580.5,582.9]) for the FCFS guidelines. 
In Figure \ref{fig:number-of-deaths}, we observe this very small difference between the two policies for various levels of ventilator capacity between $250$ ventilators and $180$ ventilators, \tb{and this also holds for other models for the parameter $p \in [0,1]$ (see Remark \ref{rmk:impact parameter p} and Appendix \ref{app:sensitivity-analysis-parameter-p}).}
It is somewhat surprising to see the NYS and FCFS guidelines performing similarly in this cohort of COVID-19 acute respiratory failure.
The NYS guidelines were designed prior to the COVID-19 pandemic,  in part because of concerns that arose following the shortages experienced in New Orleans after Hurricane Katrina. Consequently, they were designed primarily for the case of ventilator shortages caused by \textit{disasters}, such as hurricanes, a Nor'easter, or mass casualty event. Therefore,  even though the NYS guidelines is based on the SOFA score, it ignores the specifics of respiratory distress caused by COVID-19.  For instance, this may indicate that COVID-19 natural history does not follow the 2 and 5 days reassessment time line, with the average SOFA score at $t=$2 days and at $t=$5 days being somewhat similar in our patient cohort.
The timing of reassessment in the NYS guidelines may need to be re-examined, otherwise the NYS triage algorithm is not able to substantially outperform FCFS.
\begin{figure}[htb]
\center
 \begin{subfigure}{0.3\textwidth}
\centering
         \includegraphics[width=1.0\linewidth]{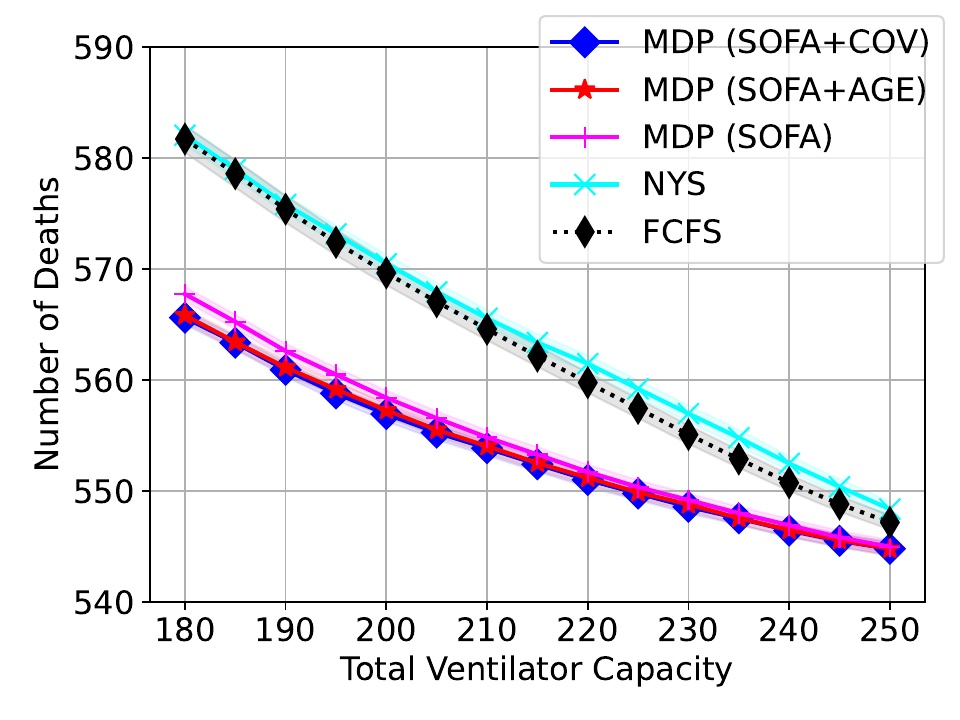}
                  \caption{MDP policies.}
         \label{fig:MDP policies}
  \end{subfigure}
   \begin{subfigure}{0.3\textwidth}
\centering
         \includegraphics[width=1.0\linewidth]{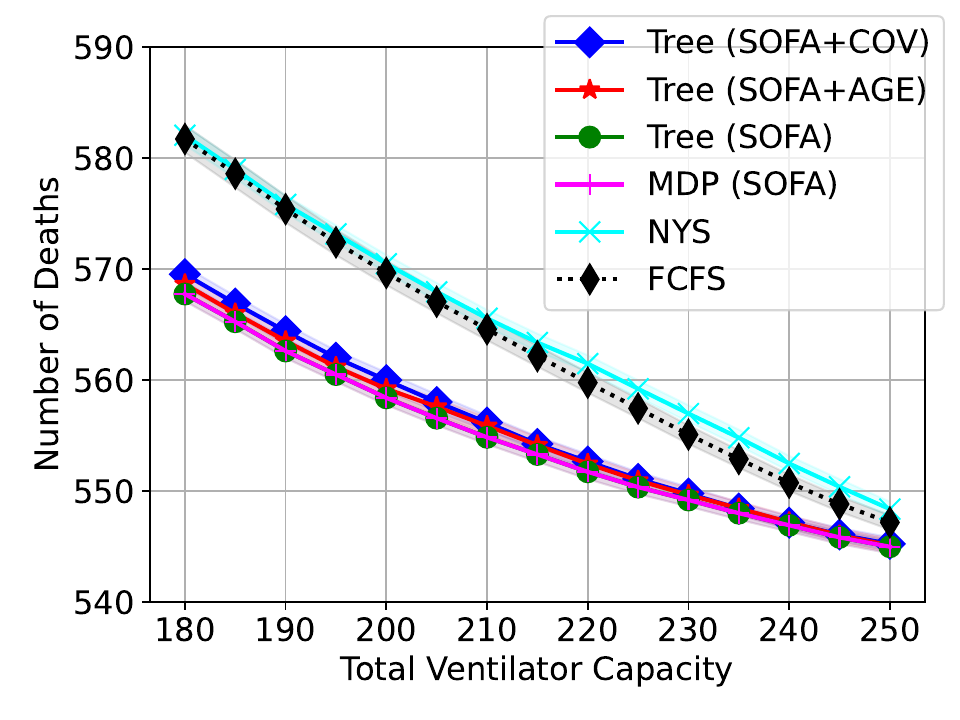}
               \caption{Tree policies from Alg. \ref{alg:optimize then fit}.}
         \label{fig:tree policies optimize then fit}
  \end{subfigure}
\begin{subfigure}{0.3\textwidth}
\centering
         \includegraphics[width=1.0\linewidth]{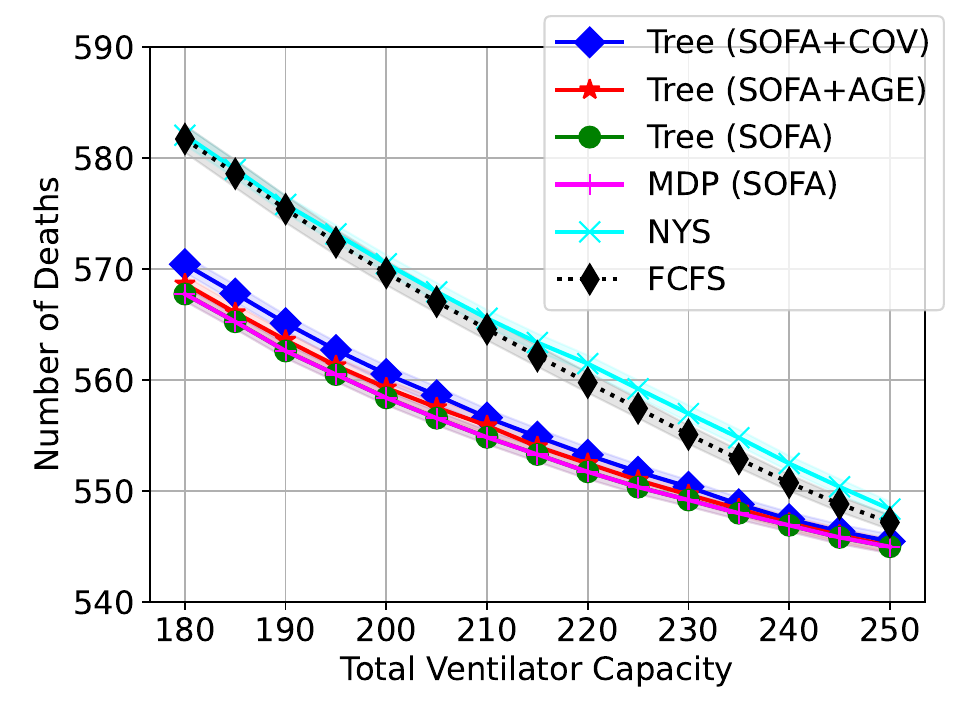}
               \caption{Tree policies from Alg. \ref{alg:optimize and fit}.}
         \label{fig:tree policies optimize and fit}
  \end{subfigure}
         \caption{Number of deaths for various triage guidelines at hypothetical levels of ventilator capacities, for $p=0.99$. }
         \label{fig:number-of-deaths}
\end{figure}
\tb{
\paragraph{Performance of MDP policies} We now analyze the performance of the MDP policies as in Figure \ref{fig:MDP policies}. We note that all three policies outperform the NYS guidelines: Still focusing on a ventilator capacity of 180 ventilators, there is a slight advantage for the MDP policy based on SOFA and other covariates (565.6 deaths, CI: [565.0,566.2]) and the MDP policy based on SOFA and age (565.8 deaths, CI: [565.2,566.5]) over the MDP policy based solely on SOFA (567.7 deaths, CI: [567.1,568.4]).
}

\paragraph{Performance of the interpretable policies.}
\tb{We now focus on the performance of the tree policies computed by Algorithm \ref{alg:optimize then fit} and Algorithm \ref{alg:optimize and fit}. When only using SOFA, both algorithms return the same tree policy, represented in Figure \ref{fig:tree-sofa}. 
We note that when only using information on SOFA, the performance of the tree policy represented in Figure \ref{fig:tree-sofa} matches the performance of the MDP policy.  In fact, the tree policy based on SOFA computed by Algorithm \ref{alg:optimize then fit} and Algorithm \ref{alg:optimize and fit} only differs from the MDP policy at one state: the MDP policy would maintain ventilation at $t=$ 5 days for a patient with a SOFA score of $17$, increasing compared to the last reassessment time, whereas the tree policies would proactively extubate such a patient. We provide a more detailed comparison in Appendix \ref{app:tree-policies}.
The difference with the average number of deaths for the NYS guidelines amounts to $2.6 \%$ of the 543 observed deaths in our data. Overall, based on SOFA only, our algorithms can return tree policies that have comparable performance with the optimal unconstrained allocation decisions (MDP policy), but that are much more interpretable. The tree policy based on SOFA (Figure \ref{fig:tree-sofa}) is a simple policy, that does not exclude any patient at triage (recall that the SOFA scores are always smaller than $18$ in our data set), and it only excludes patients with SOFA larger or equal than $16$ at reassessment when $t=$ 2 days. At reassessment when $t=$ 5 days, two groups of patients are proactively extubated: the first group of patients has SOFA larger or equal than $14$, while the second group of patients has SOFA $\in \{11,12,13\}$ and SOFA is decreasing. Thus, it is less aggressive than the NYS guidelines, which may exclude some patients with SOFA $> 7$ at 2 days and 5 days of intubation if their conditions are not improving (see Appendix \ref{app:details-NYS} for more details). This suggests that the NYS guidelines may be too proactive at excluding patients from ventilator treatments. 
\begin{figure}[htb]
\center
 \begin{subfigure}{0.3\textwidth}
\centering
         \includegraphics[width=0.9\linewidth]{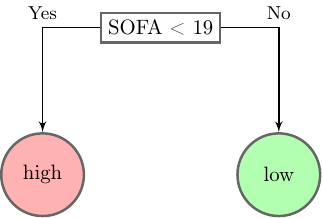}
         \caption{Triage.}
         \label{fig:tree-sofa-time-1}
  \end{subfigure}
   \begin{subfigure}{0.3\textwidth}
\centering
         \includegraphics[width=0.9\linewidth]{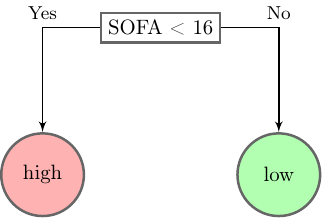}
         \caption{Reassessment after 2 days.}
         \label{fig:tree-sofa-time-48}
  \end{subfigure}
     \begin{subfigure}{0.3\textwidth}
\centering
         \includegraphics[width=0.9\linewidth]{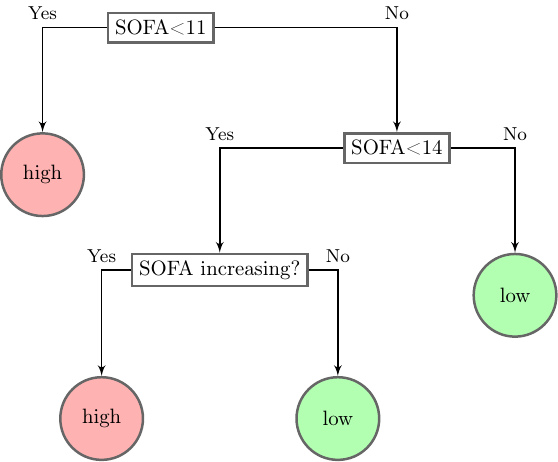}
         \caption{Reassessment after 5 days.}
         \label{fig:tree-sofa-time-120}
  \end{subfigure}
  \caption{Tree policies returned by Algorithm \ref{alg:optimize then fit} and Algorithm \ref{alg:optimize and fit}, only  based on SOFA scores.}\label{fig:tree-sofa}
\end{figure}
\tb{
We now focus on the tree policies obtained using Algorithm \ref{alg:optimize then fit} and Algorithm \ref{alg:optimize and fit} when the state space includes SOFA+AGE or SOFA+COVARIATES. For the sake of conciseness, we represent these trees in Appendix \ref{app:tree-policies}. We observe a gap between the performance of these interpretable policies compared to the MDP policies based on SOFA+AGE and SOFA+COVARIATES. 
However, including other covariates (such as demographics and comorbidities, see \textit{Tree policy (SOFA+COV)} or \textit{Tree policy (SOFA+AGE)} in Figure \ref{fig:number-of-deaths}) leads to a slightly higher number of deaths than policies only based on SOFA score.
This is coherent with the fact that the SOFA score itself has been shown to be a strong predictor of COVID-19 mortality~\citep{zhou2020clinical}. 
Therefore, the models that incorporate comorbidities and/or demographics do not necessarily outperform the policies only based on SOFA; indeed, in this example, they do not.
Categorical exclusion of some patients is considered unethical~\citep{white2020framework}, and we show that it is not necessarily associated with gains in terms of patients saved.

Overall, we note that there does not appear to be a significant performance loss between the optimal unconstrained policies and the interpretable policies returned by our algorithms, despite the optimal unconstrained policies being difficult to interpret in the first place. We also note that Algorithm \ref{alg:optimize then fit} and Algorithm \ref{alg:optimize and fit} return tree policies that are different and may even branch on different covariates, but whose performances are comparable; see Appendix \ref{app:tree-policies} for more details on our tree policies and on their comparisons with the MDP policies. It is plausible that for decision problems with longer horizons, i.e. with more periods of decisions, the performances of the tree policies returned by the two different algorithms may differ more. We leave this as an interesting direction for future research. For the sake of conciseness, we provide a more detailed comparison of the performances of our algorithms in the appendix (see Figure \ref{fig:number-of-deaths-per-alg} in Appendix \ref{app:comparison algo 1/2}).
}
}

\tb{
\paragraph{Changing the reassessment times.}
The reassessment times considered in this section (2 and 5 days after intubation) correspond to the NYS guidelines. However, given the evolution of the disease among SaRS-CoV-2 patients, it is natural to investigate if longer or shorter reassessment times could improve the overall survival rate in the patient population. We study this question numerically, by running the same numerical experiments as for Figure \ref{fig:number-of-deaths}, but where we learn new MDP policies and tree policies for each of the reassessment times (in days)  in $\{(2,3),(2,5),(2,7),(5,7),(5,9),(5,11)\}$. We then evaluate the performance of the new policies using our simulation model from Section \ref{subsec:triage-simulation-model}. The original framework described in NYS guidelines corresponds to the reassessment times $(2,5)$ days, i.e., a first reassessment after two days of intubation and a second reassessment after five days of intubation. Therefore, the set of possible reassessment times that we investigate here allows for either a shorter interval between reassessments ($(2,3)$), a longer interval between reassessments ($(2,7)$), or later reassessments ($(5,7),(5,9),(5,11)$). For the sake of conciseness, we focus on the number of deaths with $p=0.99$ and a capacity of $180$ ventilators available. We present our results in Figure \ref{fig:number-of-deaths (Changing Reassessment)}, where we show the average number of deaths (along with the confidence intervals) for each algorithm (unconstrained optimal policies, Algorithm \ref{alg:optimize then fit} and Algorithm \ref{alg:optimize and fit}), each state space (SOFA, SOFA+AGE and SOFA+COV), and each reassessment time. We observe that the performance of MDP policies is quite stable across all the reassessment times, with a slight advantage for $(2,7)$, i.e. for a longer period of time between the first and the second reassessment times, compared to the NYS guidelines. The performance of the tree policies may vary more across the different reassessment times, and we note that including more information (SOFA+COV or SOFA+AGE) appears to lead to a larger number of deaths, which is consistent with our findings from Figure \ref{fig:number-of-deaths}.

Overall, the simplicity of our algorithms allows the decision maker to investigate numerically the question of choosing optimal reassessment times. We choose to not display the performances of NYS guidelines for various reassessment times on Figure \ref{fig:number-of-deaths (Changing Reassessment)} for the sake of readability, but its average number of deaths was always larger than 580 for all the reassessment times considered in our experiments, significantly larger than for MDP policies and tree policies, a finding similar to our results from Figure \ref{fig:number-of-deaths}.
\begin{figure}[htb]
\center
 \begin{subfigure}{0.3\textwidth}
\centering
         \includegraphics[width=1.0\linewidth]{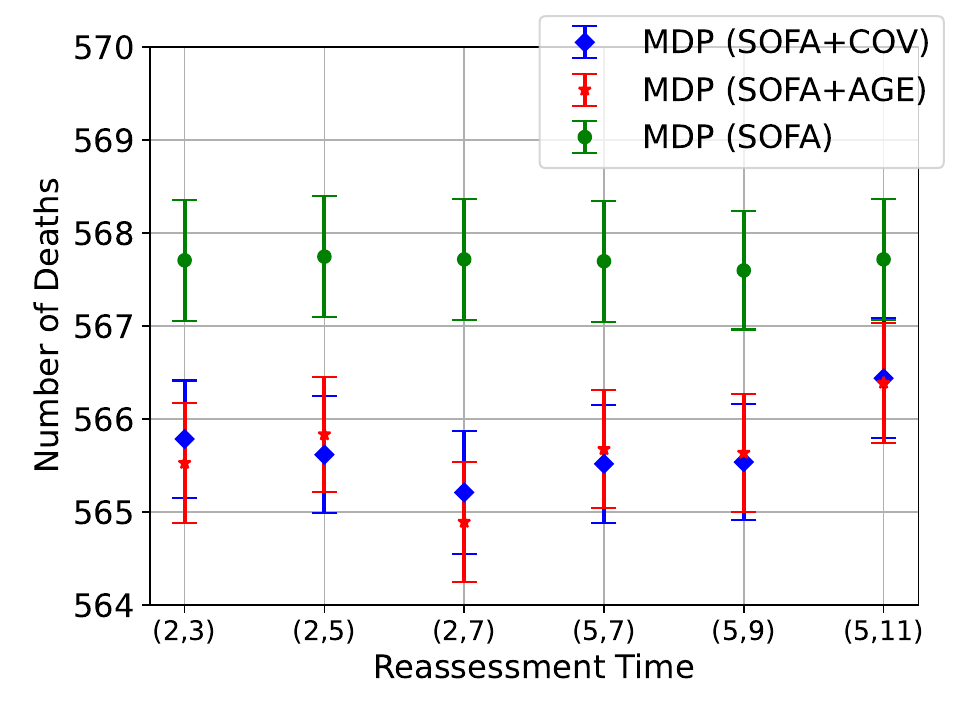}
                  \caption{MDP policies.}
         \label{fig:MDP policies (Changing Reassessment)}
  \end{subfigure}
   \begin{subfigure}{0.3\textwidth}
\centering
         \includegraphics[width=1.0\linewidth]{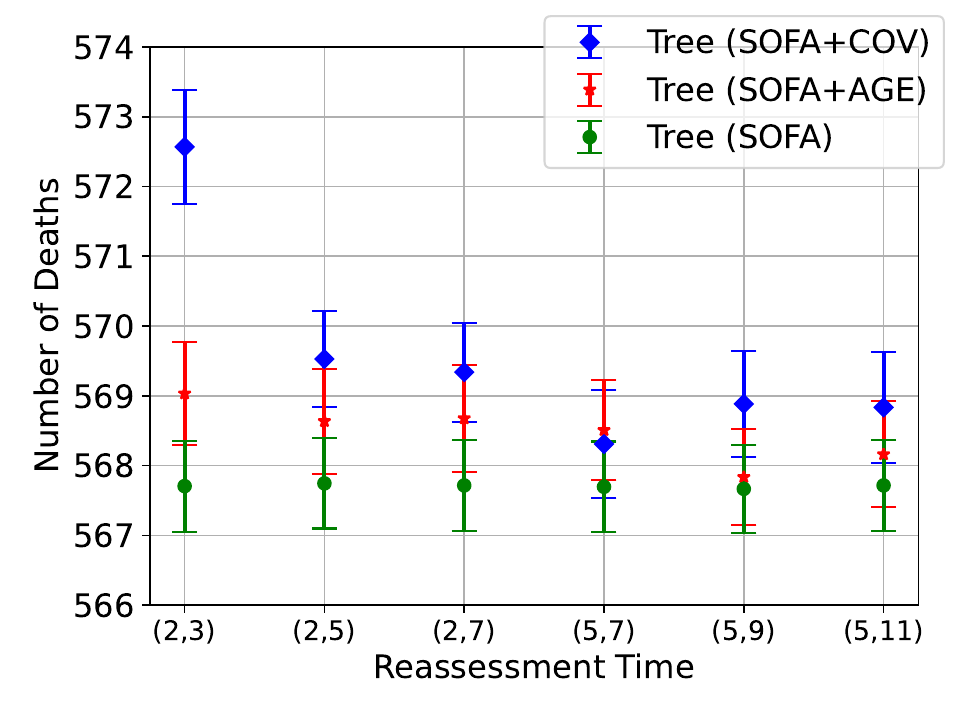}
               \caption{Tree policies from Alg. \ref{alg:optimize then fit}.}
         \label{fig:tree policies optimize then fit (Changing Reassessment)}
  \end{subfigure}
\begin{subfigure}{0.3\textwidth}
\centering
         \includegraphics[width=1.0\linewidth]{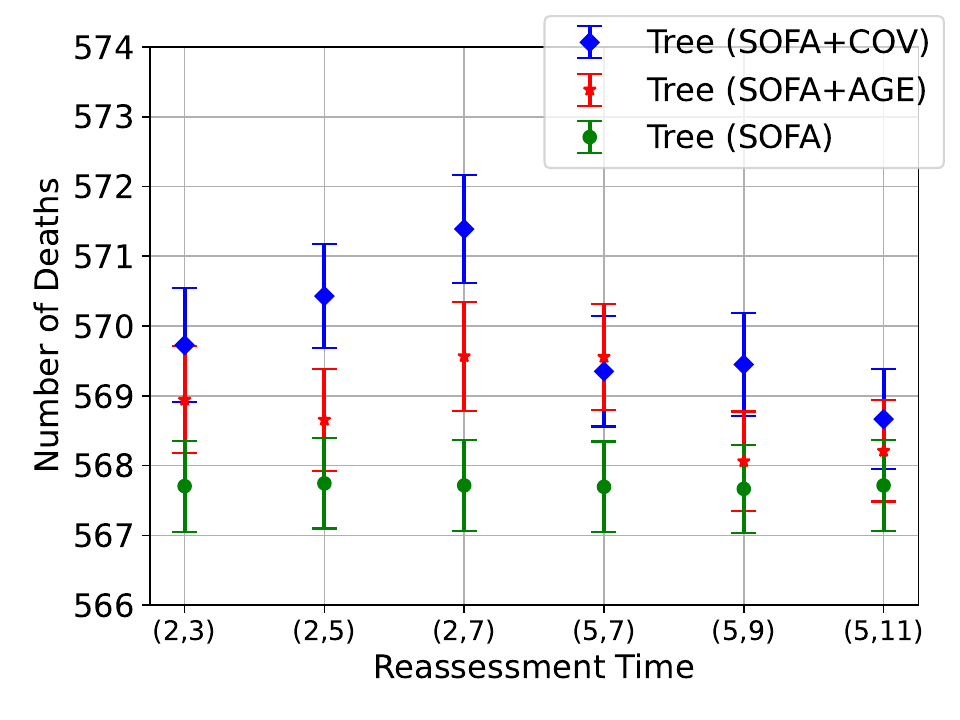}
               \caption{Tree policies from Alg. \ref{alg:optimize and fit}.}
         \label{fig:tree policies optimize and fit (Changing Reassessment)}
  \end{subfigure}
         \caption{Number of deaths for various triage guidelines at ventilator capacity$=180$, for $p=0.99$ and varying reassessment times.}
         \label{fig:number-of-deaths (Changing Reassessment)}
\end{figure}
}

\tb{
\begin{remark}[Impact of parameter $p$.]\label{rmk:impact parameter p}
We have presented our result with $p=0.99$ (probability of death of proactively extubated patients) in Figure \ref{fig:number-of-deaths}. In practice, it is plausible that the probability of death of proactively extubated patients depend on their SOFA score at the period of extubation, as well as on demographics and vital signs. We provide our sensitivity analysis as regards the parameter $p$ in Appendix \ref{app:sensitivity-analysis-parameter-p}, where we consider more elaborate models where the parameter $p$ increases when the SOFA score at the period of proactive extubation is large. All these more elaborate models for the parameter $p$ gives comparable performances for the policies studied in this paper (NYS, FCFS, MDP and tree policies based on Algorithm \ref{alg:optimize then fit} and Algorithm \ref{alg:optimize and fit}), as evident from Figures \ref{fig:number-of-deaths (Sensitive p0)}-\ref{fig:number-of-deaths (Sensitive p3)} in Appendix \ref{app:sensitivity-analysis-parameter-p}. This reinforces the robustness of our conclusions in this section.
\end{remark}
}

\tb{
\begin{remark}[Comparison with other algorithms.]
For the sake of completeness, we also perform additional comparisons with the ISP-LU method for computing ventilator allocation guidelines~\citep{anderson2023rationing}. This algorithm returns a threshold-based triage rule (without any reassessment), based on the predicted probability of death $\hat{P}$ (in case of intubation) and the predicted duration of intubation (length-of-use $\hat{L}$) for each patient requiring intubation. Only patients with a sufficiently large ratio $\hat{P}/\hat{L}$ are allocated a ventilator. \cite{anderson2023rationing} recommend predicting $\hat{P}$ with logistic regression and predicting $\hat{L}$ with a two-stage hurdle model combined with LASSO. We find that ISP-LU guidelines may perform on par with the MDP policies and tree policies, highlighting that interpretable policies based on decision trees may perform on par with other machine learning-based methods, with the added benefit of being interpretable. We provide our detailed comparison with ISP-LU methods in Appendix \ref{app:comparison ISP-LU}.
\end{remark}
}
\subsection{Observed survival rates among excluded patients}\label{sec:simu:observed survival rates}
 For each level of ventilator capacity and each triage guideline,  using our simulation model we can compute the list of patients that would have been excluded from ventilator treatments.  In an ideal situation,  triage guidelines would exclude from ventilator treatment patients who would not benefit from it, i.e.,  ideal triage guidelines would exclude the patients that would die \textit{even if intubated}.  Note that using our data set we are able to know if a patient would have survived in case of intubation (since all patients were intubated in our data set and we can see status at discharge). Therefore, we can compute the survival rates (in case of intubation) among excluded patients.
 Intuitively,  if the guidelines were perfect,  the survival rate (in case of intubation) among excluded patients would be close to 0\%.   Note that random exclusion would achieve an average survival rate (in case of intubation) among excluded patients of 32.7\%, the average survival rate in our cohort of patients.
\begin{figure}[htb]
\center                     
\includegraphics[width=0.9\linewidth]{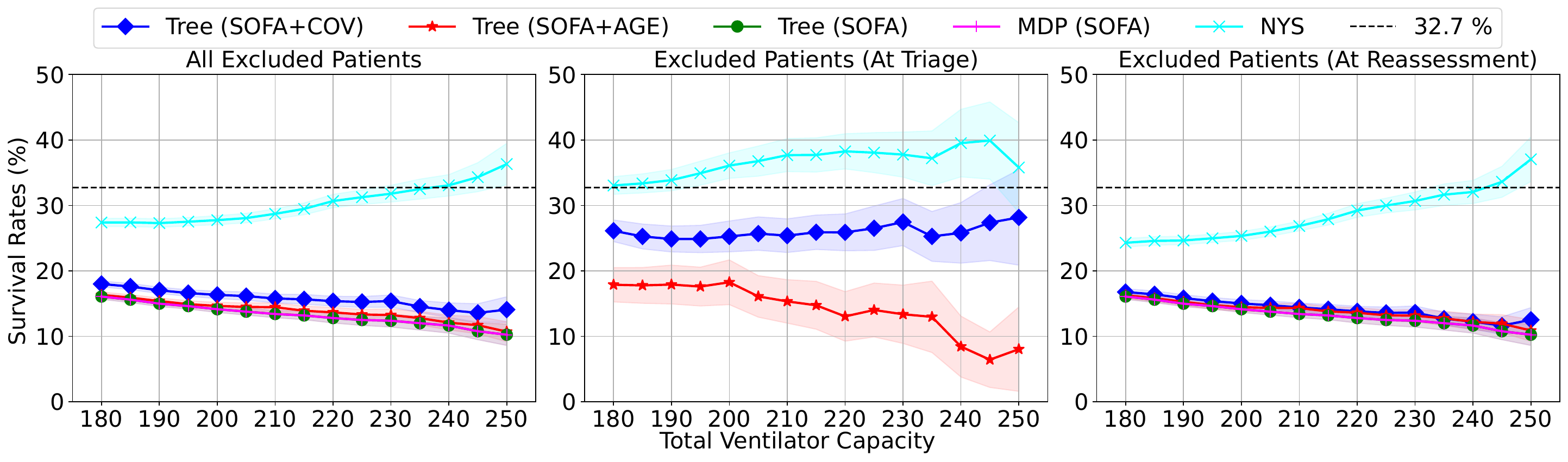}
  \caption{Observed survival rates (in the case of intubation) among all excluded patients for different policies. Note: For total ventilator capacity from $180$ to $250$, Tree (SOFA) and MDP (SOFA) have no patients extubated \textit{at triage}.}
  \label{fig:true_survival_exclusion}
\end{figure}

\begin{figure}[htb]
\center                     
\includegraphics[width=0.9\linewidth]{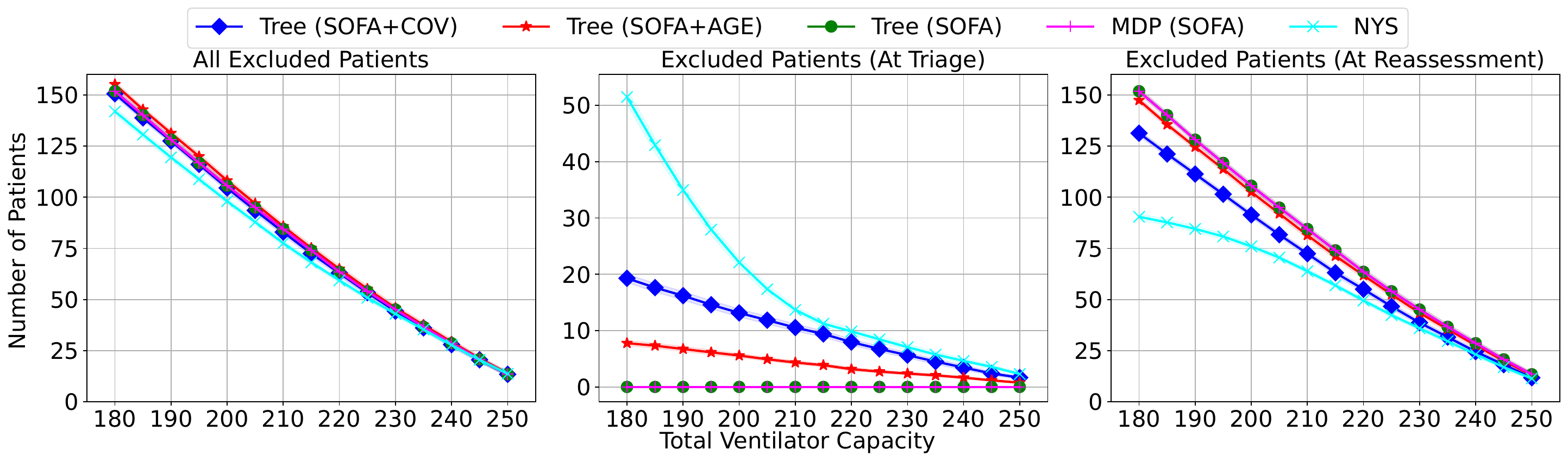}
  \caption{Number of excluded patients for different policies.}
  \label{fig:number_exclusion}
\end{figure}
We compare the survival rate (in case of intubation) among excluded patients for various guidelines in Figure \ref{fig:true_survival_exclusion}. \tb{For the sake of conciseness we only focus on NYS guidelines, the tree policies computed by Algorithm \ref{alg:optimize and fit}, and the MDP policy only based on SOFA.}
We note that among the patients excluded by the NYS guidelines \textit{at triage}, the survival rate is always above 32.7 \%, which means that \textit{at triage}, the NYS guidelines are misidentifying patients who would not benefit from ventilator treatment. \textit{At reassessment}, the NYS guidelines achieve lower survival rates (in case of intubation) among patients excluded, i.e. the NYS guidelines perform better at identifying patients who would not benefit from intubation at reassessment than at triage. \tb{In contrast, our novel triage guidelines based on SOFA, SOFA and age, and SOFA, demographics and comorbidities, are identifying for exclusion those patients with lower survival rates (when intubated).
We emphasize that the tree policy based on SOFA, as well as the MDP policy based on SOFA, does not exclude any patient \textit{at triage}. Additionally, among all policies, the survival rates (in case of intubation) among patients excluded \textit{at triage} is higher than the observed survival rates among patients excluded \textit{at reassessment}.
This stark discrepancy between performances \textit{at triage} and \textit{at reassessment} suggests that it is significantly harder to foreshadow the future evolution of patients' condition and status at discharge at triage than at reassessment, after the patient has remained intubated for several days. We also note that even though all the guidelines considered here exclude approximately the same number of patients (see left plot in Figure \ref{fig:number_exclusion}), it appears that NYS guidelines exclude much more patients at triage than the other policies, even though as we discuss above it is more difficult to assess the benefits of intubation at triage than at reassessment.
All the guidelines considered better identify patients who would not benefit from ventilator treatment, {\it after the patients have been intubated for at least 2 days}. 
This may suggest that more proactive decisions should be made about withdrawing support from patients who are not responding to intensive care treatment compared to patient exclusion at triage.}
Although ethicists generally consider the decision to proactively extubate a patient at reassessment morally equivalent to declining to offer intubation at triage, this action may cause more moral distress to clinicians carrying out the extubation~(see Chapter 5 in \citep{beauchamp2001principles}). 
\subsection{Discussion on our empirical results}\label{sec:simu:discussion}
\paragraph{Advantages and disadvantages of SOFA-based guidelines.} SOFA-based guidelines have multiple advantages. First,  the SOFA score has been shown to correlate with COVID mortality~\citep{zhou2020clinical}. They are simple to implement, as they rely on a single score, and can be deployed quickly in a number of different clinical and disaster scenarios before specifics of a new disease are known.  This is why, of the 26 states that have defined triage guidelines in the United States,
15 use the SOFA score to triage patients~\citep{babar2006direct}.
\tb{
 In terms of performance, our numerical experiments from Figure \ref{fig:number-of-deaths} show that SOFA policies may achieve a lower number of deaths than the official NYS guidelines and FCFS guidelines. This suggests that NYS guidelines, also solely relying on SOFA scores, need to be adjusted to the current disaster before being successfully implemented. As highlighted in Figure \ref{fig:number-of-deaths (Changing Reassessment)}, the reassessment times may also be slightly adjusted to a longer period of time between the two reassessment periods.
 It may be possible to improve the performance of triage policies when even more disease-specific data become available, however, this will not solve the problem of how to manage a novel disaster in the future.
}
Intuitively, incorporating more covariates in the decision model may help better inform triage guidelines. In contrast, our data show that using other clinical information (such as demographics and comorbidities) does not necessarily provide an improvement in the number of lives saved.
Additionally,  including comorbidity conditions may further disadvantage those who face structural inequities, since comorbidities such as diabetes and obesity are closely linked to social determinants of health~\citep{cockerham2017social}.  This detrimental impact may erode trust in medical institutions at large~\citep{auriemma2020eliminating}, which in turn may frustrate other critical public health interventions such as vaccination~\citep{bunch2021tale}.
Therefore, it is critical to counterbalance the utilitarian aims of saving more lives with fairness and the duty to patient care.  
This study provides some data to inform the process of choosing to implement (or not) triage guidelines: official guidelines (such as NYS guidelines) need to be re-adjusted to the specifics of COVID-19 patients before being implemented. Otherwise,  they may show little to no improvement compared to FCFS guidelines, and there may not be any ethical justification for unilaterally removing a patient from a ventilator and violating the duty to care for that patient.

\paragraph{Limitations.}
The strength of our analysis is based on real world data during a massive surge in our facility where ventilator capacity reached fullness. There are several limitations to this study. First, the results cannot be applied to other disease states, such as novel viruses that may arise in the future.  The model needs to be re-trained with new data for each specific patient population. Still, our numerical experiments rely on a relatively small number of patients, highlighting that our models may be used relatively early in the course of the disaster. Second, the observations occurred during the height of the pandemic in New York City when little was known about the specific management of COVID-19. Results may be different with better therapeutic options for the disease. However, this is also a strength of the study given that it matches the scenarios in which triage guidelines are meant to be deployed. Third, the results could be different under different surge conditions, e.g. if the rise in number of cases was sharper or more prolonged. Finally, the simulation cannot mimic real-world work flows that might have required some alterations of the movement of ventilators between patients. 

\tb{
From a modeling standpoint, our algorithms may not return an \textit{optimal} tree policy, since they only return Markovian tree policies. Our numerical experiments show little change in performance between the optimal {\em unconstrained} policies and the interpretable policies returned by our algorithms, but it is difficult to generalize this finding to other potential applications. We also note that we were able to compute {\em optimal} classification trees in the subroutine $\fit$ of our algorithms, using the \gosdt{} Python package. We observed empirically that using a heuristic such as \cart, which may be necessary for larger instances, may lead to a slight deterioration in performance.} Additionally, we use a \textit{nominal} MDP model and do not attempt to compute a \textit{robust} MDP policy~\citep{Iyengar,Kuhn,goyal2023robust}. The reason behind this is that we have a fairly small population of patients, so the confidence intervals in the estimation of our transition rates may be quite large, leading to overly conservative policies.  To mitigate this, we estimate the performances of a policy using our simulation model described in Section \ref{subsec:triage-simulation-model}, and not simply the return in the MDP model, which, of course, also depends on our parameter choices. Therefore, even though the triage guidelines computed with our algorithms depend on the (possibly miss-estimated) transition rates and our choices of rewards, the estimation of their performances is not, and is entirely data- and expert-driven, relying solely on our data set of 807 patients hospitalizations and our collaborations with practitioners at Montefiore.
\section{Conclusion}
In this work, we study the problem of computing interpretable resource allocation guidelines, focusing on the triage of ventilators for COVID-19 patients. We present an algorithm that computes a \textit{tree policy} for finite-horizon MDPs by interweaving algorithms for computing classification trees and algorithms solving MDPs.
Developing bounds on the suboptimality of this tree policy compared to the optimal unconstrained policy is an important future direction.  Additionally, we provide valuable insights on the performances of official triage guidelines for policy makers and practitioners.
We found that the New York State (NYS) guidelines may fail to lower the number of deaths, and performs similarly as the simpler First-Come-First-Served allocation rule.  Our empirical study finds that our interpretable tree policies based only on SOFA may improve upon the NYS guidelines, by being less aggressive in exclusions of the patients. 
Some medical studies have found that SOFA may not be informative for Sars-CoV-2 triage decisions~\citep{wunsch2020comparison}. We show that SOFA is not necessarily useless, but this depends greatly on how the decision maker uses it. \tb{Remarkably, our simulations also show that including much more information in the triage and reassessment recommendations (e.g., demographics and comorbidities) may not have any improvement in the survival among the patient population, on the top of raising important ethical issues.}  Overall,  our simulations of various triage guidelines for distributing ventilators during the COVID-19 pandemic revealed serious limitations in achieving the utilitarian goals these guidelines are designed to fulfill.  Guidelines that were designed \textit{prior} to the pandemic need to be reconsidered and modified accordingly to incorporate the novel data available.  Our work can help revise guidelines to better balance competing moral aims. 
\small
\bibliographystyle{plainnat}
\bibliography{ref}
\newpage
\normalsize
\begin{APPENDICES}
\section{Properties of Proposition \ref{prop:properties-optimal-policy}}\label{app:properties-opt-policy}
In this appendix we show the proof of our Proposition \ref{prop:properties-optimal-policy}, which established the main properties of optimal tree policies.
\proof{Proof of Proposition \ref{prop:properties-optimal-policy}.}
\begin{enumerate}
\item  \tb{\textit{The optimal tree policies for $T$ may depend on the initial distribution $\bm{p}_{1}$.} We consider an instance of the Optimal Tree Policy problem represented in Figure \ref{fig:counter-example-1}, where $H=1,  \X_{1} = \{ s_{1},s_{2} \}, \A_{1} = \{ a_{1},a_{2}\}$. The rewards are given by
\[ r_{1,s_{1},a_{1}} = 1,r_{1,s_{1},a_{2}} = 0,r_{1,s_{2},a_{1}} = 0, r_{1,s_{2},a_{2}} = 1. \]
\begin{figure}[h]
\center
\includegraphics[scale=0.5]{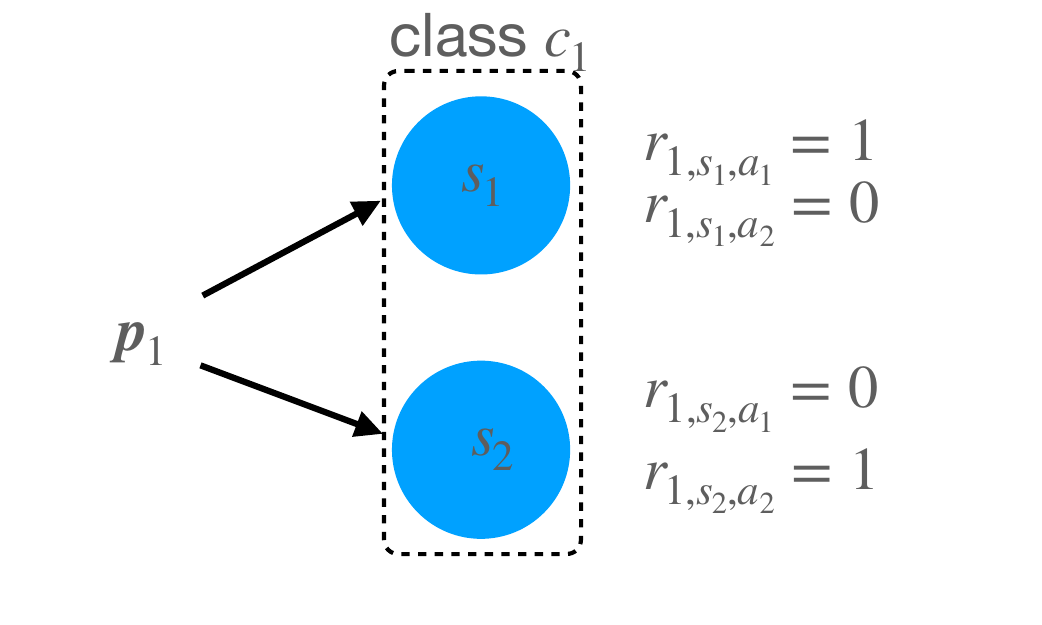}
\caption{Example of an Optimal Tree Policy instance where all optimal policies are dependent on the initial distribution $\bm{p}_{1}$. The same action has to be taken for the states in the class $c_{1}$ defined by the dashed rectangle.}\label{fig:counter-example-1}
\end{figure}
Assume that the set $\TT$ of admissible trees is such that for all tree $T \in \TT$, the states $s_{1},s_{2}$ belong to the same class $c_{1}$. Therefore, the same action ($a_{1}$ or $a_{2}$) has to be taken in both $s_{1}$ and $s_{2}$. Therefore, a tree policy is simply a labeling rule that maps $\{c_{1}\}$ to an action in $\{a_{1},a_{2}\}$. If the initial distribution $\bm{p}_{1}$ over $\{s_{1},s_{2}\}$ is $p_{1,s_{1}}=1, p_{1,s_{2}}=0$, then the optimal action to maximize the return is to choose action $a_{1}$ for the class $c_{1}$. Otherwise, if the initial distribution $\bm{p}_{1}$ is $p_{1,s_{1}}=0, p_{1,s_{2}}=1$, then the optimal action to maximize the return is to choose action $a_{2}$  for the class $c_{1}$. Therefore, the optimal tree policy depends on the initial distribution $\bm{p}_{1}$.}
\item \tb{\textit{The optimal tree policies for $T$ may be history-dependent.} We provide the following simple instance represented in Figure \ref{fig:counter-example-2}, which builds upon the previous example.
We consider an Optimal Tree Policy instance where \[H=2, \X_{1} = \{ s_{1},s'_{1}\}, \A_{1} = \{a_{1}\},\X_{2} = \{ s_{2},s_{3} \}, \A_{2} = \{a_{2},a_{3} \}.\]
\begin{figure}[h]
\center
\includegraphics[scale=0.5]{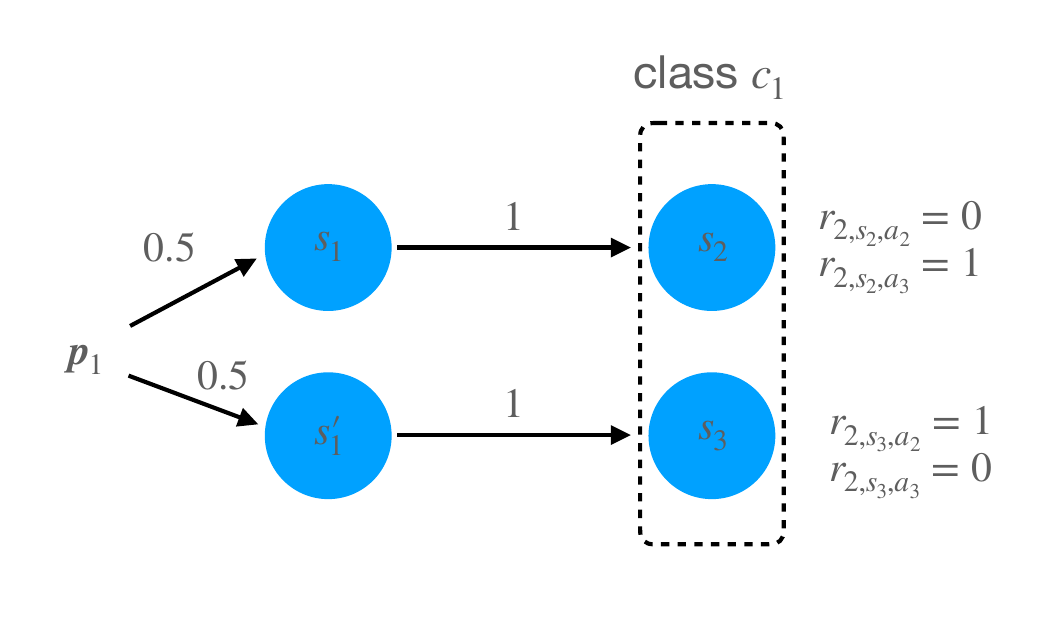}
\caption{Example of an Optimal Tree Policy instance where all optimal policies are history-dependent. The same action has to be taken for the states in the region defined by the dashed rectangle. There are no rewards at $t=1$. The optimal policy at $t=2$ is dependent upon visiting $s_{1}$ or $s'_{1}$ at $t=1$. The transitions probabilities are indicated above the transitions.}\label{fig:counter-example-2}
\end{figure}
There are two states $s_{1}$ and $s'_{1}$ at $t=1$, and $p_{1,s_{1}} = p_{1,s'_{1}} = 0.5$.  There is only one action $a_{1}$ in both $s_{1}$ and $s'_{1}$ and the rewards at $t=1$ are $r_{1,s_{1}a_{1}} = r_{1,s'_{1}a_{1}} = 0$.
There are two states $s_{2},s_{3}$ at period $t=2$ and both belong to the same class $c_{1}$. Therefore, the same action has to be chosen in both states $s_{2}$ and $s_{3}$.
If the decision maker is in $s_{1}$ at period $t=1$, s/he transitions to $s_{2}$ with probability $1$. If the decision maker is in $s'_{1}$ at period $t=1$, s/he transitions to $s_{3}$ with probability $1$.
At period $t=2$, the rewards are given by $r_{2,s_{2},a_{2}} = 0,r_{2,s_{2},a_{3}} = 1, r_{2,s_{3},a_{2}} = 1,r_{2,s_{3},a_{3}} = 0.$
Note that there is only one action $a_{1}$ available at states $s_{1}$ and $s'_{1}$, and that tree policies are a priori history-dependent. Therefore, a tree policy is a map $\{(s_{1},a_{1},c_{1}),(s'_{1},a_{1},c_{1})\} \rightarrow \{a_{2},a_{3}\}$. Additionally, if the history prior to period $t=2$ is $(s_{1},a_{1})$, then the distribution over the states $(s_{2},s_{3})$ visited at $t=2$ is $\left(1,0\right)$. 
Therefore, in this case, the optimal action at $t=2$ is to choose $a_{3}$. However,  if the history prior to period $t=2$ is $(s'_{1},a_{1})$, then the distribution over the states $(s_{2},s_{3})$ visited at $t=2$ is $\left(0,1\right)$, and the optimal action at $t=2$ is to choose $a_{2}$.  Therefore, we see that the optimal decision rule $\pi^{*}_{2}$ for period $t=2$ depends on the history prior to period $t=2$.  The optimal policy, which is history-dependent, achieves a return of $1$. Let us now compute the return of any Markovian (randomized or deterministic) policy. In our simple example, a Markovian policy can be represented by a scalar $\theta \in [0,1]$ representing the probability of choosing action $a_{2}$ in class $c_{1}$. With a small abuse of notation, let us write $R(\theta)$ for the return of a Markovian policy parametrized by $\theta$. With this notation, we have
\begin{align}
R(\theta) & = 0.5 \cdot v_{1,s_{1}} + 0.5 \cdot v_{1,s'_{1}} \label{eq:pf-ex-2-0}  \\
& = 0.5 \cdot v_{2,s_{2}} + 0.5 \cdot v_{2,s_{3}} \label{eq:pf-ex-2-1} \\
& = 0.5 \cdot \left( \theta \cdot (0) + (1-\theta) \cdot 1\right) + 0.5 \cdot \left( \theta \cdot 1 + (1-\theta) \cdot (0) \right) \label{eq:pf-ex-2-2} \\
& =  0.5 \cdot (1-\theta) + 0.5 \cdot \theta \nonumber  \\
& =  0.5 \nonumber
\end{align}
where \eqref{eq:pf-ex-2-0} follows from the definition of the reward and the value functions, \eqref{eq:pf-ex-2-1} follows from the deterministic transition from $s_{1}$ to $s_{2}$ and from $s'_{1}$ to $s_{3}$, and \eqref{eq:pf-ex-2-2} follows from the definition of $\theta$ and the rewards at $t=2$.
We conclude that the return of any Markovian policy is $0.5$, which is strictly smaller than $1$, the return of an optimal history-dependent policy. We therefore conclude that all optimal policies are history-dependent for the example in Figure \ref{fig:counter-example-2}.}
\item \tb{ \textit{There always exists an optimal tree policy for $T$ that is deterministic (even though it may be history-dependent).} Let us consider a sequence of trees $T$ with a randomized optimal tree policy $\pi\opt = \pi\opt_{1}, ..., \pi\opt_{H}$. By definition, there exists a period $t \in [1,H]$ such that $\pi\opt_{t}$ is randomized. We will construct another policy optimal $\pi'$ such that $\pi'_{t}$ is deterministic. 

In particular, we choose $\left(\pi'_{1},...,\pi'_{t-1}\right) = \left( \pi\opt_{1},...,\pi\opt_{t-1} \right)$. This implies that the distribution $\bm{\nu}_{t}$ on the set of states $\X_{t}$ visited at period $t$ are the same for $\pi\opt$ and for $\pi'$, and that the cumulative expected reward $R_{1 \rightarrow t-1}$ from period $t=1$ to period $t=t-1$ are the same.
We also choose $\left( \pi'_{t+1},...,\pi'_{H} \right) = \left( \pi\opt_{t+1},...,\pi\opt_{H} \right)$. This implies that the value functions are the same at $t+1$: $v^{\pi\opt}_{t+1,s} = v^{\pi'}_{t+1,s}, \forall \; s \in \X_{t+1}.$ We write $\bm{v}_{t+1}$ for the vector $\bm{v}_{t+1} = \bm{v}^{\pi'}_{t+1}$.

Now let us choose $\pi'_{t}$ in a deterministic way. In particular, we choose $\pi'_{t}$ for period $t$ as the solution of
\begin{equation}\label{eq:pi-prime-def}
\max_{\pi \in \Delta_{t}} \sum_{c \in [K_{t}]} \sum_{s \in \X_{t}} \nu_{s} \sum_{a \in \A_{t}} 1_{ \{ T_{t}(s) = c \} } \pi_{ca} \left( r_{sa} +  \sum_{s' \in \X_{t+1}} P_{t,sas'} v_{t+1,s'} \right),
\end{equation}
with $\Delta_{t} = \times_{c \in [K_{t}]} \Delta \left(\A_{t}\right)$. Note that we can recover a deterministic decision rule $\pi'_{t}$ because \eqref{eq:pi-prime-def} is a linear program over $\Delta_{t}$.
Adding the constant $ R_{ 1 \rightarrow t-1}$, we see that $\pi'_{t}$ maximizes
\[\pi \mapsto R_{1 \rightarrow t-1} + \sum_{c \in [K_{t}]} \sum_{s \in \X_{t}} \nu_{s} \sum_{a \in \A_{t}} 1_{ \{ T_{t}(s) = c \} } \pi_{ca} \left( r_{sa} + \sum_{s' \in \X_{t+1}} P_{t,sas'} v_{t+1,s'} \right),\]
where $T$ is the tree chosen by $\pi\opt$ for period $t$.
Because $\bm{\nu}_{s}$ is the distribution over the states visited at period $t$ induced by both $\pi'$ and $\pi\opt$,  and because $\bm{v}_{t+1}$ is the value function of both $\pi'$ and $\pi\opt$ after period $t$, we have that
\[ R(\pi') = R_{1 \rightarrow t-1} +  \sum_{c \in [K_{t}]} \sum_{s \in \X_{t}} \nu_{s} \sum_{a \in \A_{t}} 1_{ \{ T_{t}(s) = c \} } \pi'_{t,ca} \left( r_{sa} + \sum_{s' \in \X_{t+1}} P_{t,sas'} v_{t+1,s'} \right).\]
But this means that
\begin{align}
R(\pi') & = R_{ 1 \rightarrow t-1} +  \sum_{c \in [K_{t}]} \sum_{s \in \X_{t}} \nu_{s} \sum_{a \in \A_{t}} 1_{ \{ T_{t}(s) = c \} } \pi'_{t,ca} \left( r_{sa} + \sum_{s' \in \X_{t+1}} P_{t,sas'} v_{t+1,s'} \right) \nonumber \\
& \leq R_{1 \rightarrow t-1} + \sum_{c \in [K_{t}]} \sum_{s \in \X_{t}} \nu_{s} \sum_{a \in \A_{t}} 1_{ \{ T_{t}(s) = c \} } \pi\opt_{t,ca} \left( r_{sa} + \sum_{s' \in \X_{t+1}} P_{t,sas'} v_{t+1,s'} \right) \label{eq:pf-counter-ex-3} \\
& = R(\pi\opt) \label{eq:pf-counter-ex-4}
\end{align}
where \eqref{eq:pf-counter-ex-3} follows from $\pi'_{t}$ as an optimal solution to \eqref{eq:pi-prime-def} and \eqref{eq:pf-counter-ex-4} follows from $R_{1 \rightarrow t-1}$ being the return obtained by $\pi\opt$ between time $1$ and time $t-1$, $\bm{\nu}_{s}$ being the distribution on $\X_{t}$ induced by $\pi\opt$, and $\bm{v}_{t+1}$ being the value function of $\pi\opt$ at time $t+1$.
Therefore, we constructed a deterministic policy $\pi'$ with $R_(\pi') \geq R(\pi\opt)$. Since $\pi\opt$ is an optimal policy, $\pi'$ is also optimal, and $\pi'_{t}$ is deterministic. Therefore, we can always replace any randomized decision rule in $\pi\opt$ with a deterministic decision rule, without decreasing the return. This concludes the proof.}
\end{enumerate}
\hfill \Halmos
\endproof
\tb{
\section{Proof of Section \ref{sec:markovian-policies}}\label{app:proof-markovian}
\proof{Proof of Proposition \ref{prop:markovian-tree-policies-deterministic}.}
Note that the proof of the optimality of deterministic tree policies (last statement of Proposition \ref{prop:properties-optimal-policy}) does not rely on the assumption that the policy $\pi\opt$ is history-dependent. In particular, the exact same proof technics show that if $\pi\opt$ is an optimal Markovian tree policy, and if $\pi\opt_{t}$ is randomized for some $t \in [H]$, we can construct a Markovian tree policy $\pi'$ such that $\pi'_{t}$ is deterministic and $R(\pi')=R(\pi\opt)$. This is because the decision rule $\pi'_{t}$ constructed from \eqref{eq:pi-prime-def} is Markovian and deterministic. \hfill \Halmos
\endproof

\proof{Proof of Proposition \ref{th:equivalence-cart-otp}.}
\begin{enumerate}
\item  \textit{Any instance of \eqref{eq:otp} with $H=1$ can be reduced to an instance of \eqref{eq:cart}.} Let us consider a tree MDP instance $\M=(H,\X_{1},\A_{1},\bm{P}_{1},\bm{r}_{1}, \bm{p}_{1})$ with horizon $H=1$ and $T \in \TT$ a sequence of decision trees in the set of admissible tree $\TT$. Since $H=1$, note that $T = \left( T_{1} \right)$ where $T_{1} \in \T(\X_{1},[K_{1}])$. Let $\pi \in \Pi_{T}$. Following Proposition \ref{prop:properties-optimal-policy}, an optimal tree policy can always be chosen deterministic. Therefore, we will restrict ourselves to deterministic tree policies $\pi: [K_{1}] \rightarrow \A_{1}$. By definition, the return $R(\pi)$ of a deterministic tree policy $\pi$ is
\[ R(\pi) = \sum_{c =1}^{K_{1}} \sum_{s \in \X_{1}} 1_{ \{ T(s) = c \}} \left(p_{1,s}r_{s\pi(c)}\right)\]
with $\left(p_{1s}\right)_{s \in \X_{1}}$ the initial probability distribution over the set of states $\X_{1}$.
This is exactly the expression of the classification error $C_{\sf tree}( \cdot,\cdot )$ as in \eqref{eq:classification-error-randomized}, for an instance of \eqref{eq:cart} built as follow: the number of observation $m$ is $|\X_{1}|$, the set of data points $\{\bm{x}_{1},...,\bm{x}_{m}\}$ is $\X_{1}$, the set of labels $\LL$ is $\A_{1}$, and the misclassification costs $\left(\omega_{i,\ell}\right)_{i,\ell}$ are defined as  $\omega_{i,\ell} = - m \cdot p_{1,\bm{x}_{i}}r_{\bm{x}_{i},\ell}$ for $i=1,...,m$ and $\ell \in \A_{1}$. The policy $\pi$ in the MDP instance $\M$ plays the role of the labeling rule $\mu$ in the instance of \eqref{eq:cart}, and we note that maximizing the return for this instance of \eqref{eq:otp} is equivalent to minimizing the misclassification cost for this instance of \eqref{eq:cart}. Therefore, \eqref{eq:otp} with horizon $H=1$ reduces to \eqref{eq:cart}.
\item \textit{Any instance of \eqref{eq:cart} can be reduced to an instance of  \eqref{eq:otp} with $H=1$.}

Let us consider an instance of \eqref{eq:cart}. In particular, consider a set of data points $\{\bm{x}_{1},...,\bm{x}_{m}\}$  with $\bm{x}_{i} \in \XX \subset \R^{p}$, a finite set of labels $\LL$ and some misclassification costs $\left( \omega_{i,\ell} \right)_{i,\ell}$. We are also given a set of admissible trees $\T\left(\XX,[K]\right)$. The goal is to find a decision tree in $\T\left(\XX,[K]\right)$ and a labeling rule $\mu$ that minimizes the classification error $C_{\sf tree}(T,\mu)$ defined as in \eqref{eq:classification-error-randomized}.

We can reduce this instance of \eqref{eq:cart} to an instance of \eqref{eq:otp}. In particular, we define the MDP instance $\M=(H,\X_{1},\A_{1},\bm{P}_{1},\bm{r}_{1}, \bm{p}_{1})$ as follows:
\begin{enumerate}
\item The horizon is $H=1$.
\item The set of states is the set of possible features: $\X_{1}=\{ \bm{x}_{1},...,\bm{x}_{m}\}$.
\item The set of actions is the set of labels: $\A_{1} = \LL$.
\item The choice of $\bm{P}_{1}$ (transition kernels) does not play any role, because $H=1$.
\item The rewards $\left( r_{1,sa} \right)_{s,a}$  are constructed to reflect the misclassification costs $\left(\omega_{i,\ell} \right)_{i,\ell}$. In particular,  if $s \in \X_{1}$ and $a \in \A_{1}$, by construction $s = x_{i}$ and $a = \ell$ for some $i \in [m]$ and some label $\ell$. The return $r_{sa}$ is chosen as $-\omega_{i,\ell}$.
\item The initial distribution $\bm{p}_{1}$ is uniform across all states.
\end{enumerate}
With this formulation,  for any admissible tree $T \in \T\left(\X,[K]\right)$, a deterministic tree policy $\pi \in \Pi_{T}$ is equivalent to assigning a label in $\LL$ to each point $\bm{x}_{i}$ for every $i \in [m]$,  with the additional constraint that the same action is chosen for all observations belonging to the same class.  The return $R(\pi)$ associated with a policy is the opposite of the classification cost $C_{\sf tree}(T,\pi)$ of the tree $T$.  Therefore,  solving the \eqref{eq:otp} problem (maximizing the return $R(\pi)$) with this MDP instance solves the \eqref{eq:cart} problem (minimizing the classification error).
\end{enumerate}
\hfill \Halmos
\endproof
}
\tb{
\proof{Proof of Proposition \ref{prop:tree-policy-complexity}.}
Our proof is based on a reduction from the minimum vertex cover problem, which is known to be NP-complete (Theorem 7.44,  \cite{sipser1996introduction}). We first formally introduce our notations for graphs, we then define the minimum vertex cover problem ({\sf MVC}), and we finally show how to reduce  ({\sf MVC}) to computing an optimal Markovian tree policy for a specific tree MDP instance.
\paragraph{Graph notations.} A (directed) graph is a pair $(\mcV,\mcE)$ where $\mcV$ is a finite set of vertices, and $\mcE \subseteq \mcV \times \mcV$ is a set of admissible edges. For any edge $e \in \mcE$, we have $e = \left(v_{1},v_{2}\right)$ with $v_{1},v_{2} \in \mcV$, and we write $v^{1}(e) = v_{1}$ and $v^{2}(e) = v_{2}$.
\paragraph{Minimum vertex cover problem.} We are given a graph $(\mcV,\mcE)$. A cover $\mcC$ is a subset of $\mcV$ such that: $\forall \; e \in \mcE, \exists \; v \in \mcC, v \in \{v^{1}(e),v^{2}(e)\}$. 
A cover always exists, since $\mcC=\mcV$ is a cover. A vertex in a cover $\mcC$ is said to be {\em selected}, and an edge $e \in \mcE$ such that there exists $v \in \mcE$ for which $\exists \; v \in \mcC, v \in \{v^{1}(e),v^{2}(e)\}$ is said to be {\em covered}. The  Minimum Vertex Cover problem ({\sf MVC}) is to find a cover with the smallest cardinality.
\paragraph{Reduction of {\sf MVC} to \eqref{eq:otp-markov}.} We now show how to reduce the problem {\sf MVC} to the problem of computing an optimal Markovian tree policy compatible with a given sequence of decision trees $T$. Let $(\mcV,\mcE)$ an instance of {\sf MVC} and consider the MDP instance represented in Figure \ref{fig:MDP-np-hard}. There are $H=3$ periods. The states at each period are represented in Figure \ref{fig:MDP-np-hard-instance}. The set $\X_{1}$ of states at time $t=1$ is $\X_{1}=\mcV \bigcup \{v^{1}(e) \; | \; e \in \mcE\}$. The set $\X_{2}$ of states at time $t=2$ is $\X_{2}=\mcV \bigcup \{v^{2}(e) \; | \; e \in \mcE\} \bigcup \{ \sf selected, covered \}$. The set $\X_{3}$ of states at time $t=3$ is the set of terminal states $\{ \sf selected, not \; selected, covered, not \; covered\}$, where there are no actions to choose. The initial distribution is uniform over the state in $\X_{1}$. There are two actions $\{0,1\}$ to be chosen at time $t=1$ and $t=2$. The transitions for each of these actions are deterministic and are represented in Figure \ref{fig:MDP-np-hard-select} and Figure \ref{fig:MDP-np-hard-no-select}. Intuitively, choosing action $1$ corresponds to selecting a vertex or covering an edge. There is no action to  be chosen at the terminal states at $t=3$. The instantaneous rewards are defined as follows:
\begin{itemize}
\item For any state in $\X_{1}$ or $\X_{2}$, the instantaneous reward is $0$.
\item At $t=3$, there is a reward of $0$ for the state {\sf selected}, a reward of $1/(|\mcV|+1)$ for the state {\sf not selected}, a reward of $1$ for the state {\sf covered} and a reward of $0$ for the state {\sf not covered}.
\end{itemize}
Note that with this definition of the instantaneous reward, the decision maker is penalized for selecting a vertex and for not covering an edge, but rewarded for covering an edge and not selecting a vertex.

We now define the tree constraints that will ensure that covered edges exactly correspond to edges attached to selected vertices.
In particular, we want to enforce that at time $t=1$, the action chosen at a vertex $v \in \mcV$ is the same at all states $v^{1}(e)$ for which $v^{1}(e) = v$. We also want to enforce that at time $t=2$, the action chosen at a vertex $v \in \mcV$ is the same at all states $v^{2}(e)$ for which $v^{2}(e) = v$. Note that our definition of decision tree (Definition \ref{def:decision-tree}) considers univariate splits over components of a vector. Let us map each state $v$ corresponding to a vertex $v \in \mcV$ at time $t=1$ or $t=2$ to a vector $\bm{x}^{v} \in \R^{|\mcV|}$, with $x^{v}_{v}=1$ and $x^{v}_{v'} = 0, \forall \; v' \in \mcV \setminus \{v\}$. We map each state $v^{1}(e)$ corresponding to an edge $e \in \mcE$ at time $t=1$ to a vector $\bm{x}^{1,e}$ defined as $x^{1,e}_{v^{1}(e)}=1$ and $x^{1,e}_{v'} = 0, \forall \; v' \in \mcV \setminus \{v^{2}(e)\}$. 
We define the same embedding at time $t=2$ to map each state $v^{2}(e)$ corresponding to an edge $e \in \mcE$ to a vector $\bm{x}^{2,e}$ defined, with $x^{2,e}_{v^{2}(e)}=1$ and $x^{2,e}_{v'} = 0, \forall \; v' \in \mcV \setminus \{v^{1}(e)\}$. With these notations, let us consider the decision tree $T$ defined as in Figure \ref{fig:np-hard-tree-constraint} for the tree constraints at $t=1$ and $t=2$.

We will show that finding an optimal Markovian policy that is compatible with the sequence of decision trees $(T,T)$ solves {\sf MVC}, i.e., it provides a cover that selects a minimum number of vertices. Let $\pi=(\pi_{1},\pi_{2})$ be such a policy. Up to the multiplicative constant $1/(|\mcV|+|\mcE|)$ (appearing because of the uniform initial distribution), the return of $\pi$ is equal to the sum of the values of the states in $\X_{1} = \mcV \bigcup \{v^{1}(e) \; | \; e \in \mcE\}$. 

Let us first compute the values of the states in $\X_{1}$ corresponding to vertices $ v \in \mcV$. We can write $\mcV = \mcV_{0}(\pi) \bigcup \mcV_{1}(\pi)$, with $\mcV_{0}(\pi)$ the set of states not selected at $t=1$ and $\mcV_{1}(\pi)$ the set of states selected at $t=1$: \[\mcV_{0}(\pi) = \{ v \in \mcV \; | \; \pi_{1}(v) = 0\},\mcV_{1} = \{ v \in \mcV \; | \; \pi_{1}(v) = 1\}.\]
Any state in $\mcV_{1}(\pi)$ terminates in {\sf selected} at $t=3$, ending up with a value of $0$. The states in $\mcV_{0}(\pi)$ can be divided in $\mcV_{0}(\pi) = \mcV_{00}(\pi) \bigcup \mcV_{01}(\pi)$ with $\mcV_{00}(\pi)$ the set of states not selected at $t=1$ and not selected at $t=2$, and $\mcV_{01}(\pi)$ the set of states not selected at $t=1$ but selected at $t=2$: \[\mcV_{00}(\pi) = \{ v \in \mcV_{0}(\pi) \; | \; \pi_{2}(v) = 0\},\mcV_{01}(\pi) = \{ v \in \mcV_{0}(\pi) \; | \; \pi_{2}(v) = 1\}.\] The states in $\mcV_{00}(\pi)$ terminate in {\sf not selected} at $t=3$ and the states in $\mcV_{01}(\pi)$ terminate in {\sf selected} at $t=3$. Therefore, the value of any $v \in \mcV_{01}(\pi)$ is $0$, while the value for any $v \in \mcV_{00}(\pi)$ is $1/(|\mcV|+1)$. 

We now compute the values of the states in $\X_{1}$ corresponding to $\{v^{1}(e) \; | \; e \in \mcE\}$. We use the notations $\mcV_{1}(\pi),\mcV_{0}(\pi),\mcV_{00}(\pi),\mcV_{01}(\pi)$ introduced in the previous paragraph. Let $e \in \mcE$.
If the vertex corresponding to $v^{1}(e) \in \mcV$ is in $\mcV_{1}(\pi)$,  then because of the tree constraints we also have that $\pi_{1}(v^{1}(e)) = 1$: the state $v^{1}(e)$ terminates in {\sf covered}  at $t=3$, yielding a value of $1$ for $v^{1}(e)$. Otherwise, $\pi_{1}(v^{1}(e)) = 0$. Because of the deterministic transitions represented in Figure \ref{fig:MDP-np-hard-no-select}, the value of $v^{1}(e)$ is the value of $v^{2}(e)$. If the vertex corresponding to $v^{2}(e) \in \mcV$ is in $\mcV_{01}(\pi)$, the state $v^{1}(e)$ terminates in {\sf covered} at $t=3$ and the value of $v^{1}(e)$ is $1$. Otherwise, the state $v^{2}(e)$ transitions to {\sf not covered} at $t=3$ and the value of $v^{1}(e)$ is $0$.

Overall, we have shown that the return $R(\pi)$ of a Markovian tree policy $\pi$ is 
\[ R(\pi)=\frac{1}{|\mcE| + |\mcV|} \times \left( \frac{|\mcV_{00}(\pi)|}{|\mcV|+1} + |\{ e \in \mcE \; | \; v^{1}(e) \in \mcV_{1}(\pi) \text{ or } v^{2}(e) \in \mcV_{1}(\pi) \}| \right).\]
Note that the multiplicative factor $1/(|\mcE| + |\mcV|)$ comes from the initial distribution $\bm{p}_{1}$ being uniform over $\X_{1}$.
Let $\mcC(\pi) = \{v \in \mcV \; | \; \pi_{1}(v) = 1 \text{ or } \pi_{2}(v)=1\} \subseteq \mcV$. Then $\mcV_{00}$ corresponds to the set of vertices not selected by $\mcC(\pi)$ and $\{ e \in \mcE \; | \; v^{1}(e) \in \mcV_{1}(\pi) \text{ or } v^{2}(e) \in \mcV_{1}(\pi) \}$ corresponds to the set of edges covered by $\mcC(\pi)$.
Because $|\mcV|/(|\mcV|+1|) < 1$, we see that for any optimal Markovian tree policy $\pi\opt$, $\mcC(\pi\opt)$ must be a cover: $\{ e \in \mcE \; | \; v^{1}(e) \in \mcV_{1}(\pi\opt) \text{ or } v^{2}(e) \in \mcV_{1}(\pi\opt) \} = \mcE$. Additionally, since $\pi\opt$ maximizes $\pi \mapsto R(\pi)$, it must select the smallest number of vertices possible, with $\mcC(\pi\opt)$ remaining a cover. This shows that $\mcC(\pi\opt)$ is an optimal solution to {\sf MVC}. This shows that any instance of {\sf MVC} can be reduced to computing an optimal Markovian tree policies for the tree MDP instance represented in Figure \ref{fig:MDP-np-hard}. This concludes the proof of Proposition \ref{prop:tree-policy-complexity}.
\begin{figure}[htb]
\center
 \begin{subfigure}{0.48\textwidth}
\centering
         \includegraphics[width=\linewidth]{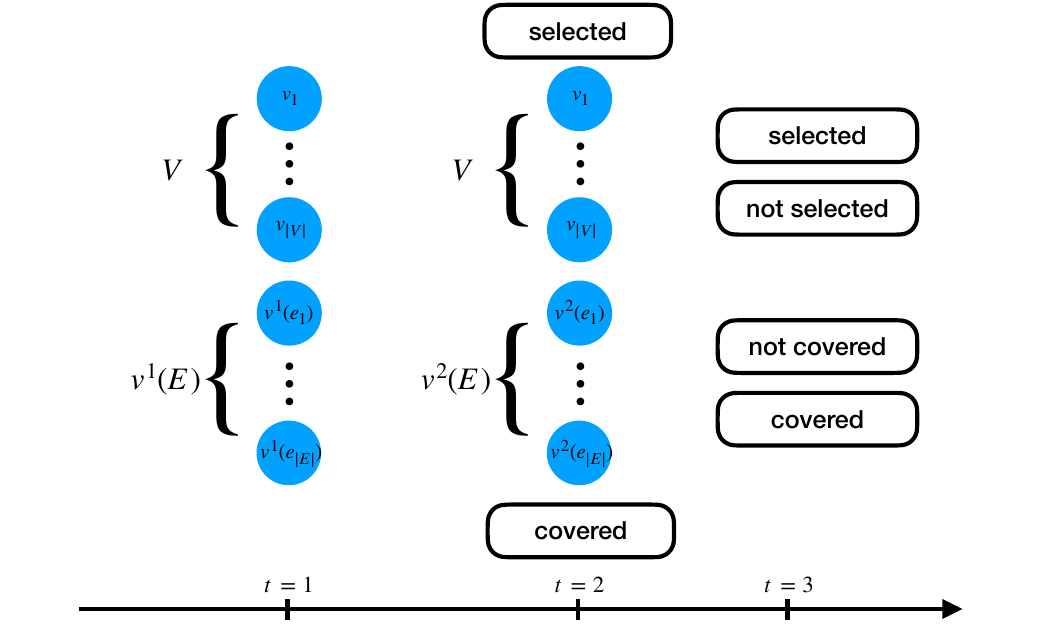}
         \caption{MDP instance.}
         \label{fig:MDP-np-hard-instance}
  \end{subfigure}
   \begin{subfigure}{0.48\textwidth}
\centering
         \includegraphics[width=\linewidth]{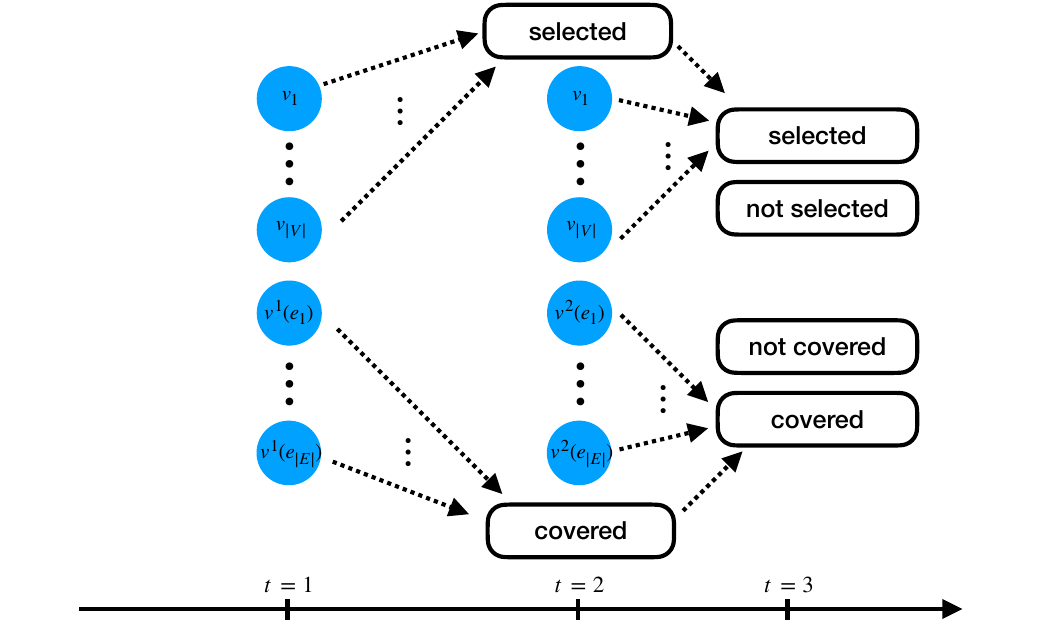}
         \caption{Transition for $a=1$.}
         \label{fig:MDP-np-hard-select}
  \end{subfigure}
     \begin{subfigure}{0.48\textwidth}
\centering
         \includegraphics[width=\linewidth]{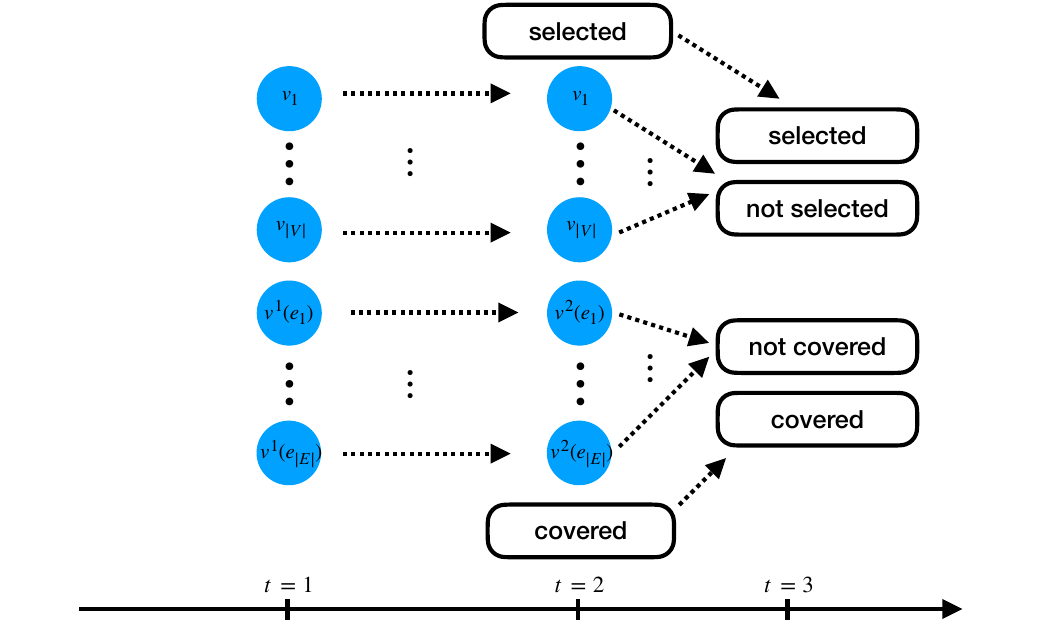}
         \caption{Transition for $a=0$.}
         \label{fig:MDP-np-hard-no-select}
  \end{subfigure}
     \begin{subfigure}{0.48\textwidth}
\centering
         \includegraphics[width=\linewidth]{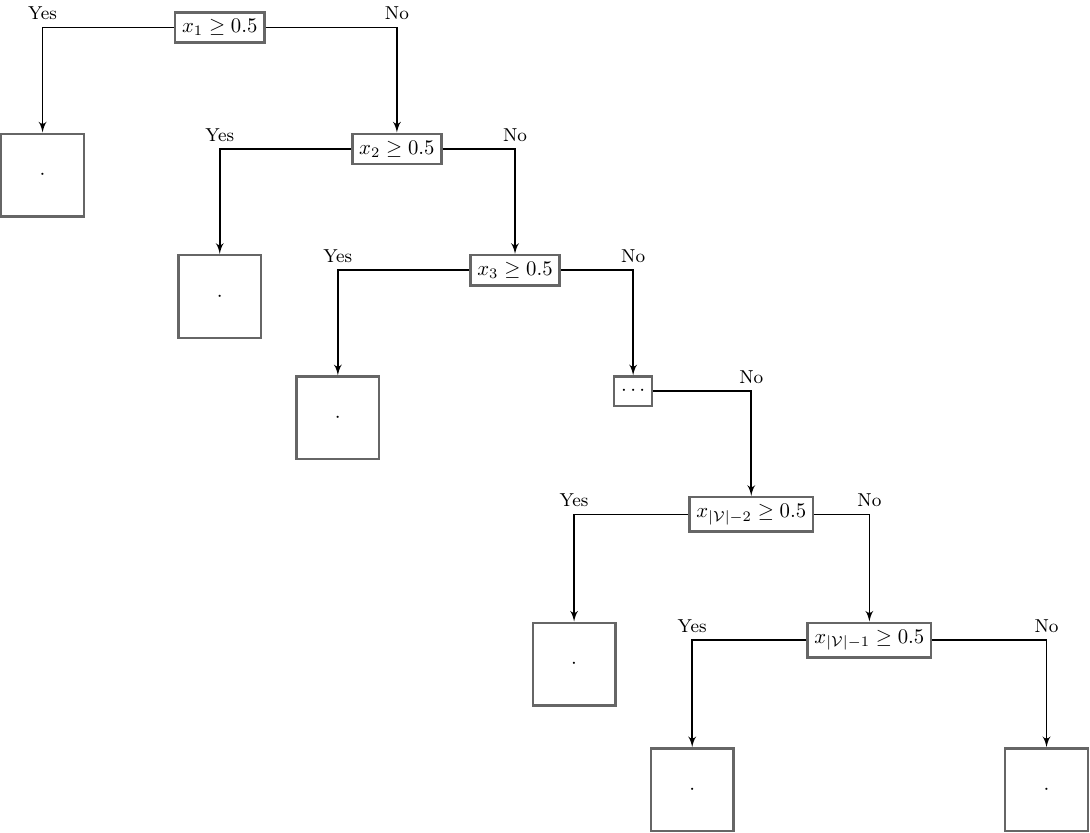}
         \caption{Decision tree $T$.}
         \label{fig:np-hard-tree-constraint}
  \end{subfigure}
  \caption{States, transitions and tree constraints (at both $t=1$ and $t=2$) in our tree MDP instance for solving the vertex cover problem.}
  \label{fig:MDP-np-hard}
\end{figure}
\hfill \Halmos
\endproof
}
\section{Details on the New York State guidelines for triage and reassessment}\label{app:details-NYS}
In New York State (NYS), the Crisis Standards of Care guidelines were codified by the NYS Taskforce on Life and the Law in the 2015 Ventilator Triage Guidelines~\citep{zuckerventilator}.  These guidelines outline clinical criteria for triage, including exclusion criteria and stratification of patients using the Sequential Organ Failure Assessment (SOFA) score. We present these guidelines in a tree policy form in Figure \ref{fig:NYS-guidelines}.
The goal of the NYS guidelines is to save the maximum number of lives. Note the important distinction with the number of \textit{life-years} saved; age should only be used as a tie-breaker. In particular,  the NYS does not use categorical exclusion of specific patients sub-populations, based on demographics (such as age, BMI) or comorbid health conditions (such as history of diabetes or congestive heart failures).

In particular,  prior to reaching ventilator capacity constraint, ventilators are allocated first-come-first-served. When the capacity constraint is reached, new patients are triaged using SOFA scores. i.e., at $t=0$ day:
\begin{itemize}
\item Those with SOFA $>$ 11 are categorized as blue (lowest priority) and do not receive a ventilator.
\item Those with 1 $<$ SOFA $<$ 8 are categorized as red (highest priority) and receive a ventilator first.
\item Those with SOFA between 8 and 11 are categorized as yellow (intermediate priority) and receive a ventilator as long as they are available and all patients in the red category have received a ventilator.
\item Those with SOFA = 0 are categorized as green (lowest priority, same as blue) and do not receive a ventilator.
\end{itemize}
\tb{ The NYS guidelines for $t=0$ day of intubation are represented in Figure \ref{fig:NYS-time-1}.
 At $t=2$ days and $t=5$ days, patients on ventilators are reassessed and categorized as blue, yellow or red, depending of the current SOFA score and the change since the previous SOFA score. This is represented in Figure \ref{fig:NYS-time-48} and Figure \ref{fig:NYS-time-120}.} Patients in blue and yellow categories are removed from the ventilator if a new patient with a higher priority requires one; patients in the blue category are removed first. 
 
 \tb{
 Importantly, note that in our representation of the NYS policies, we translate the priority classes \textit{blue} and \textit{green} as \textit{low} priority, \textit{yellow} as medium (med.) priority, and \textit{red} as \textit{high} priority. This is because patients in both \textit{blue} and \textit{green} priority classes will be proactively extubated.}
\begin{figure}[htb]
\center
 \begin{subfigure}{0.45\textwidth}
\centering
         \includegraphics[width=0.9\linewidth]{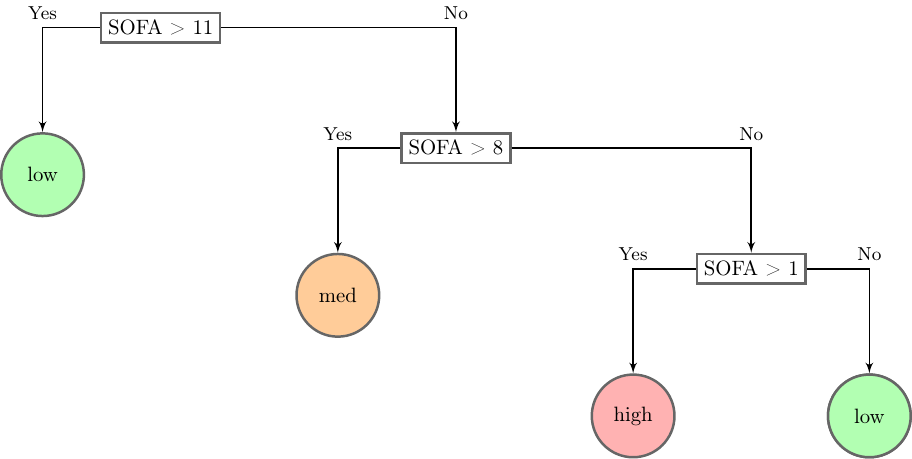}
         \caption{Triage.}
         \label{fig:NYS-time-1}
  \end{subfigure}
   \begin{subfigure}{0.45\textwidth}
\centering
         \includegraphics[width=0.9\linewidth]{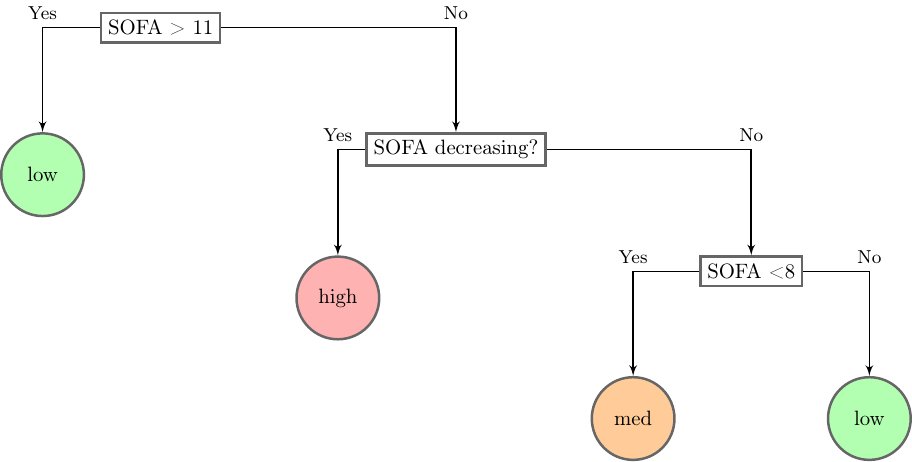}
         \caption{Reassessment at $t=2$ days.}
         \label{fig:NYS-time-48}
  \end{subfigure}
     \begin{subfigure}{0.45\textwidth}
\centering
         \includegraphics[width=0.9\linewidth]{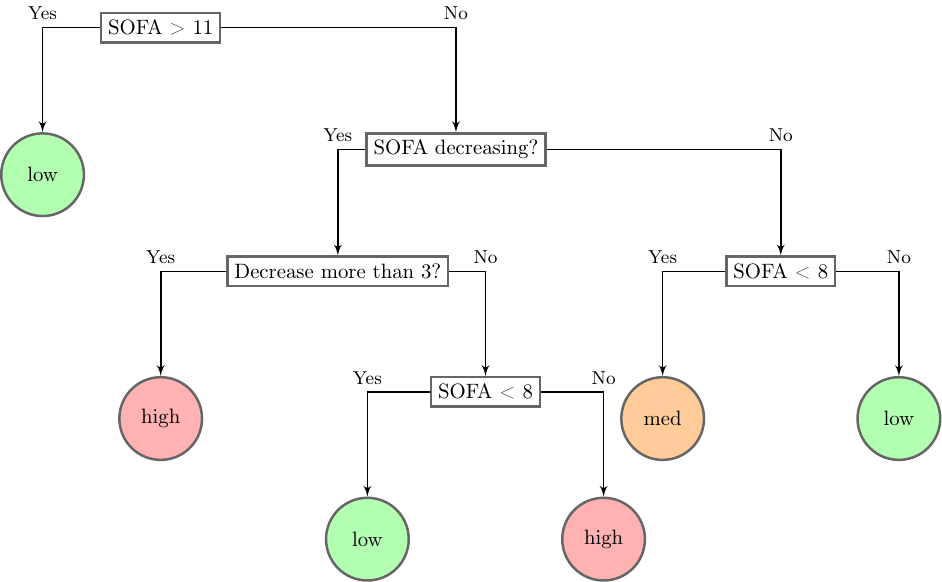}
         \caption{Reassessment at $t=5$ days.}
         \label{fig:NYS-time-120}
  \end{subfigure}
  \caption{Representation of the NYS guidelines as tree policies.}\label{fig:NYS-guidelines}
\end{figure}

\tb{
\section{Details on the clusters for our MDP models}\label{app:details cluster}
In this appendix, we provide more details on the clusters built for our MDP model with SOFA+AGE and SOFA+COVARIATES as sets of states. We use the {\sf KMEANS} function from the {\sf scikitlearn} Python package to build the $10$ clusters for our population of patients. Recall that we cluster the patients only using the information that remain constant during the hospital stay, e.g. age, BMI, dementia status, etc.. The summary statistics for our clusters are given in Table \ref{tab:cluster-sofa-age} for using a state set based on SOFA+AGE and in Table \ref{tab:cluster-sofa-cov} for using a state set based on SOFA+COVARIATES. 

The clusters from Table \ref{tab:cluster-sofa-age} are relatively easy to interpret. Since there is only one covariate used for clustering (the age of the patient), we simply obtains a partition of the entire range of observed age in our data $\{21,...,97\}$, into the $10$ subintervals $\{21,...,35\}$, $\{36, ... ,45\}$, ..., $\{85,...,97\}$. The clusters from Table \ref{tab:cluster-sofa-cov} include many covariates and are therefore more difficult to interpret, although we can notice the discrepancies between the clusters for some specific covariates, e.g. the average age of $29.4$ in cluster $1$ vs. the average age of $86.0$ in cluster $10$, or the much larger proportion of patients with dementia in cluster $9$ (25.5 \%) and cluster $10$ (49.3 \%) compared to the rest of the clusters.

\begin{table}[htb]
\centering
\footnotesize
 \begin{tabular}{||l |  c | c | c| c| c| c| c| c| c| c ||}
 \hline
Information $\setminus$ Cluster number& 1 & 2 & 3 & 4 &  5& 6& 7& 8& 9& 10 \\ [0.5ex]
 \hline\hline
 Number of patients (n) & 25 & 59 & 63 & 106 & 115 & 89 & 124 & 118 & 68 & 40 \\
 \hline
 Average age (year) & 29.4 & 41.9 & 49.5  & 55.6 & 60.9  & 65.4 & 69.7  & 75.3  & 80.9 & 88.3  \\
Minimum age (year) & 21.0 & 36.0 & 46.0 & 53.0 & 59.0 & 64.0 & 68.0 & 73.0 & 79.0 & 85.0\\
Maximum age (year) & 35.0 & 45.0 & 52.0  & 58.0 & 63.0 & 67.0 & 72.0 & 78.0 & 84.0 & 97.0\\
 [1ex]
 \hline
 \end{tabular}
 \caption{Summary statistics for our $10$ clusters only using age.}
 \label{tab:cluster-sofa-age}
\end{table}

\begin{table}[htb]
\centering
\footnotesize
\begin{tabular}{||l |  c | c | c| c| c| c| c| c| c| c ||}
 \hline
Information $\setminus$ Cluster number& 1 & 2 & 3 & 4 &  5& 6& 7& 8& 9& 10 \\ [0.5ex]
 \hline\hline
 Number of patients (n) & 23 & 30 & 70 & 106 & 101 & 35 & 156 & 113 & 106 & 67 \\
 \hline
 Average age (year) & 29.4  & 43.5 & 44.5 & 56.2 & 58.2 & 61.5  & 66.8 & 73.3 & 76.2  & 86.0  \\
Minimum age (year) & 21.0 & 31.0 & 37.0 & 49.0 & 50.0 & 54.0 & 61.0 & 66.0 & 72.0 & 81.0\\
Maximum age (year) & 36.0 & 52.0 & 51.0  & 61.0 & 65.0 & 70.0 & 71.0 & 84.0 & 81.0 & 97.0\\
\hline 
 Average Charlson score & 1.1  & 2.2 & 2.2 & 2.5 & 2.8 & 3.7  & 3.3 & 2.6 & 3.5  & 3.6  \\
  Average BMI & 31.1  & 46.6 & 28.7 & 25.7 & 35.0 & 46.5  & 26.7 & 36.4 & 26.3  & 25.7  \\
  Malignancy (\%) & 0.0 \% & 0.0 \% & 1.4 \% & 6.6 \% & 2.0 \% & 11.4 \% & 5.8 \% & 0.9 \% & 7.6 \% & 10.5 \%  \\ 
 Tumor (\%) & 4.4 \% & 0.0 \% & 0.0 \% & 3.8 \% & 0.0 \% & 0.0 \% & 1.9 \% & 0.0 \% & 0.0 \% & 3.0 \%  \\ 
  Diabetes (\%) & 8.7 \% & 26.7 \% & 21.4 \% & 17.9 \% & 34.7 \% & 42.7 \% & 41.7 \% & 35.4 \%  & 38.7 \% & 31.3 \% \\ 
  Dementia (\%) & 0.0 \% & 0.0 \% & 0.0 \% & 4.7 \%  & 3.0 \% & 2.9 \% & 10.3 \% & 6.2 \% & 25.5 \% & 49.3 \%\\ 
 Congestive heart failure (\%) & 13.0 \%  & 16.7 & 11.4 \% & 14.2 \% & 17.8 \% & 40.0 \% & 16.7 \% & 13.3 \% & 23.6 \% & 30.0 \% \\ 
  HIV (\%) & 0.0  \% & 0.0  \% & 1.4 \% & 1.9 \% & 3.0 \% & 0.0 \%  & 1.3 \% & 0.9 \% & 1.9 \%  & 0.0 \%  \\ 
  Liver disease (\%) & 4.4 \% & 0.0  \% & 1.4 \% & 3.8 \% & 2.0 \% & 2.9 \% & 0.1 \% & 0.0 \% & 0.9 \% & 1.5 \% \\ 
 [1ex]
 \hline
 \end{tabular}
 \caption{Summary statistics for our $10$ clusters only using age, demographics and comorbities.}
 \label{tab:cluster-sofa-cov}
\end{table}
}

\clearpage

\section{Tree policies and MDP computed by our  MDP model}\label{app:tree-policies}
\tb{
We present here the tree policies computed by our algorithms. We describe the MDP policies computed by our algorithms when the state space only includes SOFA, but we do not attempt to describe the MDP policies when the state space is SOFA+AGE and SOFA+COVARIATES, as the large dimensions of the state spaces make them difficult to describe. We note that the tree policies returned by Algorithm \ref{alg:optimize then fit}, which fits trees to each decision rule, achieve a very good accuracy and provides some intuition on the optimal unconstrained policies. }
\tb{
\paragraph{MDP policies (SOFA).}
The MDP policy only based on SOFA does not exclude any patients at triage. After 2 days of intubation, it only excludes patients with a SOFA score of $17$, larger than the SOFA score at intubation. After 5 days of intubation, it only excludes four types of patients: a patient with a decreasing SOFA score compared to the last reassessment and a SOFA score of $11$ or $12$, or a patient with an increasing SOFA score since the last period and a SOFA score of $14$ or $15$. 
}
\paragraph{Tree policies.}
\tb{For our MDP instance represented in Figure \ref{fig:MDP-model}, a tree policy is a sequence of three trees $T_{1},T_{2},T_{3}$ (one for triage and two for reassessments), and labeling rules $\mu_1,\mu_2,\mu_3$ mapping patients to the action {\sf maintain} or the action {\sf exclude}. For the sake of comparisons with the NYS guidelines represented in Figure \ref{fig:NYS-guidelines}, which assigns patients to priority classes ({\sf high, medium, low}), we decided to write {\sf high} for the action {\sf maintain} and {\sf low} for the action {\sf exclude} in the figures representing our tree policies. This also improves the readability of the figures given the small amount of space available at each leaf of the trees.} 

\tb{When the state space is SOFA, both Algorithm \ref{alg:optimize then fit} and Algorithm \ref{alg:optimize and fit} return the same tree policies, represented in Figure \ref{fig:tree-sofa}. The tree policies returned by Algorithm \ref{alg:optimize then fit} when the state space is SOFA+AGE or SOFA+COVARIATES are represented in Figure \ref{fig:tree-sofa-age-algo-1} and Figure \ref{fig:tree-sofa-cov-algo-1}, and the tree policies returned by Algorithm \ref{alg:optimize and fit} when the state space is SOFA+AGE or SOFA+COVARIATES are represented in Figure \ref{fig:tree-sofa-age-algo-2} and Figure \ref{fig:tree-sofa-cov-algo-2}. We note that in some cases, a small SOFA score will lead to exclusion from ventilator treatment (e.g. at triage in the tree policy based on SOFA+AGE returned by Algorithm, see Figure \ref{fig:tree-sofa-age-time-1-algo-1}), while in most other cases, a small SOFA score is an indication that the patient has a good chance of survival and is used to determine intubation. We attribute this contrast to the trees returned by Algorithm \ref{alg:optimize then fit} and Algorithm \ref{alg:optimize and fit} being obtained by fits to {\em unconstrained} policies that are difficult to interpret.
\paragraph{Accuracy of the trees.}
We also report here the accuracy obtained in the $\fit$ functions for the trees computed by Algorithm \ref{alg:optimize then fit} and Algorithm \ref{alg:optimize and fit}. Note that while Algorithm \ref{alg:optimize then fit} fits trees to {\em different} policies: Algorithm \ref{alg:optimize then fit} fits trees to the optimal unconstrained policies, while Algorithm \ref{alg:optimize and fit} fits trees to a policy that is not necessarily unconstrained optimal (see Proposition \ref{prop:analysis of our algorithms}). We present the accuracy depending on the state space used in the MDP model.
\begin{itemize}
    \item {\em SOFA.} Recall that here Algorithm \ref{alg:optimize then fit} and Algorithm \ref{alg:optimize and fit} returns the same tree policy (Figure \ref{fig:tree-sofa}). The accuracy is 1.0 at triage, 1.0 at $t=$ 2 days, and $0.98$ at $t=$ 5 days. Therefore, the tree policy from Figure \ref{fig:tree-sofa} almost exactly captures the decisions from the optimal unconstrained policy (MDP SOFA).
    \item {\em SOFA+AGE.} For the trees returned by Algorithm \ref{alg:optimize then fit} at triage, $t=$ 2 days and $t=$ 5 days (Figure \ref{fig:tree-sofa-age-algo-1}), the accuracies are 0.87, 0.84, 0.84. For the trees returned by Algorithm \ref{alg:optimize and fit} (Figure \ref{fig:tree-sofa-age-algo-2}), the accuracies are 0.90,0.79, 0.84.
    \item {\em SOFA+COV.} For the trees returned by Algorithm \ref{alg:optimize then fit} at triage, $t=$ 2 days and $t=$ 5 days (Figure \ref{fig:tree-sofa-cov-algo-1}), the accuracies are 0.99, 0.83, 0.81. For the trees returned by Algorithm \ref{alg:optimize and fit} (Figure \ref{fig:tree-sofa-cov-algo-2}), the accuracies are 0.97,0.86, 0.81.
\end{itemize}

}


\begin{figure}[htb]
\center
 \begin{subfigure}{0.7\textwidth}
\centering
         \includegraphics[width=0.9\linewidth]{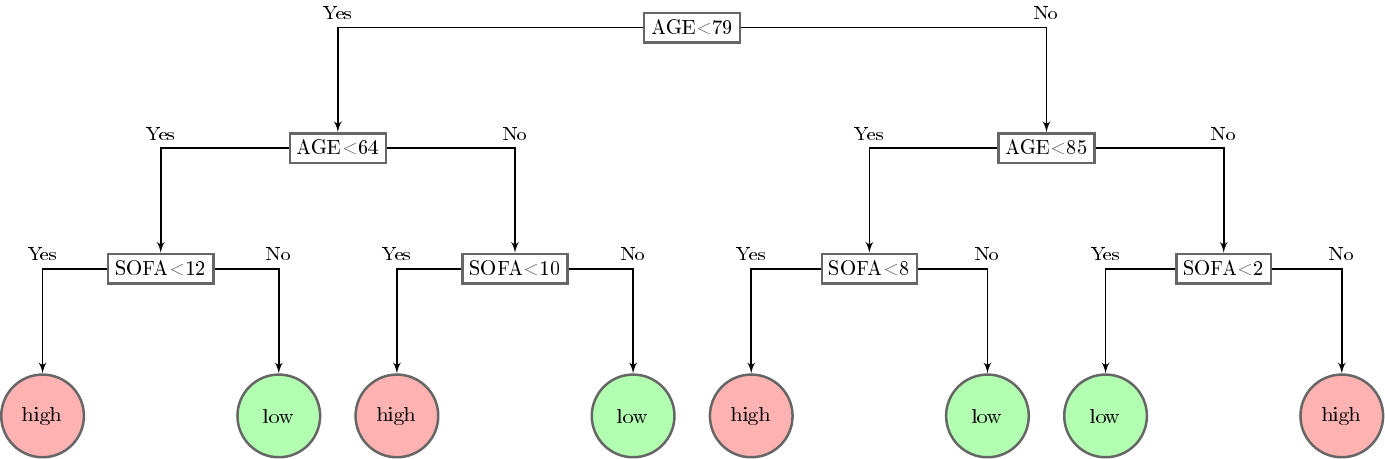}
         \caption{Triage.}
         \label{fig:tree-sofa-age-time-1-algo-1}
  \end{subfigure}
   \begin{subfigure}{0.7\textwidth}
\centering
         \includegraphics[width=0.9\linewidth]{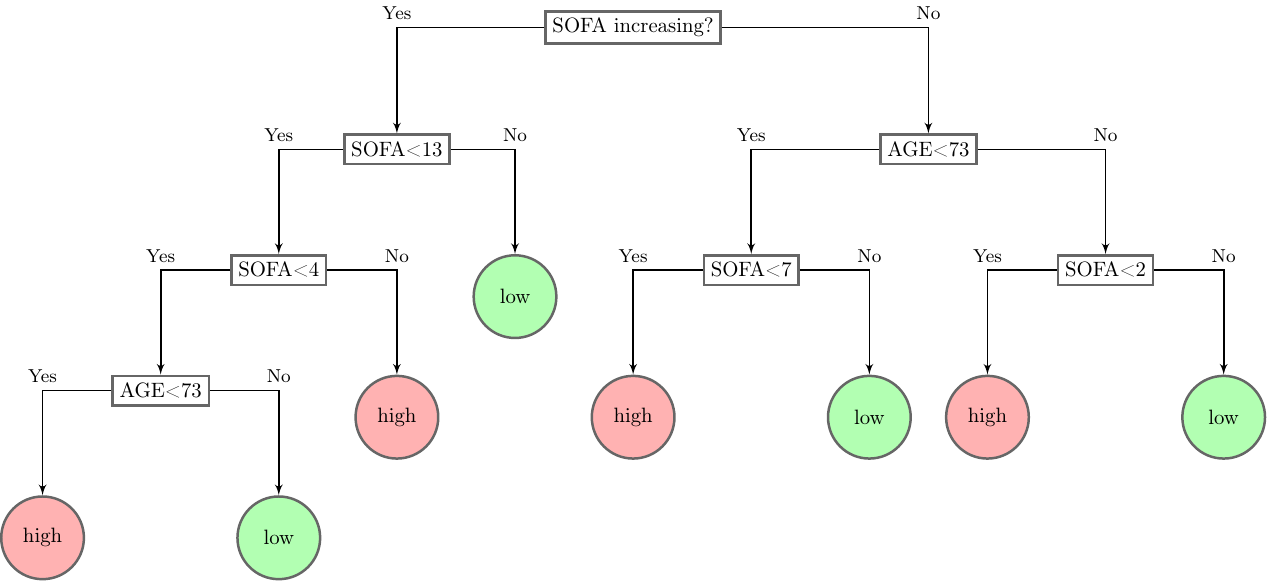}
         \caption{Reassessment at $t=$ 2 days.}
         \label{fig:tree-sofa-age-time-48-algo-1}
  \end{subfigure}
     \begin{subfigure}{0.7\textwidth}
\centering
         \includegraphics[width=0.9\linewidth]{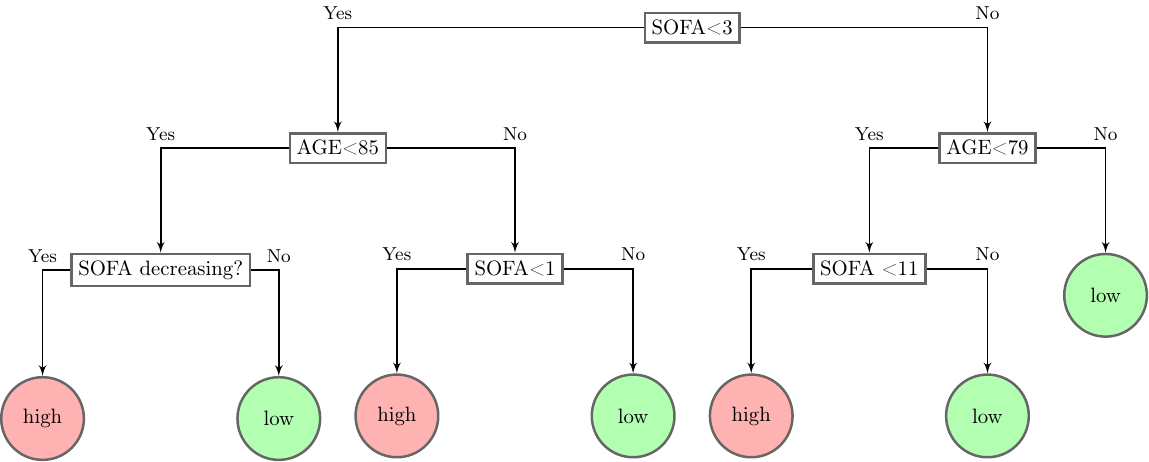}
         \caption{Reassessment at $t=$ 5 days.}
         \label{fig:tree-sofa-age-time-120-algo-1}
  \end{subfigure}
  \caption{Tree policies returned by Algorithm \ref{alg:optimize then fit}, based on SOFA and age.}\label{fig:tree-sofa-age-algo-1}
\end{figure}

\begin{figure}[htb]
\center
 \begin{subfigure}{0.7\textwidth}
\centering
         \includegraphics[width=0.9\linewidth]{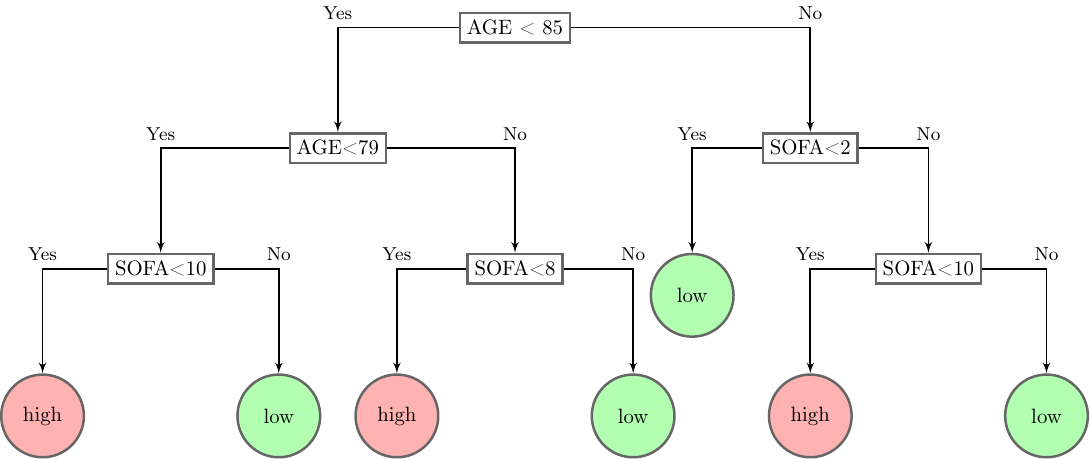}
         \caption{Triage.}
         \label{fig:tree-sofa-age-time-1-algo-2}
  \end{subfigure}
   \begin{subfigure}{0.7\textwidth}
\centering
         \includegraphics[width=0.9\linewidth]{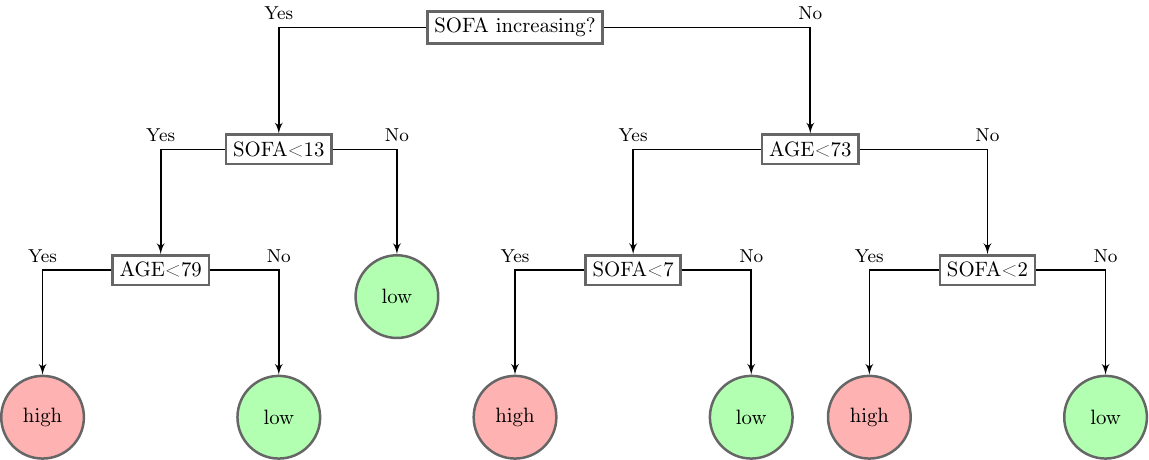}
         \caption{Reassessment at $t=$ 2 days.}
         \label{fig:tree-sofa-age-time-48-algo-2}
  \end{subfigure}
     \begin{subfigure}{0.7\textwidth}
\centering
         \includegraphics[width=0.9\linewidth]{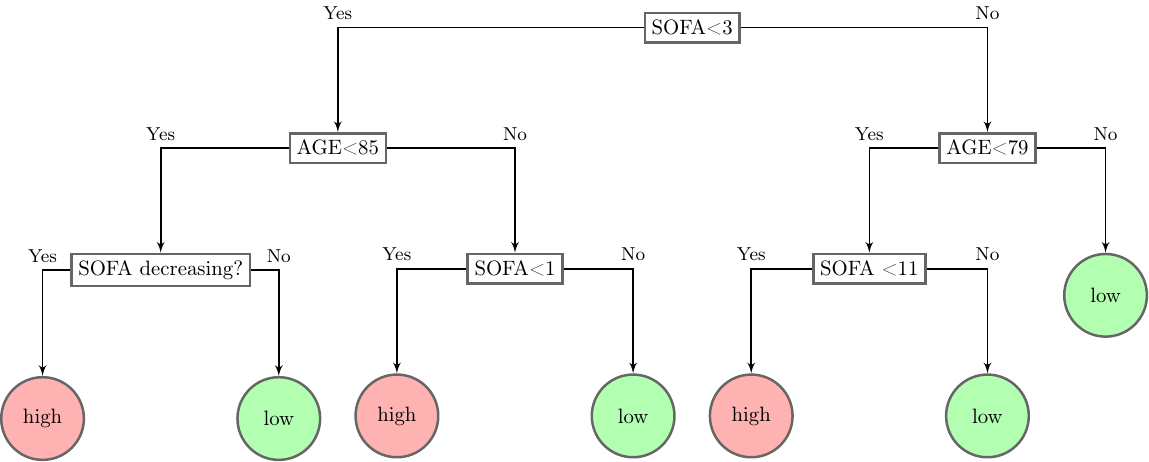}
         \caption{Reassessment at $t=$ 5 days.}
         \label{fig:tree-sofa-age-time-120-algo-2}
  \end{subfigure}
  \caption{Tree policies returned by Algorithm \ref{alg:optimize and fit}, based on SOFA and age.}\label{fig:tree-sofa-age-algo-2}
\end{figure}

\begin{figure}[htb]
\center
 \begin{subfigure}{0.5\textwidth}
\centering
         \includegraphics[width=0.9\linewidth]{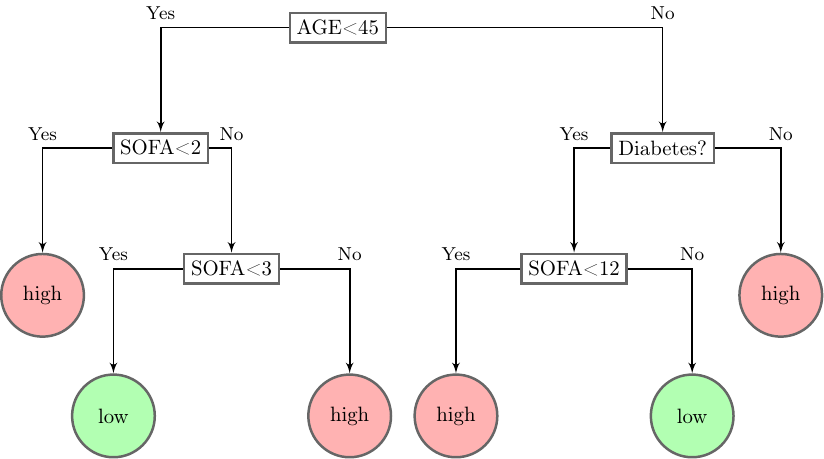}
         \caption{Triage.}
         \label{fig:tree-sofa-cov-time-1-algo-1}
  \end{subfigure}
   \begin{subfigure}{0.7\textwidth}
\centering
         \includegraphics[width=0.9\linewidth]{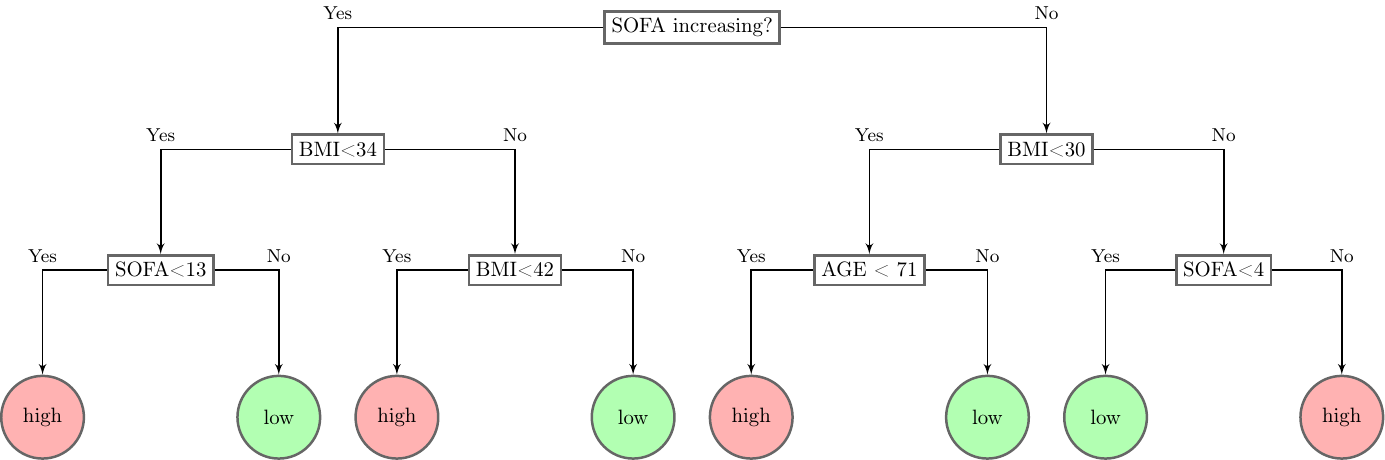}
         \caption{Reassessment at $t=$ 2 days.}
         \label{fig:tree-sofa-cov-time-48-algo-1}
  \end{subfigure}
     \begin{subfigure}{0.7\textwidth}
\centering
         \includegraphics[width=0.9\linewidth]{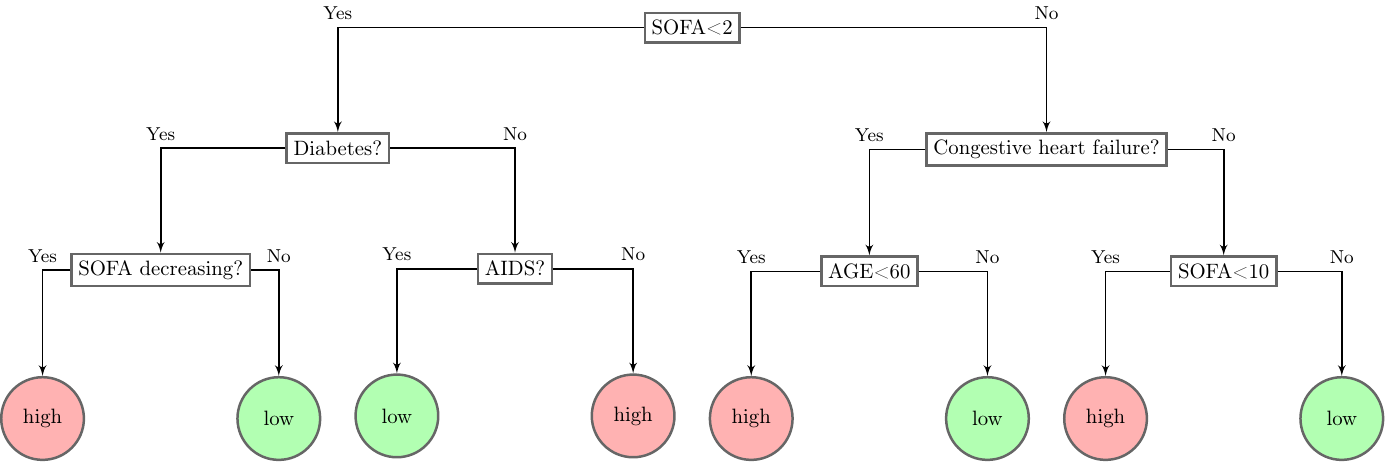}
         \caption{Reassessment at $t=$ 5 days.}
         \label{fig:tree-sofa-cov-time-120-algo-1}
  \end{subfigure}
  \caption{Tree policies returned by Algorithm \ref{alg:optimize then fit}, based on SOFA, age, demographics and comorbities.}\label{fig:tree-sofa-cov-algo-1}
\end{figure}

\begin{figure}[htb]
\center
 \begin{subfigure}{0.5\textwidth}
\centering
         \includegraphics[width=0.9\linewidth]{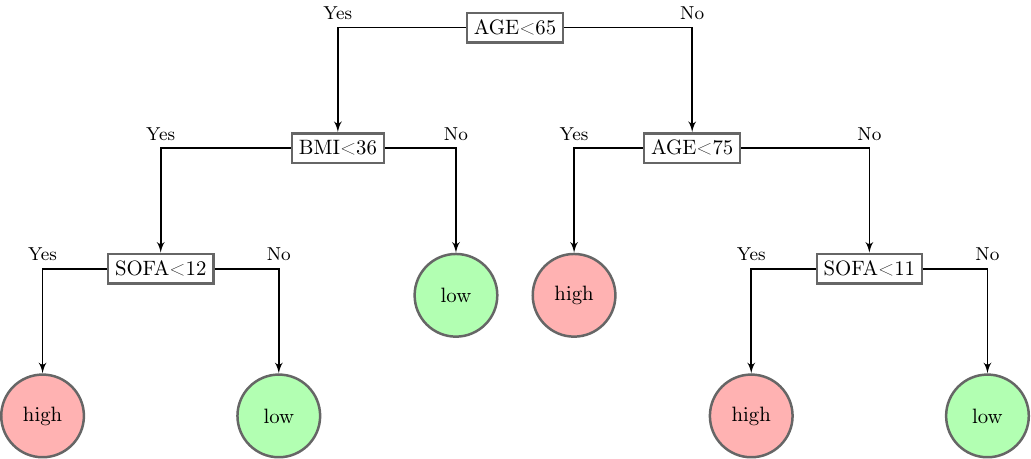}
         \caption{Triage.}
         \label{fig:tree-sofa-cov-time-1-algo-2}
  \end{subfigure}
   \begin{subfigure}{0.7\textwidth}
\centering
         \includegraphics[width=0.9\linewidth]{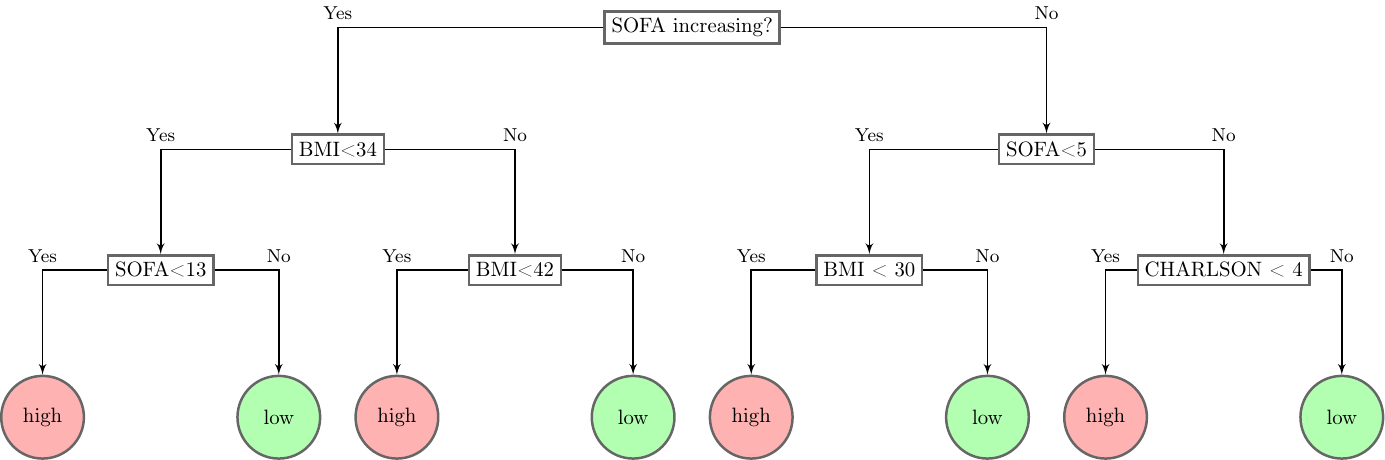}
         \caption{Reassessment at $t=$ 2 days.}
         \label{fig:tree-sofa-cov-time-48-algo-2}
  \end{subfigure}
     \begin{subfigure}{0.7\textwidth}
\centering
         \includegraphics[width=0.9\linewidth]{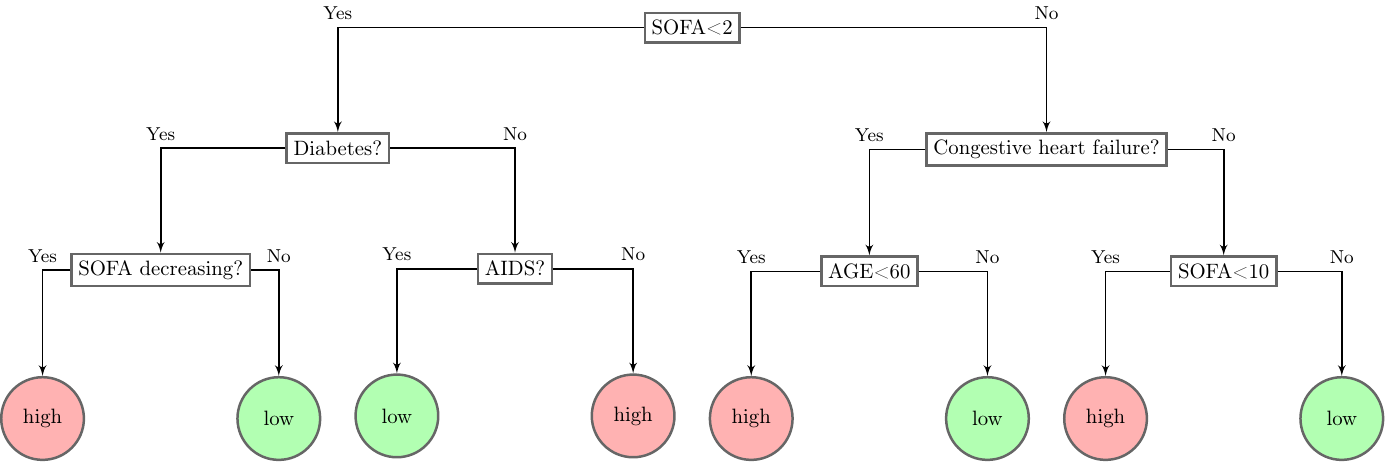}
         \caption{Reassessment at $t=$ 5 days.}
         \label{fig:tree-sofa-cov-time-120-algo-2}
  \end{subfigure}
  \caption{Tree policies returned by Algorithm \ref{alg:optimize and fit}, based on SOFA, age, demographics and comorbities.}\label{fig:tree-sofa-cov-algo-2}
\end{figure}


\clearpage

\section{Comparing the performances of Algorithm \ref{alg:optimize then fit} and Algorithm \ref{alg:optimize and fit}}\label{app:comparison algo 1/2}

\tb{
In this appendix, we provide a more detailed comparison of the performances of the unconstrained MDP policies with the tree policies obtained by Algorithm \ref{alg:optimize then fit} and Algorithm \ref{alg:optimize and fit}. To do so, we reorganize the numerical results from Figure \ref{fig:number-of-deaths}, grouping them by the information used in the state space: only SOFA (Figure \ref{fig:sofa-comp}), SOFA+AGE (Figure \ref{fig:sofa-age-comp}) and SOFA+COVARIATES (Figure \ref{fig:sofa-cov-comp}). This allows us to show the performances of all algorithms on the same figure. We note that when only SOFA is used, tree policies can match the performance of optimal unconstrained policies. When more information is used, Algorithm \ref{alg:optimize then fit} and Algorithm \ref{alg:optimize and fit} exhibit similar performances, with a slight advantage for Algorithm \ref{alg:optimize then fit} when SOFA+COV is used (Figure \ref{fig:sofa-cov-comp}) and insignificant performance difference between both algorithms when only SOFA+AGE is used (Figure \ref{fig:sofa-age-comp}). 
}

\begin{figure}[htb]
\center
 \begin{subfigure}{0.3\textwidth}
\centering
         \includegraphics[width=1.0\linewidth]{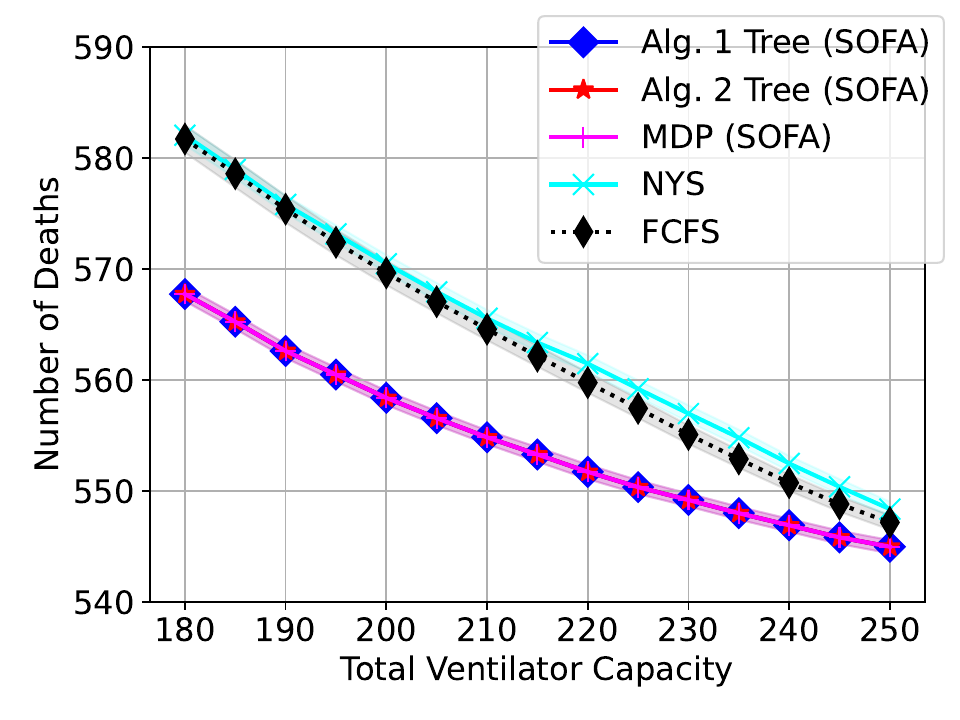}
                  \caption{SOFA.}
         \label{fig:sofa-comp}
  \end{subfigure}
   \begin{subfigure}{0.3\textwidth}
\centering
         \includegraphics[width=1.0\linewidth]{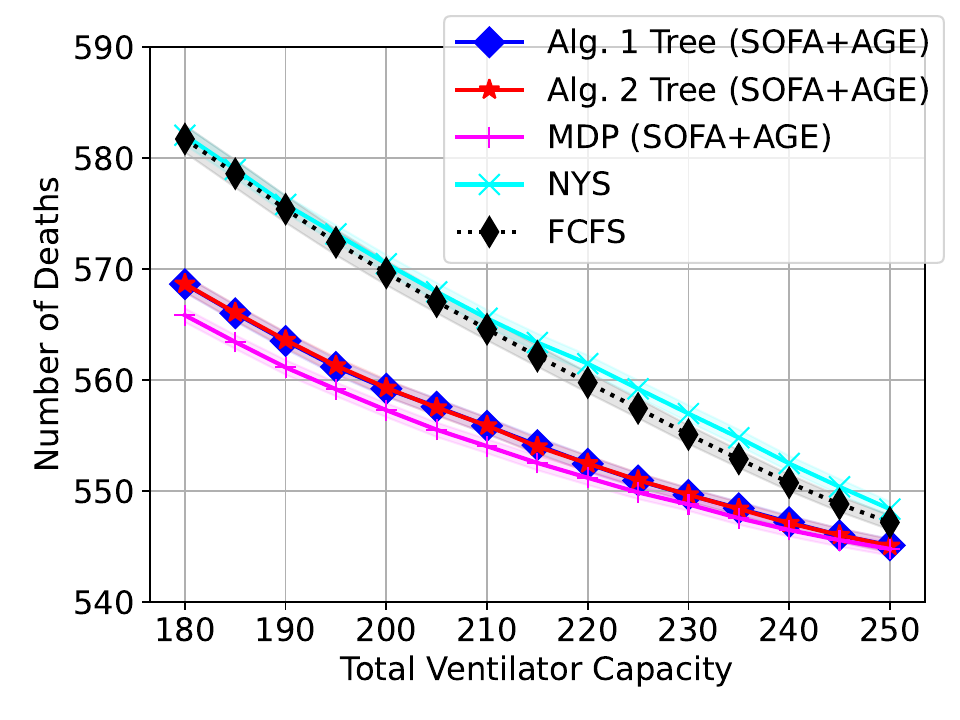
}
               \caption{SOFA+AGE.}
         \label{fig:sofa-age-comp}
  \end{subfigure}
\begin{subfigure}{0.3\textwidth}
\centering
         \includegraphics[width=1.0\linewidth]{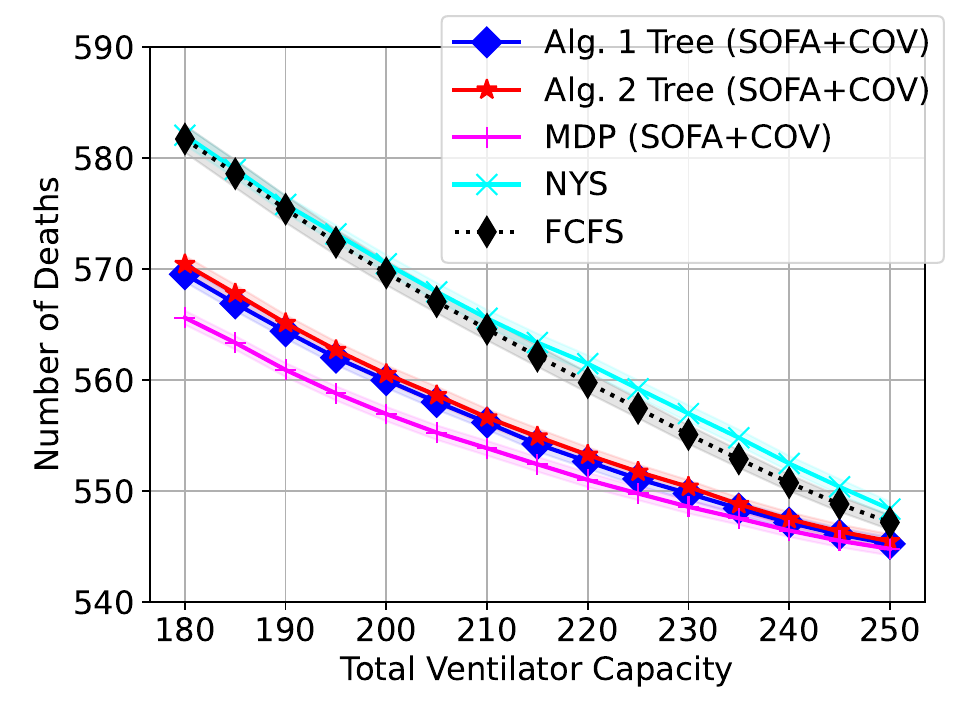
}
               \caption{SOFA+COV.}
         \label{fig:sofa-cov-comp}
  \end{subfigure}
         \caption{Number of deaths for various triage guidelines at hypothetical levels of ventilator capacities, for $p=0.99$. We group the performances based on the state space used by the algorithms: only using SOFA, using SOFA+AGE, using SOFA+COVARIATES.}
         \label{fig:number-of-deaths-per-alg}
\end{figure}
\clearpage 
\section{Sensitivity analysis for the parameter $p$}\label{app:sensitivity-analysis-parameter-p}

\tb{
In this appendix we provide some more numerical experiments to investigate the impact of the choice of the parameter $p$ on our results from Section \ref{sec:simu}. As already discussed in the main body of the paper, $p$ may depend on the demographics and vital signs of the patient, as well as the timing of proactive extubation. For the sake of simplicity, we consider a model where the parameter $p$ depends on the value of the SOFA score of the patient at the time of proactive extubation, compared to a threshold $\tau$. In particular, we consider that $p$ is such that $p=p_{{\sf up}} \in [0,1]$ if the SOFA score at proactive extubation is larger or equal to $\tau$, or equal to $p=p_{{\sf down}} \leq p_{{\sf up}}$ if the SOFA score at proactive extubation is strictly smaller than $\tau$. in our simulations we explore values of $\tau \in \{8,11\}$ (as these thresholds are present in the NYS guidelines) and values of $(p_{{\sf up}},p_{{\sf down}}) \in \{(0.99,0.95),(0.95,0.90)\}$ (after discussing this issue with clinicians). As we can observe from Figures \ref{fig:number-of-deaths (Sensitive p0)}-\ref{fig:number-of-deaths (Sensitive p3)}, all these models for the parameter $p$ lead to comparable performances for the policies studied in this paper (NYS, FCFS, MDP and tree policies based on Algorithm \ref{alg:optimize then fit} and Algorithm \ref{alg:optimize and fit}). This highlights the robustness of our conclusions from Section \ref{sec:simu}.
}

\begin{figure}[htb]
\center
 \begin{subfigure}{0.3\textwidth}
\centering
         \includegraphics[width=1.0\linewidth]{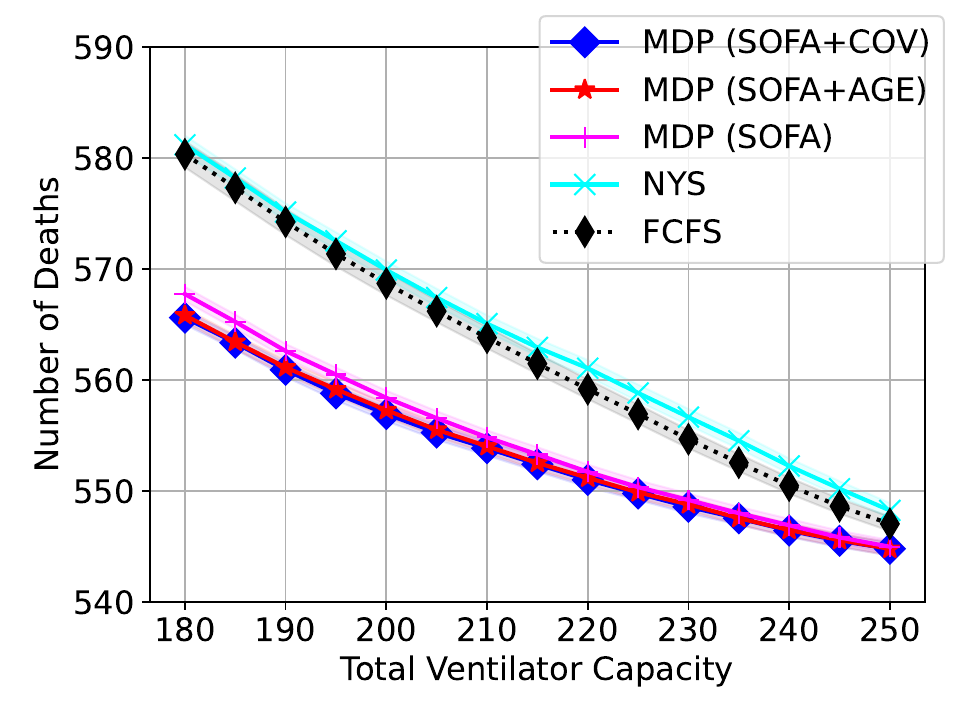}
                  \caption{MDP policies.}
         \label{fig:MDP policies Sensitive p0}
  \end{subfigure}
   \begin{subfigure}{0.3\textwidth}
\centering
         \includegraphics[width=1.0\linewidth]{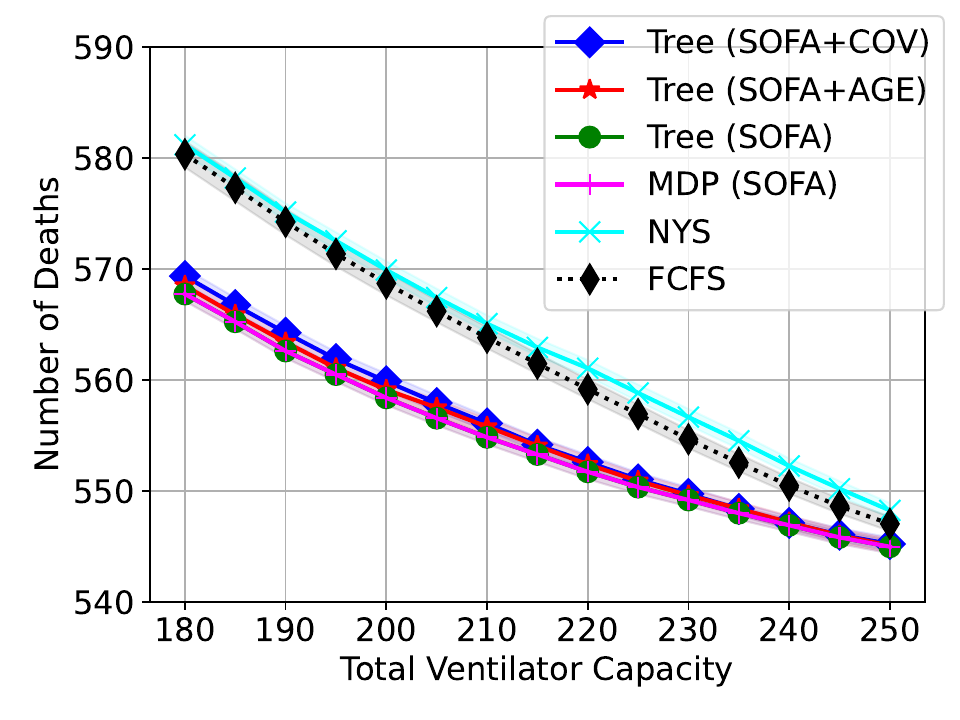}
               \caption{Tree policies from Alg. \ref{alg:optimize then fit}.}
         \label{fig:tree policies optimize then fit Sensitive p0}
  \end{subfigure}
\begin{subfigure}{0.3\textwidth}
\centering
         \includegraphics[width=1.0\linewidth]{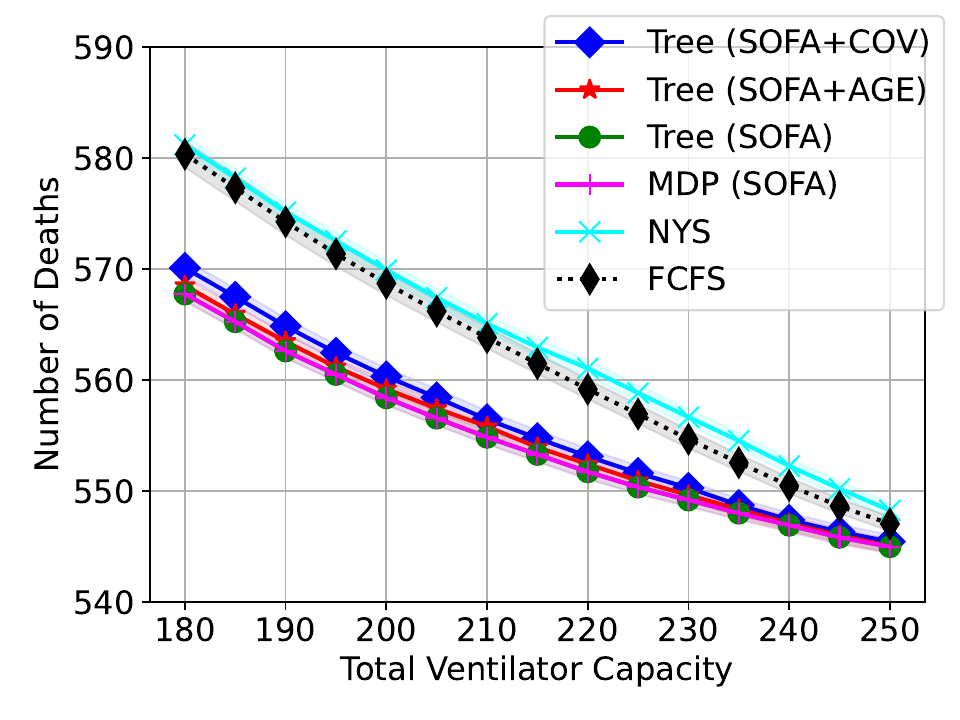}
               \caption{Tree policies from Alg. \ref{alg:optimize and fit}.}
         \label{fig:tree policies optimize and fit (Sensitive p0)}
  \end{subfigure}
         \caption{Number of deaths for various triage guidelines at hypothetical levels of ventilator capacities, for $\tau = 8$ and $(p_{{\sf up}},p_{{\sf down}}) = (0.99,0.95)$. }
         \label{fig:number-of-deaths (Sensitive p0)}
\end{figure}

\begin{figure}[htb]
\center
 \begin{subfigure}{0.3\textwidth}
\centering
         \includegraphics[width=1.0\linewidth]{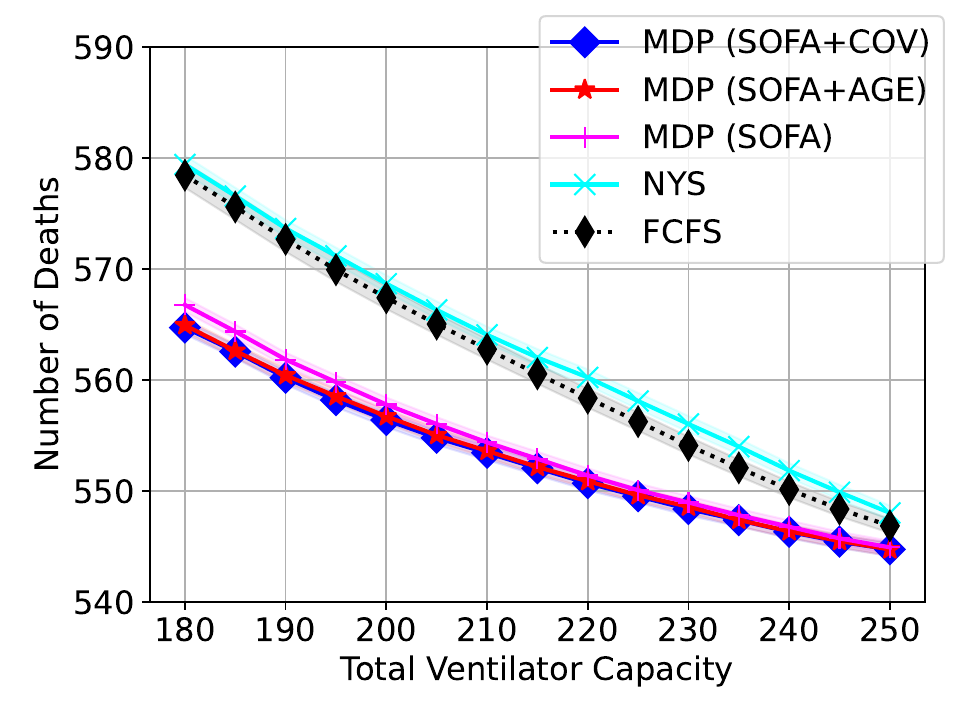}
                  \caption{MDP policies.}
         \label{fig:MDP policies Sensitive p1}
  \end{subfigure}
   \begin{subfigure}{0.3\textwidth}
\centering
         \includegraphics[width=1.0\linewidth]{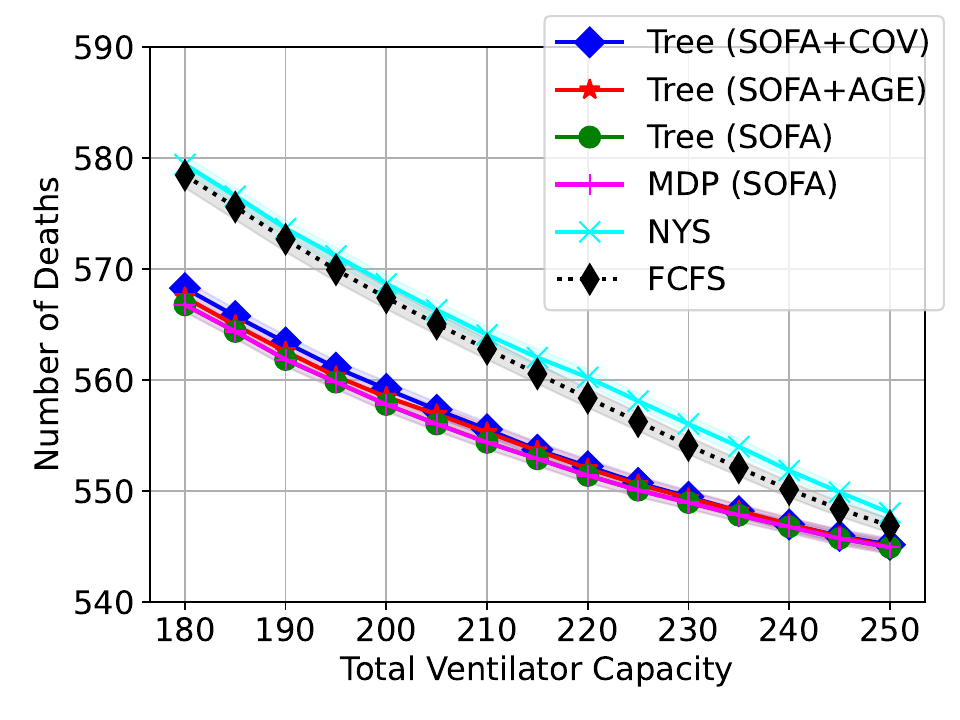}
               \caption{Tree policies from Alg. \ref{alg:optimize then fit}.}
         \label{fig:tree policies optimize then fit Sensitive p1}
  \end{subfigure}\begin{subfigure}{0.3\textwidth}
\centering
         \includegraphics[width=1.0\linewidth]{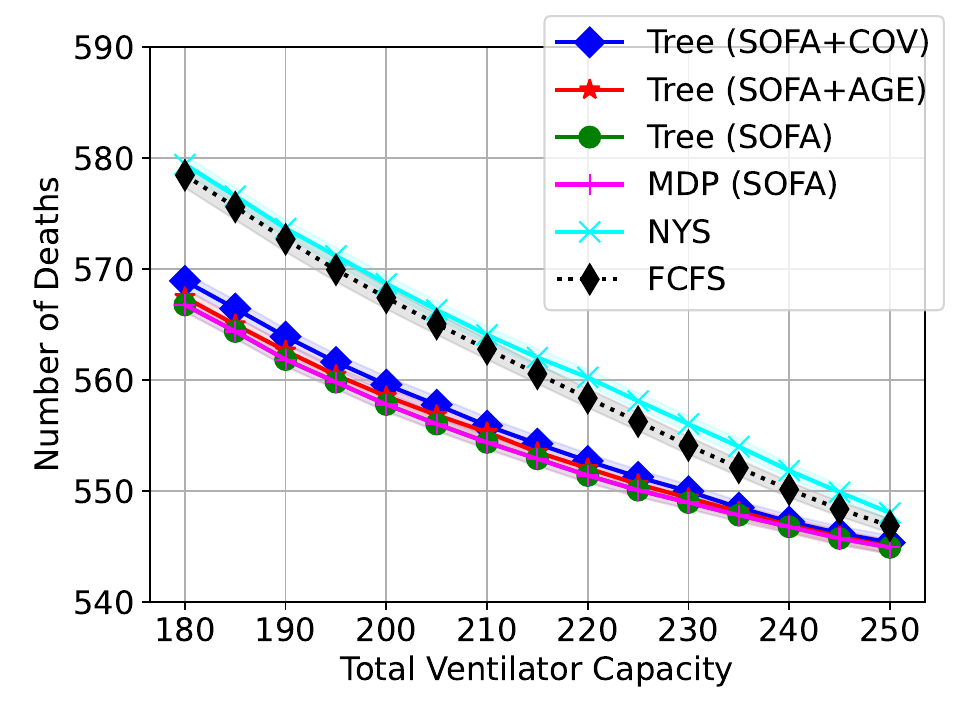}
               \caption{Tree policies from Alg. \ref{alg:optimize and fit}.}
         \label{fig:tree policies optimize and fit (Sensitive p1)}
  \end{subfigure}
         \caption{Number of deaths for various triage guidelines at hypothetical levels of ventilator capacities, for $\tau = 8$ and $(p_{{\sf up}},p_{{\sf down}}) = (0.95,0.90)$. }
         \label{fig:number-of-deaths (Sensitive p1)}
\end{figure}

\begin{figure}[htb]
\center
 \begin{subfigure}{0.3\textwidth}
\centering
         \includegraphics[width=1.0\linewidth]{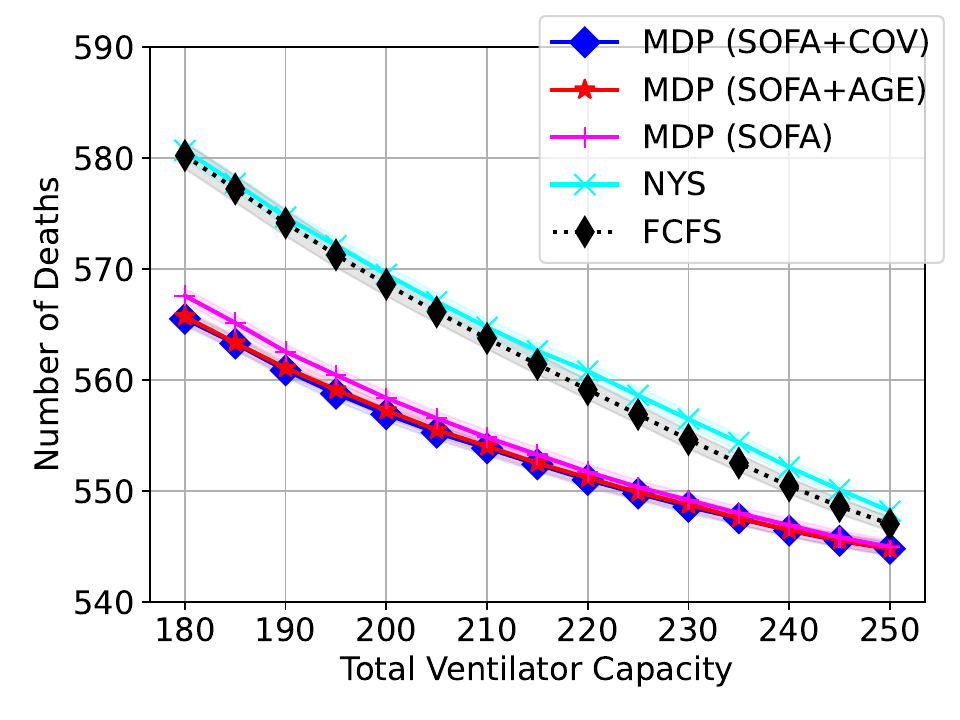}
                  \caption{MDP policies.}
         \label{fig:MDP policies Sensitive p2}
  \end{subfigure}
   \begin{subfigure}{0.3\textwidth}
\centering
         \includegraphics[width=1.0\linewidth]{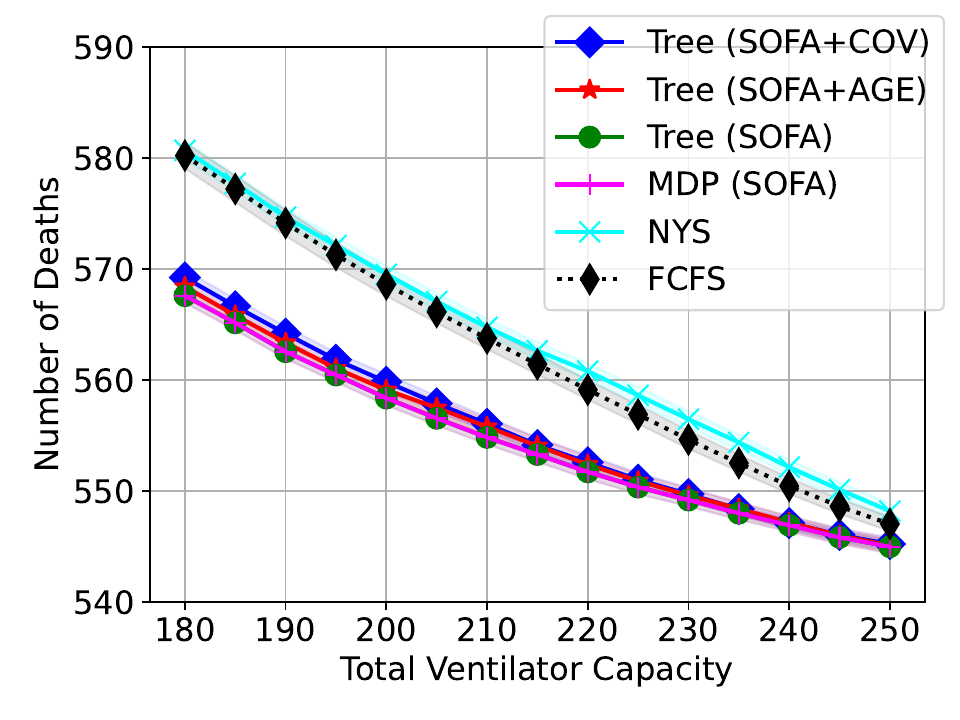}
               \caption{Tree policies from Alg. \ref{alg:optimize then fit}.}
         \label{fig:tree policies optimize then fit Sensitive p2}
  \end{subfigure}
\begin{subfigure}{0.3\textwidth}
\centering
         \includegraphics[width=1.0\linewidth]{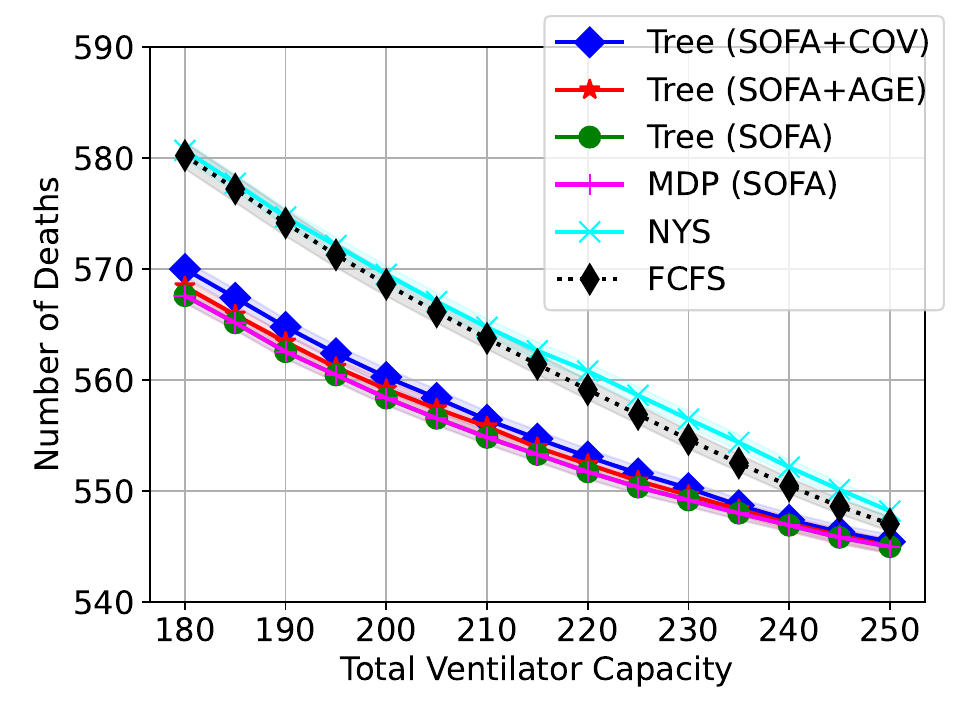}
               \caption{Tree policies from Alg. \ref{alg:optimize and fit}.}
         \label{fig:tree policies optimize and fit (Sensitive p2)}
  \end{subfigure}
         \caption{Number of deaths for various triage guidelines at hypothetical levels of ventilator capacities, for $\tau = 11$ and $(p_{{\sf up}},p_{{\sf down}}) = (0.99,0.95)$. }
         \label{fig:number-of-deaths (Sensitive p2)}
\end{figure}

\begin{figure}[htb]
\center
 \begin{subfigure}{0.3\textwidth}
\centering
         \includegraphics[width=1.0\linewidth]{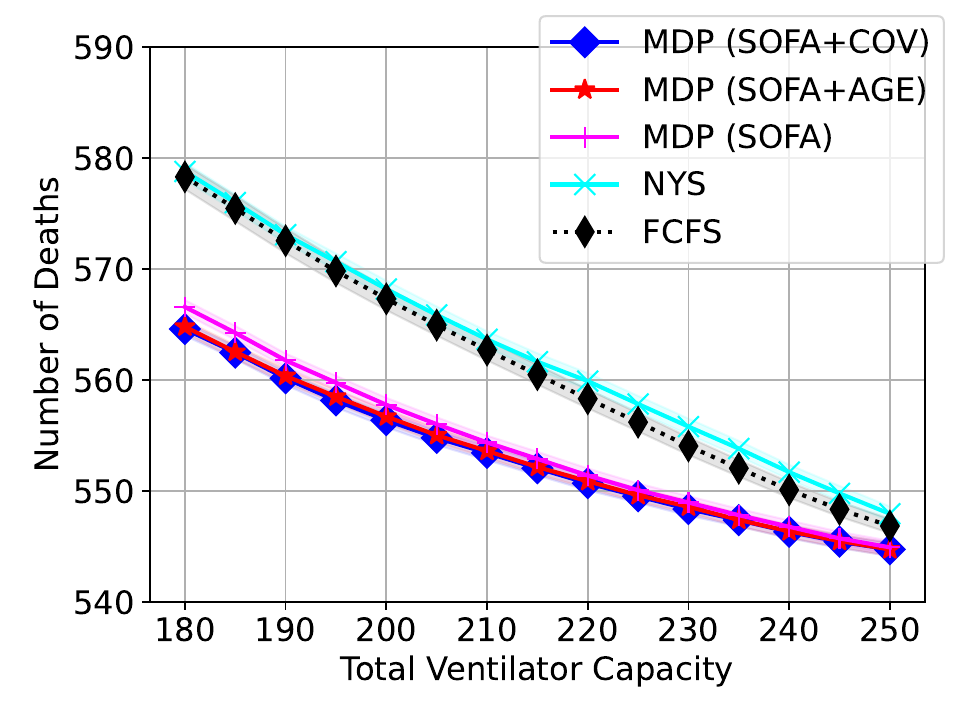}
                  \caption{MDP policies.}
         \label{fig:MDP policies Sensitive p3}
  \end{subfigure}
   \begin{subfigure}{0.3\textwidth}
\centering
         \includegraphics[width=1.0\linewidth]{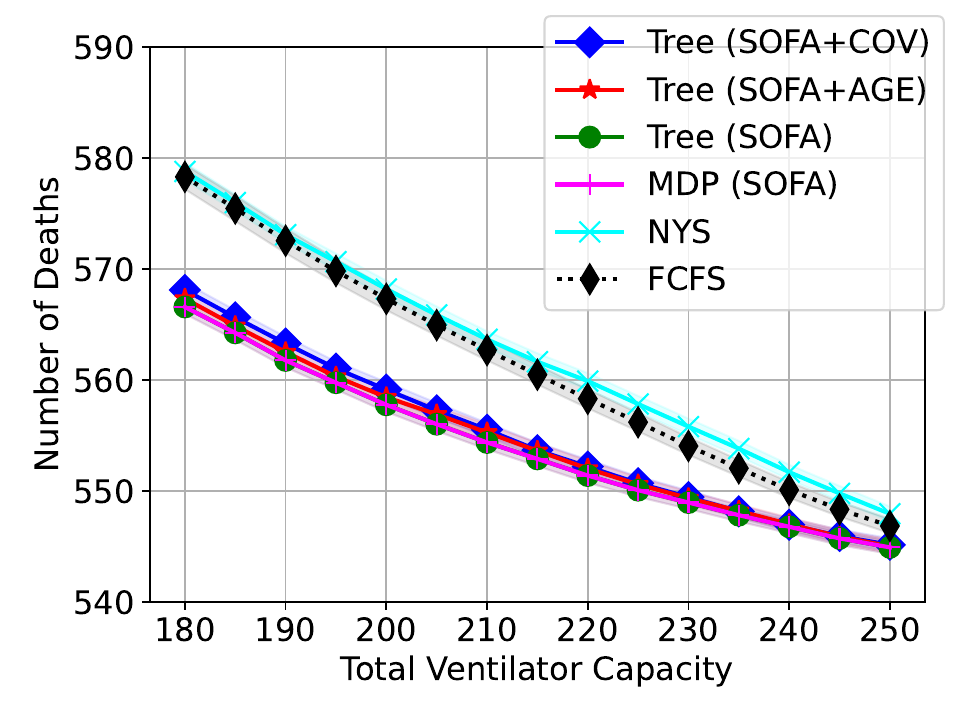}
               \caption{Tree policies from Alg. \ref{alg:optimize then fit}.}
         \label{fig:tree policies optimize then fit Sensitive p3}
  \end{subfigure}
\begin{subfigure}{0.3\textwidth}
\centering
         \includegraphics[width=1.0\linewidth]{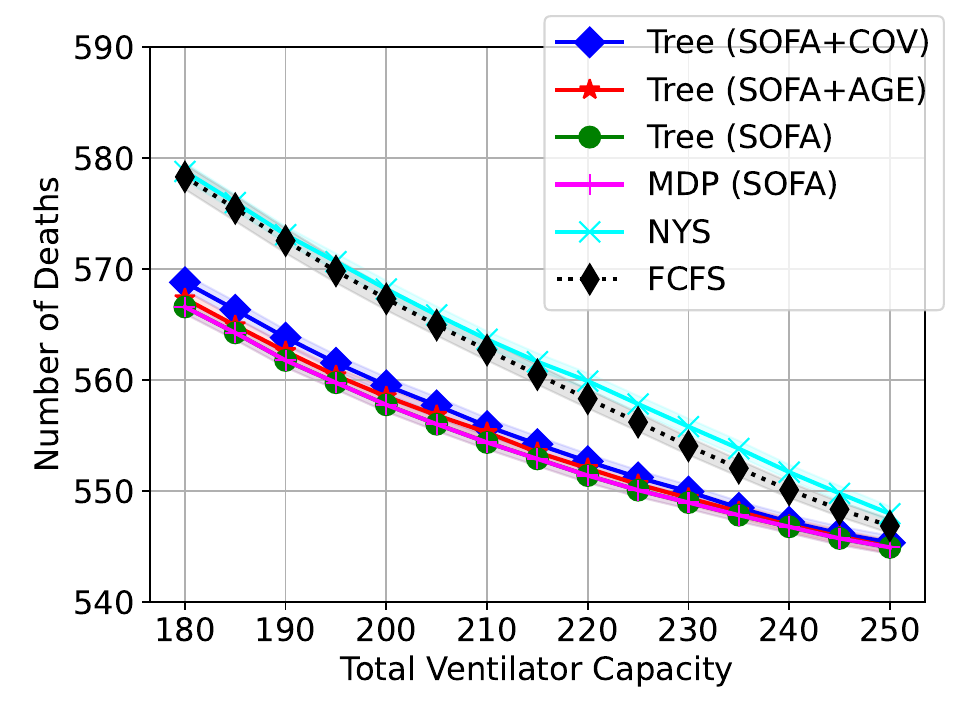}
               \caption{Tree policies from Alg. \ref{alg:optimize and fit}.}
         \label{fig:tree policies optimize and fit (Sensitive p3)}
  \end{subfigure}
         \caption{Number of deaths for various triage guidelines at hypothetical levels of ventilator capacities, for $\tau = 11$ and $(p_{{\sf up}},p_{{\sf down}}) = (0.95,0.90)$. }
         \label{fig:number-of-deaths (Sensitive p3)}
\end{figure}

\clearpage
\tb{
\section{Sensitivity analysis for the rewards parameters}\label{app:sensitivity-analysis}
In this appendix, we provide some more numerical experiments to investigate the impact of the choice of the rewards parameters $C,\rho$ and $\gamma$ outlined in Section \ref{subsec:MDP-triage-model}. The simulations presented in the main body use the parameters $C = 100, \rho = 0.9, \gamma = 0.5$, so that $C \rho^2 \gamma^2 \approx 20$, i.e., the ``worst" {\sf Alive} state ($A_{3}^{\sf ex}$) has a reward $20$ times larger than the ``best" {\sf Deceased} state ($D_{1}^{\sf ex}$). Here, we provide additional numerical experiments for the choices $(\rho, \gamma) \in \{(0.8,0.4), (0.9,0.5), (0.99,0.6)\}$ with varying $C \in \{10, 40, 70, 100, 130, 160\}$ in the corresponding region satisfying $C \rho^2 \gamma^2 \in [1,35]$. When $C \rho^2 \gamma^2 \approx 1$, the MDP model does not distinguish (in terms of instantaneous rewards) the state $D_{1}^{\sf ex}$ and $A_{3}^{\sf ex}$, which in turn means that the MDP formulation is pointless for our purpose of maximizing the number of patients that survive. On the contrary, when $C \rho^2 \gamma^2 \approx 35$, the reward in $A_{3}^{\sf ex}$ is 35 times larger than the reward in $D_{1}^{\sf ex}$. For every choice of the parameters $(C,\rho,\gamma)$, we compute an optimal unconstrained policy as well as the tree policies returned by Algorithm \ref{alg:optimize then fit} and Algorithm \ref{alg:optimize and fit}, and we estimate their performances using the simulation model from Section \ref{subsec:triage-simulation-model}. For the sake of conciseness, we only focus on the case where the ventilator capacity is $180$ ventilators. We observe in Figure \ref{fig:number-of-deaths (Sensitive Rewards)} that all the plausible values of $(C,\rho,\gamma)$ lead to comparable performances for the policies computed. The performances are particularly stable, so long as $C \rho^2 \gamma^2$ is much larger than $1$. This highlights the robustness of our conclusions from Section \ref{sec:simu}. When $C \rho^2 \gamma^2$ approaches $1$, the performance of some policies may degrade (e.g. for the tree (SOFA+COV) returned by Algorithm \ref{alg:optimize then fit} when $(\rho,\gamma) = (0.8,0.4)$, see Figure \ref{fig:tree policies optimize then fit Sensitive Rewards}).
}

\begin{figure}[htb]
\center
\begin{subfigure}{\textwidth}
\centering
\includegraphics[width=0.9\linewidth]{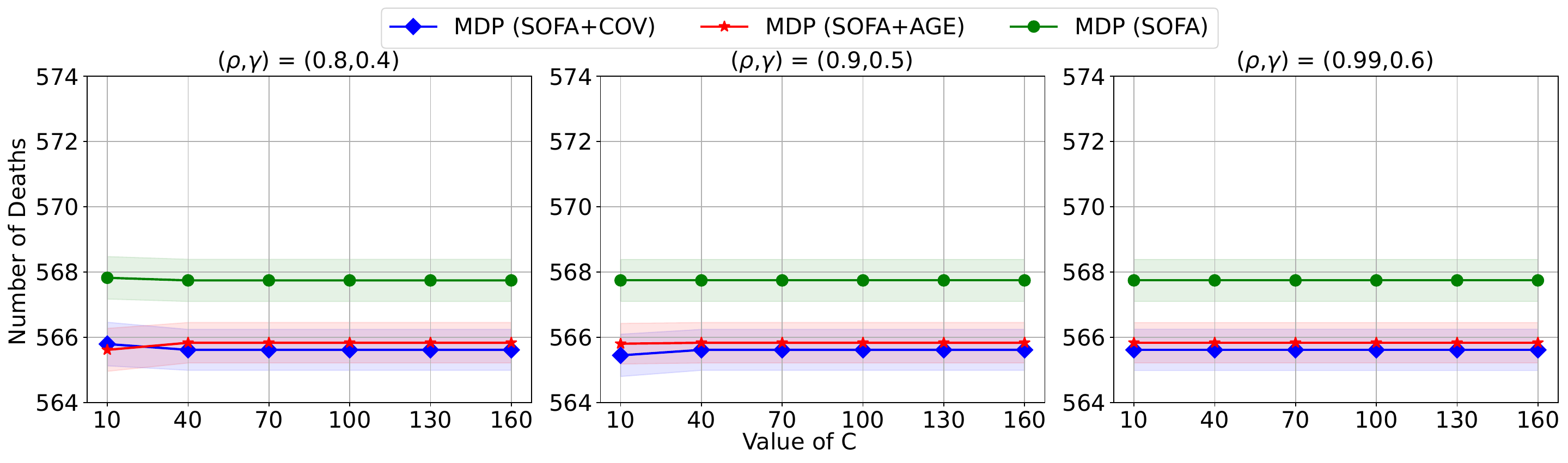}
\caption{MDP policies.}
         \label{fig:MDP policies Sensitive Rewards}
\end{subfigure}\\
\begin{subfigure}{\textwidth}
\centering
\includegraphics[width=0.9\linewidth]{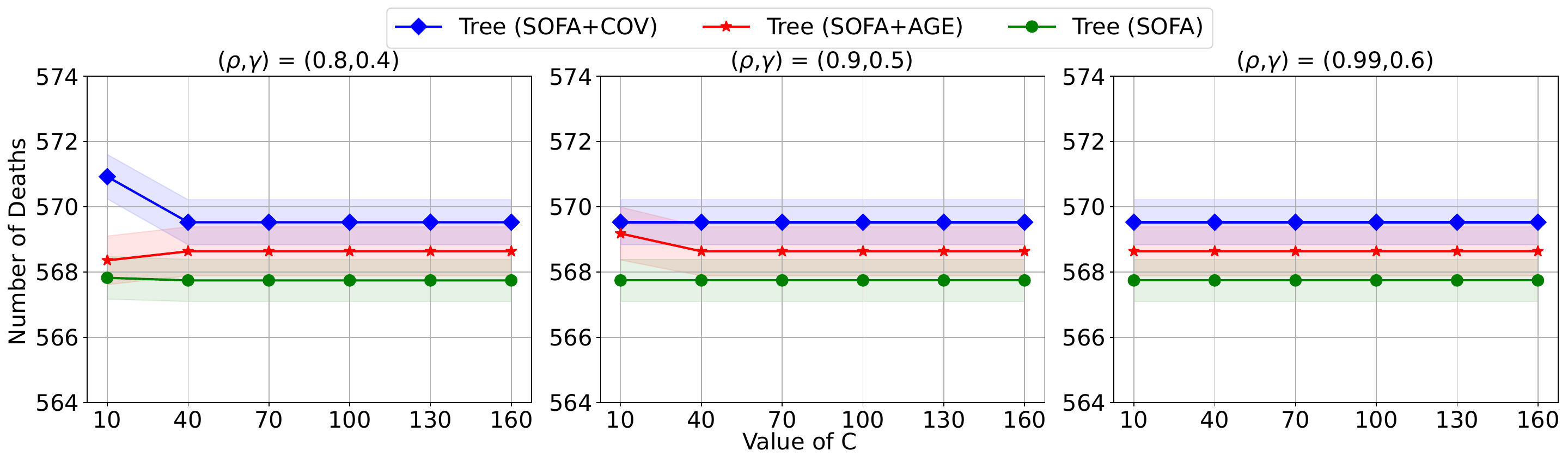}
\caption{Tree policies from Alg. \ref{alg:optimize then fit}.}
         \label{fig:tree policies optimize then fit Sensitive Rewards}
\end{subfigure}\\
\begin{subfigure}{\textwidth}
\centering
\includegraphics[width=0.9\linewidth]{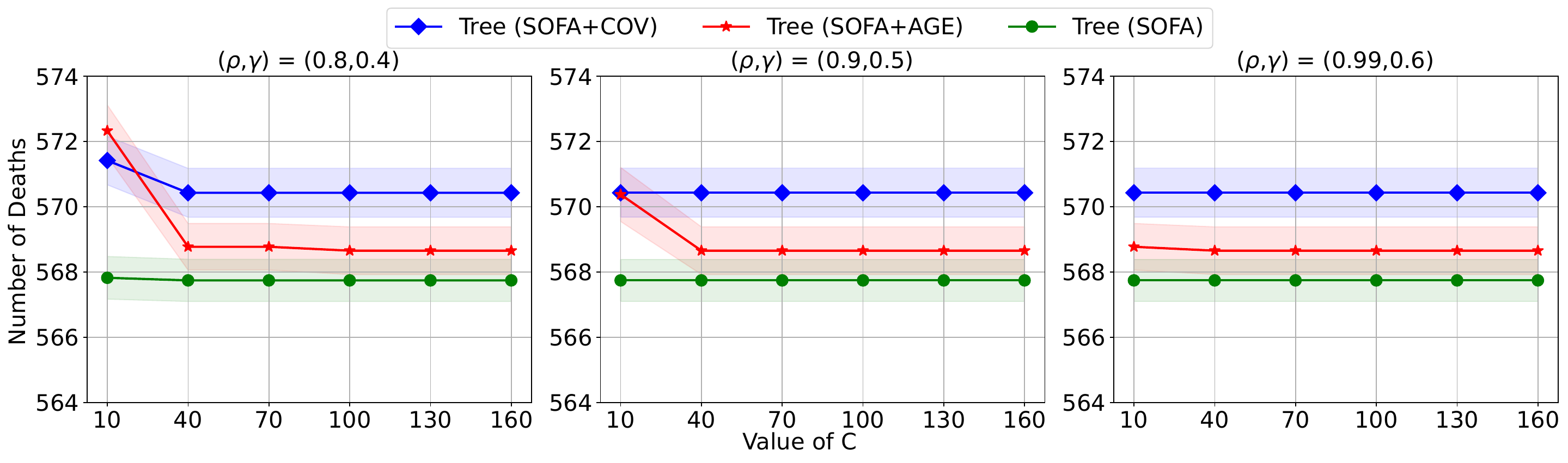}
\caption{Tree policies from Alg. \ref{alg:optimize and fit}.}
         \label{fig:tree policies optimize and fit Sensitive Rewards}
\end{subfigure}
  \caption{Number of deaths for various triage guidelines at hypothetical levels of ventilator capacities, for $C \in \{10, 40, 70, 100, 130, 160\}$ and $(\rho, \gamma) \in \{(0.8,0.4), (0.9,0.5), (0.99,0.6)\}$. }
         \label{fig:number-of-deaths (Sensitive Rewards)}
\end{figure}
\clearpage 

\tb{
\section{Comparisons with ISP-LU algorithms}\label{app:comparison ISP-LU}
In this appendix we first describe the ISP-LU algorithm for computing ventilator allocation guidelines, as introduced in \cite{anderson2023rationing}. We then investigate the best empirical setup for ISP-LU before comparing the performance of this method with our algorithms. Note that the algorithms that we call ISP-LU differ slightly from the ones described in \cite{anderson2023rationing} since our data does not include vital signs. Therefore, the ISP-LU guidelines presented here differ from the ones obtained in \cite{anderson2023rationing}, but we do our best to provide a fair comparison with our algorithms and to find the best setup for the ISP-LU guidelines within the scope of our decision problem.
\paragraph{The ISP-LU guidelines.} The ISP-LU guidelines are based on a threshold $\tau \geq 0$ and on patient-wise predictions. For each patient, the ISP-LU guidelines predict their probability of survival $\hat{P}$ given their covariate information (SOFA scores, demographics, and comorbidities), as well as their length-of-use $\hat{L}$ (duration of intubation). A ventilator is allocated to the patient if $\hat{P}/\hat{L} \geq \tau$. Following \cite{anderson2023rationing}, the prediction of $\hat{P}$ is performed with logistic regression. The prediction of $\hat{L}$ is performed by first clustering the patients as ``likely to die" ($\hat{P} \leq 0.3$), ``likely to survive" ($\hat{P} \geq 0.7$) and ``hard to predict" ($0.3 < \hat{P} < 0.7$). A LASSO regression is then used to predict the duration of intubation for each of these three classes. Similarly as all the other guidelines tested in this paper, we consider that the guidelines are only used once all ventilators are used; in this case, intubations are decided following the ISP-LU decision rule, and if necessary the patient with the largest SOFA is proactively extubated to intubate the new patient requiring a ventilator.
\paragraph{Practical implementation.} For the sake of comparing with the guidelines obtained in our paper, we consider ISP-LU guidelines based solely on SOFA, based on SOFA+AGE and based on SOFA+COVARIATES. We use {\sf scikitlearn} for the logistic regression and the LASSO regressions for predicting the probability and duration of intubation of each patient in our cohort. For choosing the threshold $\tau$, we explore values $\tau \in \{0.01,0.02,0.03,0.04\}$, chosen given the empirical distribution of the thresholds $\hat{P}/\hat{L}$ for the patients in our cohorts (see Table \ref{tab:ratio isp-lu}).
\begin{table}[htb]
\centering
\footnotesize
 \begin{tabular}{||l |  c | c | c ||}
 \hline
Predicted ratio $\hat{P}/\hat{L}$ & SOFA & SOFA+AGE & SOFA+COV  \\ [0.5ex]
 \hline\hline
Mean (IQR) & 0.031 (0.026-0.035) & 0.030 (0.024-0.036) & 0.031 (0.021 - 0.039) \\
Minimum & 0.017 & 0.011 & 0.002  \\
Maximum & 0.036 & 0.067 & 0.083 \\
 [1ex]
 \hline
 \end{tabular}
 \caption{Summary statistics the predicted ratios $\hat{P}/\hat{L}$ based on different state space (SOFA, SOFA+AGE and SOFA+COVARIATES).}
 \label{tab:ratio isp-lu}
\end{table}
\paragraph{Best setup for ISP-LU guidelines.}
We start by comparing the best choice of $\tau$ for the different ISP-LU guidelines. We present our results in Figure \ref{fig:isp-lu-finding-tau}. As a benchmark we also show the performance of the FCFS guidelines.
We find that the best setups for ISP-LU correspond to smaller values of $\tau$ ($\tau = 0.01$ for ISP-LU based on SOFA+AGE and SOFA, $\tau = 0.02$ for ISP-LU based on SOFA+AGE). We also note that when $\tau$ is too large, the performances of ISP-LU match the performance of FCFS, since ISP-LU never decides to prioritize patients: for instance, for ISP-LU based on SOFA, the maximum ratio $\hat{P}/\hat{L}$ is $0.036$, and therefore, choosing $\tau=0.04$ results in implementing FCFS (see Figure \ref{fig:isplu-sofa}). To better understand the intubation decision of the ISP-LU guidelines, we show their number of excluded patients at triage in Figure \ref{fig:isp-lu-nb-exclusion}, where the increase in the number of excluded patients at triage is clearly visible when $\tau$ augments. Note that when $\tau$ is smaller than the minimum observed ratio $\hat{P}/\hat{L}$ in the cohort (e.g. $\tau = 0.01$ for ISP-LU based on SOFA only with a minimum ratio of $0.017$), the ISP-LU guidelines always intubate an incoming patient. The fact that this simple policy performs well is coherent with the fact that our tree policies based on SOFA only also never exclude any patient from intubation at triage (although it may only excludes patients at reassessment, see Figure \ref{fig:tree-sofa}).
Overall, the ISP-LU guidelines with the best performances in our framework are the ISP-LU guidelines based on SOFA+AGE with a threshold $\tau = 0.02$. For completeness, we represent this guideline in Figure \ref{fig:ISP LU SOFA AGE heatmap}, using a heatmap and the same notations as for our tree policies: the letter `L' stands for {\em low} and indicates patients with a low priority score for intubation, i.e., patients that will be excluded from ventilator treatment at triage. The letter `H' stands for {\em high} and represents patients that will be intubated. We can see that the ISP-LU guidelines may not lend themselves to easy, clear interpretability in terms of SOFA and age.
\begin{figure}[htb]
\center
 \begin{subfigure}{0.3\textwidth}
\centering
         \includegraphics[width=1.0\linewidth]{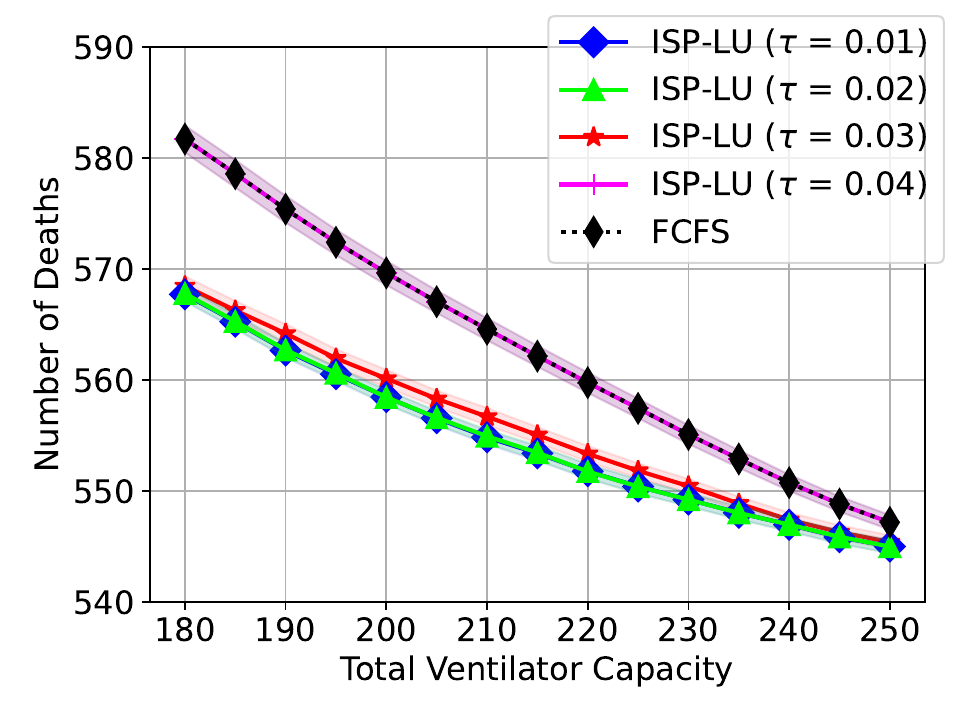}
                  \caption{SOFA.}
         \label{fig:isplu-sofa}
  \end{subfigure}
 \begin{subfigure}{0.3\textwidth}
\centering
         \includegraphics[width=1.0\linewidth]{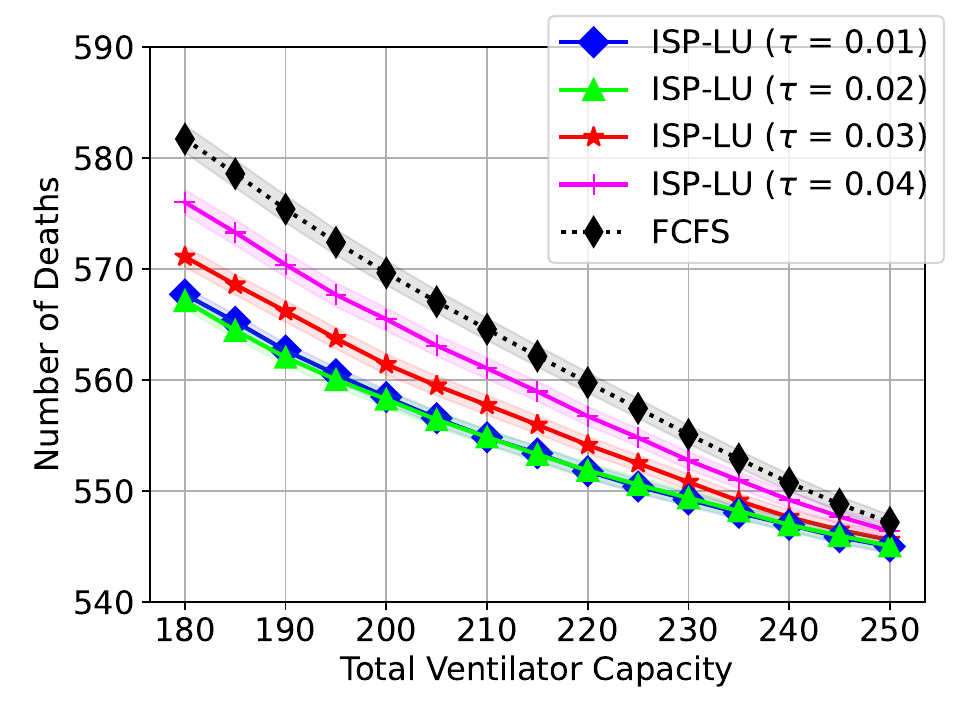}
                  \caption{SOFA+AGE.}
         \label{fig:isplu-sofa-age}
  \end{subfigure}
   \begin{subfigure}{0.3\textwidth}
\centering
         \includegraphics[width=1.0\linewidth]{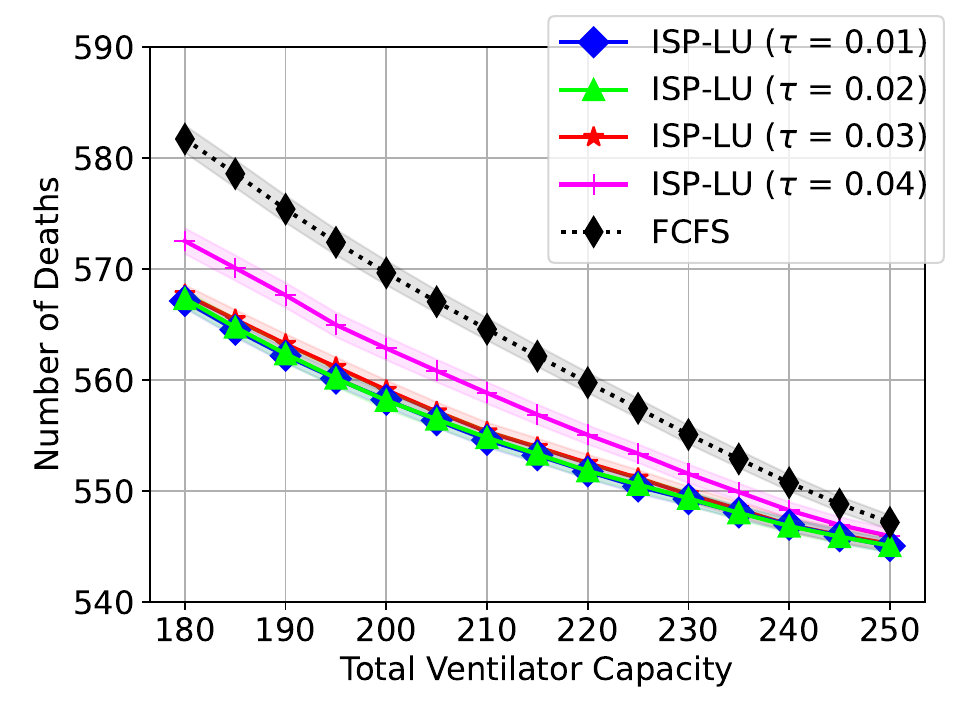}
                  \caption{SOFA+COV.}
         \label{fig:isplu-sofa-cov}
  \end{subfigure}
         \caption{Performance of ISP-LU guidelines based on different state space (SOFA, SOFA+AGE and SOFA+COV) for various values of the thresholds $\tau$.}
         \label{fig:isp-lu-finding-tau}
\end{figure}

\begin{figure}[htb]
\center
 \begin{subfigure}{0.3\textwidth}
\centering
         \includegraphics[width=1.0\linewidth]{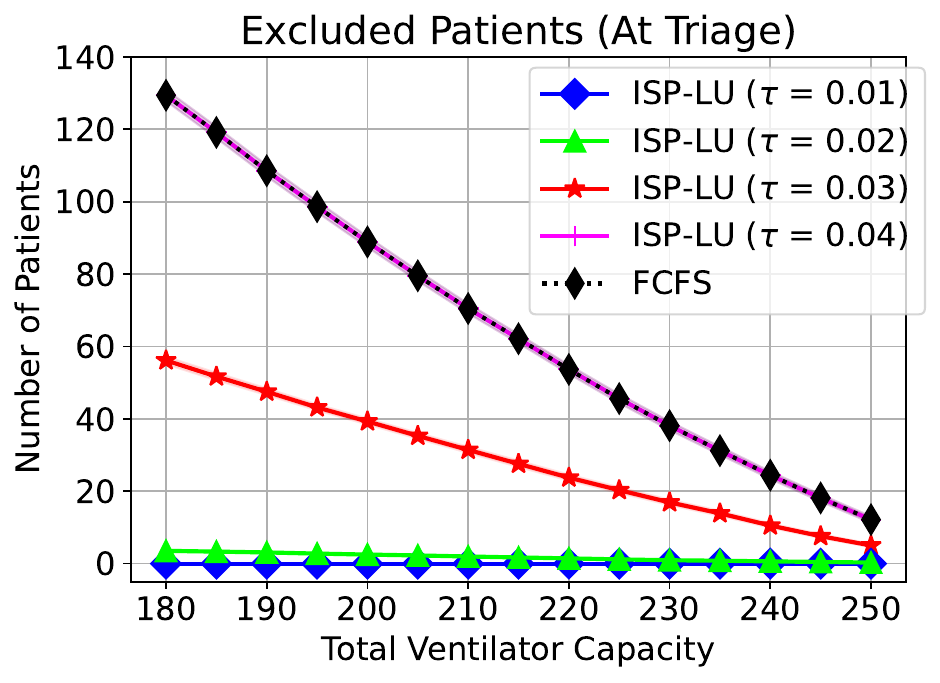}
                  \caption{SOFA.}
         \label{fig:isplu-sofa-nb-exclusion}
  \end{subfigure}
 \begin{subfigure}{0.3\textwidth}
\centering
         \includegraphics[width=1.0\linewidth]{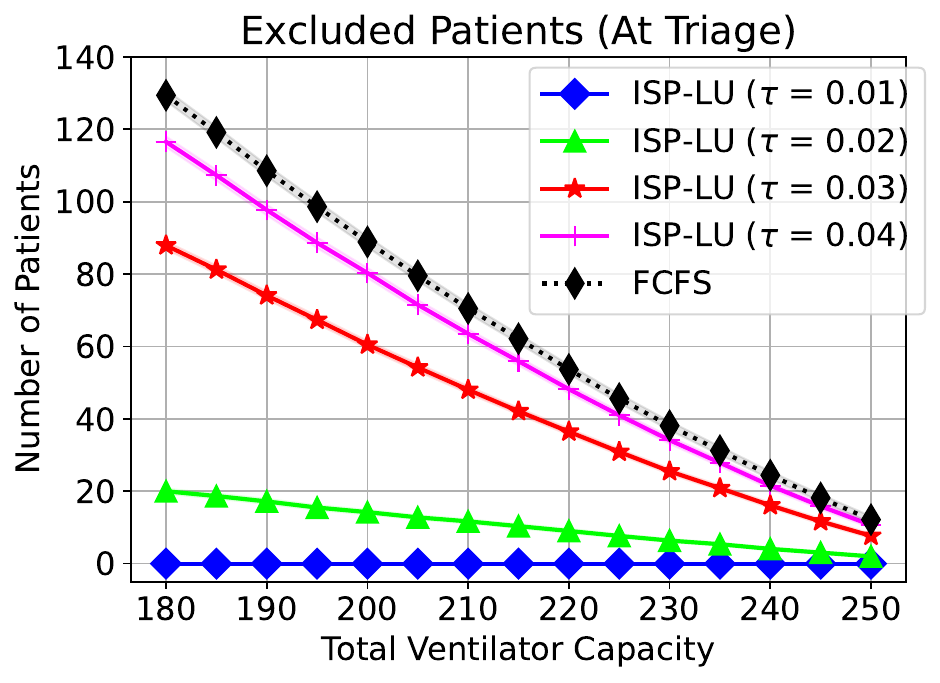}
                  \caption{SOFA+AGE.}
         \label{fig:isplu-sofa-age-nb-exclusion}
  \end{subfigure}
   \begin{subfigure}{0.3\textwidth}
\centering
         \includegraphics[width=1.0\linewidth]{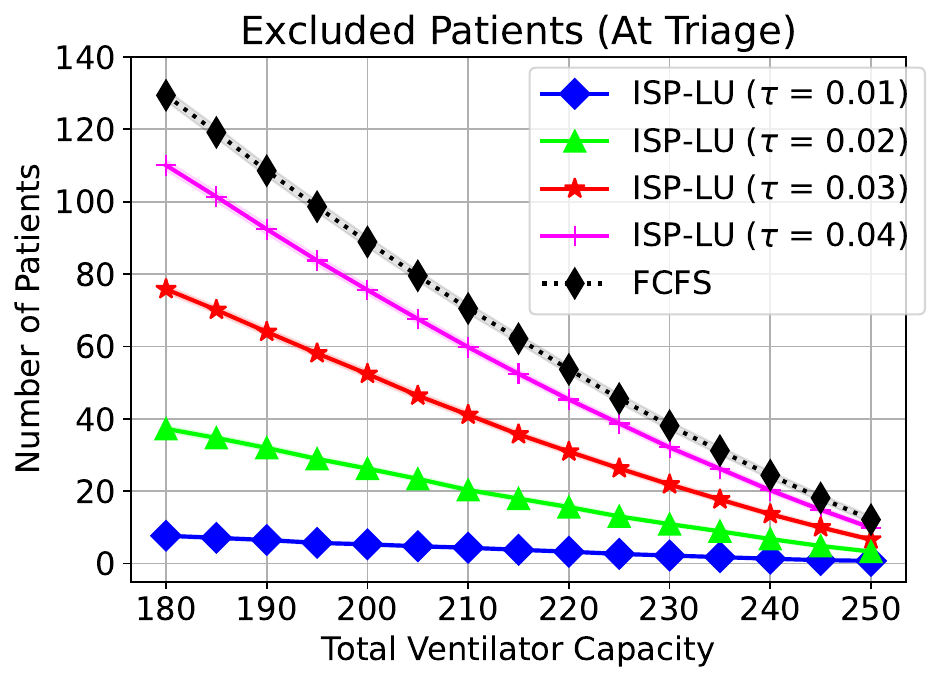}
                  \caption{SOFA+COV.}
         \label{fig:isplu-sofa-cov-nb-exclusion}
  \end{subfigure}
         \caption{Number of patients excluded at triage by the ISP-LU guidelines based on different state space (SOFA, SOFA+AGE and SOFA+COV) for various values of the thresholds $\tau$.}
         \label{fig:isp-lu-nb-exclusion}
\end{figure}

\begin{figure}[htb]
\center                     
\includegraphics[width=0.7\linewidth]{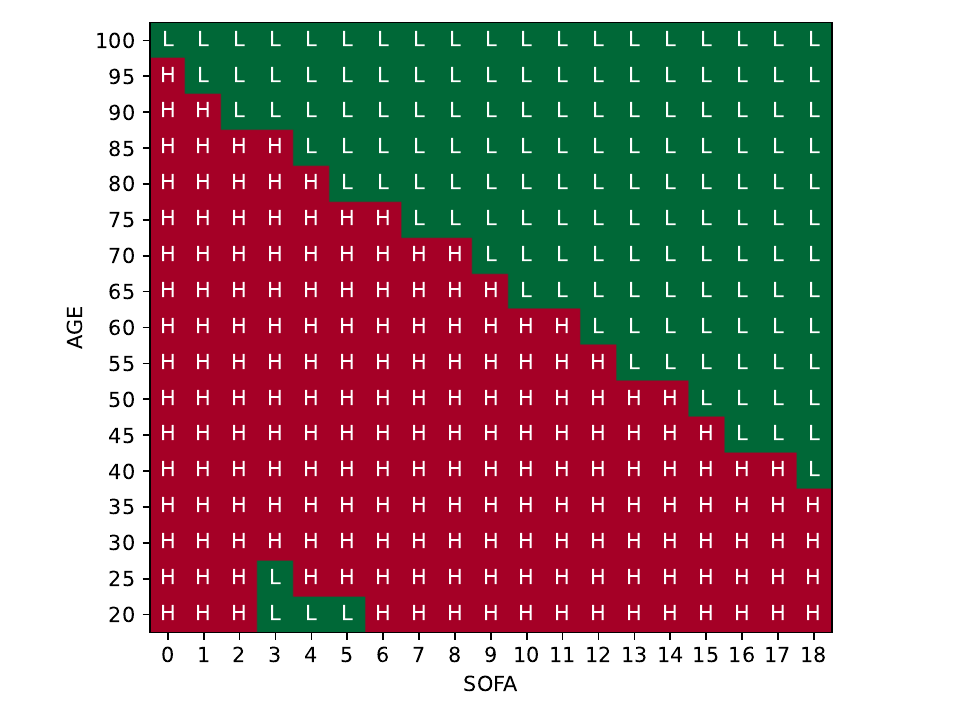}
  \caption{ISP-LU guidelines based on SOFA+AGE and $\tau = 0.02$. }
  \label{fig:ISP LU SOFA AGE heatmap}
\end{figure}

\paragraph{Comparisons with other guidelines from this paper.}
In Figure \ref{fig:comparisons best algorithms with isp-lu}, we compare the best setup for ISP-LU (based SOFA+AGE with $\tau=0.02$) with the best guidelines obtained in this paper, MDP SOFA+COV and the tree policies based on SOFA (Figure \ref{fig:tree-sofa}) returned by Algorithm 1 and Algorithm 2. We also show NYS guidelines and FCFS guidelines as benchmarks. We observe that ISP-LU guidelines perform on par with the best unconstrained policies (MDP SOFA+COV), and obtain comparable performance as the best interpretable policies returned by our algorithms. This emphasizes that tree-based policies are able to compete with other machine learning-based guidelines, the latter being more difficult to interpret (see for instance the representation in Figure \ref{fig:ISP LU SOFA AGE heatmap}).
\begin{figure}[htb]
\center
         \includegraphics[width=0.5\linewidth]{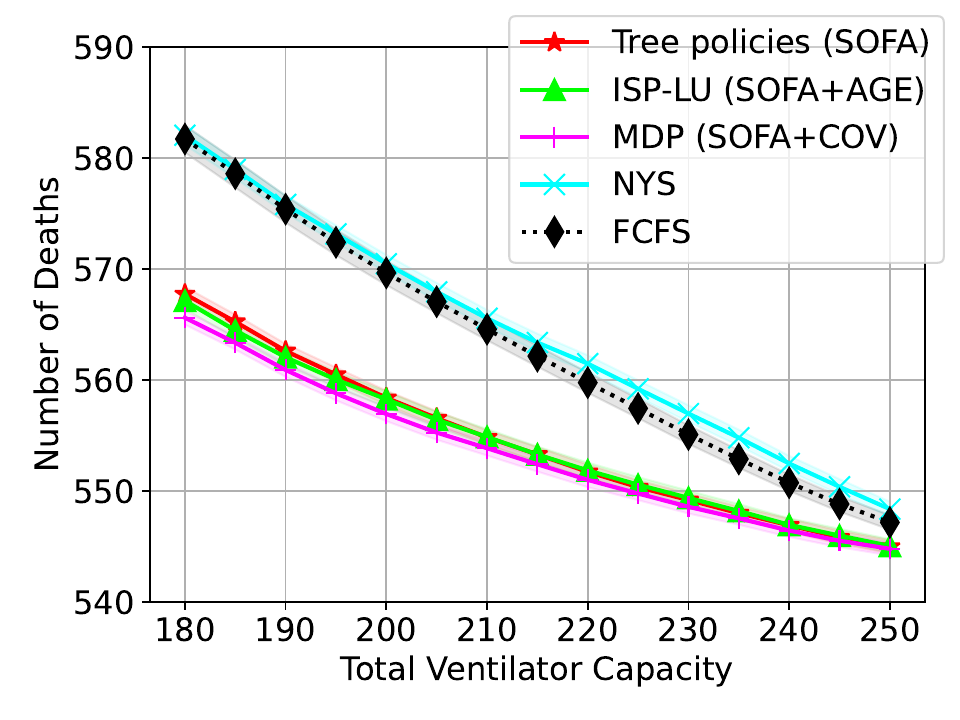}
         \caption{Comparison of the performance of ISP-LU with the other algorithms obtained in this paper.}
         \label{fig:comparisons best algorithms with isp-lu}
\end{figure}
}
\clearpage
\section{Details about the clinical data set}\label{app:details-data-set}
We present the summary statistics for our cohort of patients in Table \ref{tab:summary-statistics}, where we present the proportion of patients with some of the comorbidities present in the dataset, and compare them across the populations of patients that survived or not.
\begin{table}[htb]
\centering
 \begin{tabular}{||l |  c | c | c ||}
 \hline
Variables & All population & Survived & Deceased \\ [0.5ex]
 \hline\hline
 Number (n) & 807 & 264 & 543 \\
Age (year (std)) & 64.0 (13.5) & 60.0 (13.1) & 66.4 (12.9) \\
 Male gender (n (\%)) & 483 (59.9 \%) & 132 (50 \%) & 351 (64.6 \%) \\
  BMI (mean (std)) & 30.8 (7.5) & 30.7 (7.2) & 30.8 (7.7) \\
 \hline
  Diabetes (n (\%))& 319 (40.0 \%) & 78 (30.0 \%) & 241 (44.4 \%)  \\
 Charlson (mean (std)) & 2.9 (2.7) &  2.0 (2.6) & 3.3 (2.7) \\
 Malignancy (n (\%)) & 39 (4.5 \%) & 4 (1.5 \%) & 35 (6.5 \%) \\
  Renal disease (n (\%)) & 341 (42.2 \%) & 66 (25.0 \%) & 275 (50.7 \%) \\
  Dementia (n (\%)) & 92 (11.4 \%) & 23 (8.7 \%) & 69 (12.7 \%) \\
  Congestive Heart Failure (n )\%)) & 149 (18.5 \%) & 26 (9.9 \%) & 123 (22.7 \%) \\
  \hline
 Initial SOFA (mean (IQR)) & 2.0 (0.0-3.0) & 1.5 (0.0-2.0) & 2.2 (0.0-3.0)  \\
 Max SOFA  (mean (IQR)) & 9.7 (8.0-12.0) & 8.5 (7.0-10.0) & 10.2 (8.0-12.0) \\
 LOS (median (IQR)) & 16.8 (8.7-29.2) & 29.1 (18.9-46.5) & 12.5 (6.9-21.0) \\
Survival (n (\%)) & 264 (32.7 \%) & $\cdot $ & $\cdot$ \\
 [1ex]
 \hline
 \end{tabular}
 \caption{Summary statistics for the patients in our data set.}
 \label{tab:summary-statistics}
\end{table}

\clearpage



\end{APPENDICES}

\end{document}